\documentclass{article}
\usepackage{preprint}

\usepackage[utf8]{inputenc}
\usepackage[T1]{fontenc}
\usepackage{amsmath,amssymb,amsthm,mathtools}
\usepackage{booktabs}
\usepackage{graphicx}
\usepackage{xcolor}
\usepackage{microtype}
\usepackage[colorlinks=true,linkcolor=blue!60!black,citecolor=blue!60!black,urlcolor=blue!60!black]{hyperref}
\usepackage{enumitem}

\graphicspath{{figures/}}

\definecolor{tbdviolet}{RGB}{112,48,160}

\newcommand{\Sm}[1]{\mathcal{S}^{#1}}
\newcommand{\clo}{\mathcal{C}}
\DeclareMathOperator{\clr}{clr}
\DeclareMathOperator{\ilr}{ilr}
\DeclareMathOperator{\cen}{cen}
\newcommand{\dA}{d_{A}}
\newcommand{\dAc}{d_{A}^{\perp}}
\newcommand{\CAc}{C_{A}^{\perp}}
\newcommand{\rhoA}{\rho_{A}}
\newcommand{\JS}{\mathrm{JS}}
\newcommand{\KL}{\mathrm{KL}}
\newcommand{\bos}{\langle\mathrm{bos}\rangle}
\newcommand{\cmark}{\checkmark}
\newcommand{\xmark}{$\times$}

\newtheorem{theorem}{Theorem}
\newtheorem{proposition}{Proposition}
\newtheorem{lemma}{Lemma}
\newtheorem{corollary}{Corollary}
\theoremstyle{definition}
\newtheorem{definition}{Definition}
\theoremstyle{remark}
\newtheorem{remark}{Remark}

\title{Which Question Is Your Attention Metric Answering?\\ Attention Rows as Compositional Data}

\author{Marios Papamichalis\thanks{Human Nature Lab, Yale University, New Haven, CT 06511, \texttt{marios.papamichalis@yale.edu}}\hspace{.2cm}\\
    Human Nature Lab, Yale University\\
    and\\
    Regina Ruane\thanks{Department of Statistics and Data Science, The Wharton School, University of Pennsylvania, 3733 Spruce Street, Philadelphia, PA 19104-6340, \texttt{ruanej@wharton.upenn.edu}}\\
    Department of Statistics and Data Science, The Wharton School,\\ University of Pennsylvania}

\begin{document}
\maketitle

\begin{abstract}
Each row of a transformer's attention matrix is a probability distribution over tokens, and in trained models most of that probability lands on a single \emph{sink} token, usually the first. Standard tools for comparing attention rows (cosine similarity, Jensen--Shannon divergence, Shannon entropy) therefore hinge on a choice papers rarely report: keep the sink, or drop it and renormalize. This choice can reverse conclusions. On ten pretrained models from five families, 17--47\% of verdicts about which of two heads is more similar flip with the convention, and the most prominent structure in a standard BERT head-clustering pipeline is an artifact of it. The reason is that one-number summaries mix two questions: how much attention the sink takes, and how the rest is divided among the content tokens. Treating rows as compositional data separates them exactly: the Aitchison distance splits orthogonally into a sink term and a content term, entropy splits by an exact identity, and the content distance is characterized by invariances the transformer itself possesses. The separation matters in practice: most measured entropy collapse during training is the sink growing, not attention sharpening (30\% of the drop at 70M parameters, 95\% at 1B, 79\% at 1.4B), and pruning heads with the wrong channel can inflate perplexity more than a hundredfold. We map where each convention is safe, test a frozen out-of-sample predictor (one confirmation, one abstention, one failure), and release code regenerating every number.
\end{abstract}

\section{Introduction}

\begin{figure}[b]
\centering
\includegraphics[width=0.74\textwidth]{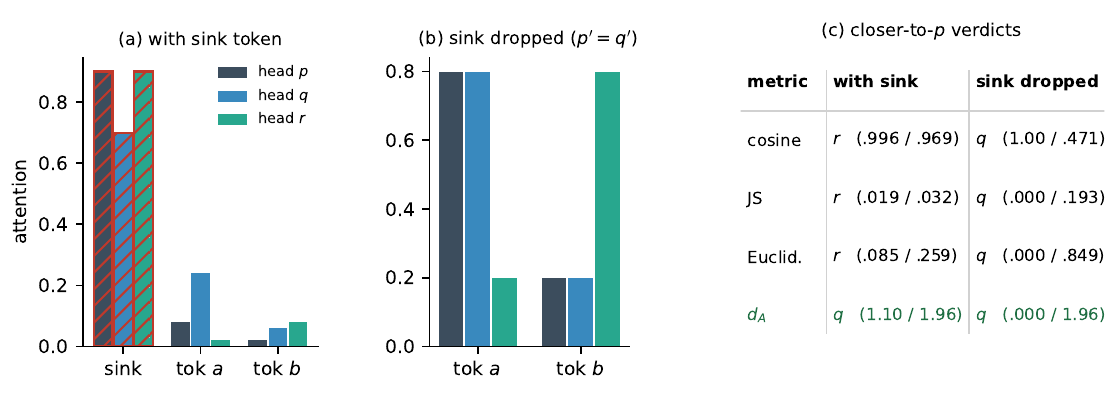}
\caption{\textbf{The reversal in one construction (Theorem~\ref{thm:reversal}).} Three attention rows over $\{\text{sink},a,b\}$: $p=(0.90,0.08,0.02)$, $q=(0.70,0.24,0.06)$, $r=(0.90,0.02,0.08)$. (a) With the sink column, $p$ and $r$ share the same sink mass while $q$ differs. (b) Dropping the sink and renormalizing, a routine, usually undocumented preprocessing choice, makes $p'$ and $q'$ \emph{identical} (both allocate content $4{:}1$ to $a{:}b$) while $r'$ mirrors them. (c) Cosine, JS, and Euclidean distance all say $r$ is the closer head with the sink and $q$ without it: the scientific conclusion is an artifact of the convention. The Aitchison distance returns the same verdict in both views, and its content part $\dAc$ is numerically identical.}
\label{fig:punchline}
\end{figure}

Softmax attention produces, for every query position, a probability distribution over keys, a point on the simplex. This is precisely a \emph{composition} in the sense of \citet{aitchison1982}: a vector of positive parts carrying only relative information. The interpretability and training-stability literatures routinely compare and summarize these rows. \citet{clark2019bert} cluster BERT heads by the JS divergence between attention rows; cosine similarity between attention maps is pervasive; \citet{zhai2023stabilizing} track the Shannon entropy of attention rows to diagnose \emph{entropy collapse}; head-role taxonomies and pruning scores \citep{voita2019analyzing,michel2019sixteen,olsson2022induction} summarize populations of attention rows with Euclidean averages. These statistics are valid for their intended targets, but full-row comparisons need not preserve conclusions after a coordinate is removed and the row is renormalized, and Shannon entropy combines marginal sink mass with conditional content entropy. Compositional data analysis (CoDA) identifies the properties at stake: perturbation invariance and subcompositional coherence \citep{aitchison1982,aitchison1992,egozcue2003ilr}.

CoDA critiques of Euclidean statistics date back four decades. What makes them consequential here is a second, recent empirical fact: attention mass concentrates overwhelmingly on a few \emph{sink} coordinates, chiefly the first or $\bos$ token \citep{xiao2024streaming,sun2024massive,gu2025sink}. In a typical Llama-3.1-405B prompt roughly $80\%$ of the attention mass sits on the $\bos$ token \citep{barbero2025first}, and the fraction of sink heads grows with model scale \citep{gu2025sink}. A sink is exactly the kind of dominant shared part whose inclusion or exclusion CoDA warns changes Euclidean and divergence-based comparisons. Whether an analysis keeps the sink column, or masks it and renormalizes, is today an undocumented analyst choice, and this choice can determine the answer: two heads that divide their non-sink attention identically can be scored as distant, and a head with the opposite content preference ranked their nearest neighbor, purely because all three share a large sink (Figure~\ref{fig:punchline}).

\textbf{Two estimands.} Full-row statistics answer questions about total allocation, sink included; sink-dropped statistics answer questions about allocation conditional on ignoring the sink. Both are legitimate; the failure mode is mixing them under an unstated convention, which is what one-number summaries do when a shared sink holds most of the mass, and these summaries feed head taxonomies, pruning decisions, and training-stability monitors. Our claim is not that the classical metrics are wrong but that they answer a convention-dependent question; we separate the channels exactly, characterize the content channel's canonical metric from the transformer's own symmetries, and measure what the separation changes. Concurrent 2026 work arrives at the same geometry for different questions (\S\ref{sec:related}); the measurement problem, the sink/content separation, and the convention audit are, to our knowledge, new.

\textbf{Contributions.}
\begin{itemize}[noitemsep,topsep=1pt,partopsep=0pt,parsep=0pt,leftmargin=*]
\item \textbf{Diagnosis.} We identify the keep-versus-drop sink convention as an unreported analyst choice and prove it reverses cosine, JS, and Euclidean closer-head verdicts on a positive-measure set (Theorem~\ref{thm:reversal}).
\item \textbf{Toolkit.} An exact orthogonal split of the Aitchison distance into sink and content terms, an exact entropy identity $H=H_b(s)+(1-s)H(\pi)$, invariance statements for the transformer's own nuisance transformations, and a five-axiom characterization with a quotient version for the content distance (Lemmas~\ref{lem:pythagoras},~\ref{lem:quotient}; Prop.~\ref{prop:invariances}; Thm.~\ref{thm:characterization}).
\item \textbf{Failure geometry.} In the sink-dominated regime all classical pairwise dissimilarities collapse into vanishing bands while the content diagnostics are unaffected (Theorem~\ref{thm:range}; Prop.~\ref{prop:taxonomy}).
\item \textbf{Measurement.} $17$--$47\%$ of closer-head verdicts flip on ten pretrained models, tracking curves computed before any model was run; prospectively specified re-analyses on taxonomies (including Clark et al.'s pipeline on its full support), scale, and training checkpoints are scored against their predictions (\S\ref{sec:realmodels}--\ref{sec:downstream}).
\item \textbf{Function.} As pruning criteria every metric fails somewhere; we map the regimes across ten models, show each criterion is stable along a different axis, and test a frozen regime predictor out of sample, reporting one confirmation, one abstention, and one failure (\S\ref{sec:realmodels}).
\end{itemize}

\section{Related work}
\label{sec:related}

\textbf{Attention-analysis metrics.} \citet{clark2019bert} compute head--head distances as summed JS divergences between attention rows and cluster the result; \citet{kovaleva2019dark} taxonomize attention patterns and study redundancy; cosine similarity between flattened attention maps is a default in many analyses. \textbf{Entropy collapse and stability.} \citet{zhai2023stabilizing} define attention entropy row-wise and tie pathologically low entropy to training instability; \citet{dong2021attention} study rank collapse of pure attention. \textbf{Attention sinks and registers.} StreamingLLM identifies the first-token sink and its role in windowed inference \citep{xiao2024streaming}; \citet{darcet2024registers} find high-norm register tokens in ViTs; \citet{sun2024massive} trace sinks to massive activations acting as implicit attention biases; \citet{gu2025sink} measure how the fraction of sink heads scales; \citet{barbero2025first} explain why models attend to the first token and quantify the mass involved. \textbf{Head roles and pruning.} \citet{voita2019analyzing} taxonomize heads and prune with $L_0$ gates; \citet{michel2019sixteen} prune most heads at test time; \citet{olsson2022induction} characterize induction heads. \textbf{Compositional data analysis.} The log-ratio approach originates with \citet{aitchison1982,aitchison1992}; ILR coordinates and balances come from \citet{egozcue2003ilr}; \citet{pawlowsky2015} is the standard treatment; zero handling follows \citet{martinfernandez2003zeros}; \citet{greenacre2011incoherence,greenacre2022reappraisal} quantify and reappraise subcompositional incoherence. \citet{quinn2020deepcoda} bring log-contrasts to deep models for compositional \emph{inputs}. \textbf{Concurrent compositional views of attention.} Three concurrent works arrive at this geometry from other directions. \citet{zhu2026aitchisonattention} introduce the Aitchison distance to quantify within-row token distinguishability in long contexts and derive a linear relation between temperature scaling and Aitchison distance, independently corroborating our powering axiom. \citet{lee2026invariants} identifies the row-centered attention logit with the CLR transform and studies spectral invariants of the resulting field. \citet{yamada2026polyilr} construct tree-aligned orthonormal bases of the Aitchison simplex and note the isomorphism between shift-equivalent logits and the CLR hyperplane. None studies the measurement problem addressed here: cross-head similarity, entropy diagnostics, taxonomies, and pruning under sink conventions (\citet{nakis2026aitchison} embed graph nodes as compositions for learned representations). Information geometry (Fisher--Rao) equips the simplex with a different metric with different invariances; it is not perturbation-invariant and does not decompose sink from content, which are the properties the sink problem demands (\S\ref{sec:discussion}).

To our knowledge, this is the first work to bring log-ratio geometry to the measurement methodology of attention analysis, and the first to separate the sink and content channels within it.

\section{Aitchison geometry of attention rows}
\label{sec:background}

\textbf{Compositions.} Let $\Sm{D}=\{p\in\mathbb{R}^D_{>0}:\sum_{i}p_i=1\}$ denote the (interior of the) simplex with $D$ parts, and $\clo(x)=x/\sum_i x_i$ the closure operation. A softmax attention row over $D$ keys is a point of $\Sm{D}$ (structural zeros from masking are handled below). The simplex is a $(D{-}1)$-dimensional real vector space under \emph{perturbation} $p\oplus q=\clo(p_1q_1,\dots,p_Dq_D)$ and \emph{powering} $\alpha\odot p=\clo(p_1^\alpha,\dots,p_D^\alpha)$, with identity the uniform composition $e$ and inverse $\ominus p=\clo(1/p_1,\dots,1/p_D)$ \citep{aitchison1982,pawlowsky2015}. Writing $g(p)$ for the geometric mean of the parts, the \emph{centered log-ratio} is $\clr(p)=\big(\log\tfrac{p_i}{g(p)}\big)_{i}$, mapping $\Sm{D}$ isomorphically onto the hyperplane $\mathcal{H}=\{x\in\mathbb{R}^D:\sum_i x_i=0\}$, and an \emph{isometric log-ratio} (ILR) map is $\ilr(p)=V^\top\clr(p)$ for any orthonormal contrast basis $V\in\mathbb{R}^{D\times(D-1)}$, $V^\top V=I$, $V^\top\mathbf{1}=0$ \citep{egozcue2003ilr}. The \emph{Aitchison inner product, norm, and distance} are the Euclidean ones pulled back through $\clr$ (equivalently any $\ilr$):
\begin{equation}
\dA(p,q)\;=\;\lVert\clr(p)-\clr(q)\rVert_2\;=\;\lVert\ilr(p)-\ilr(q)\rVert_2 .
\label{eq:da}
\end{equation}

\textbf{Subcompositions and coherence.} For $S\subseteq\{1,\dots,D\}$ the \emph{subcomposition} $p^{(S)}=\clo\big((p_i)_{i\in S}\big)$ discards the other parts and renormalizes, exactly what an analyst does when masking the sink column. \emph{Subcompositional coherence} demands that conclusions about the parts in $S$ not depend on whether the analysis was run on $p$ or on $p^{(S)}$; a dissimilarity is \emph{subcompositionally dominant} if $\delta\big(p^{(S)},q^{(S)}\big)\le\delta(p,q)$ \citep{aitchison1992,greenacre2011incoherence}. In ILR coordinates, taking a subcomposition is an orthogonal projection \citep[Lemma~3.1]{nakis2026aitchison}, so $\dA$ is dominant and log-ratios among the parts of $S$ are untouched by the operation. Cosine, JS, and Euclidean distance are neither subcompositionally coherent nor dominant, and none is stable under sink removal, as \S\ref{sec:incoherence} makes constructive.

\textbf{Sink/content notation.} We index the sink coordinate as $0$ and the remaining \emph{content} coordinates as $1,\dots,D{-}1$; any row factors uniquely as $p=(s,(1-s)\pi)$ with sink mass $s=p_0$ and content composition $\pi\in\Sm{D-1}$. We write $p'=p^{(\{1,\dots,D-1\})}=\pi$ for the sink-dropped, renormalized row. (Multiple sinks group identically; Appendix~\ref{app:experiments}.)

\textbf{Structural zeros and masking.} Causal masking creates structural zeros, which log-ratios forbid; the principled route is comparison on the \emph{common unmasked support}, a subcomposition, hence an orthogonal projection, and our protocol additionally fixes the key support across aggregated query positions (Appendix~\ref{app:experiments}). Residual numerical zeros are handled by multiplicative replacement with $\varepsilon=10^{-6}$ \citep{martinfernandez2003zeros,palarea2015zcomp}, and we report sensitivity to $\varepsilon\in\{10^{-5},10^{-6},10^{-7}\}$.

\section{Theory}
\label{sec:theory}

\subsection{The standard toolkit is subcompositionally incoherent}
\label{sec:incoherence}

Write $c(p,q)$ for cosine similarity, $\JS(p,q)$ for the Jensen--Shannon divergence (natural log), $\lVert p-q\rVert$ for Euclidean distance, and $H(p)=-\sum_i p_i\log p_i$ for Shannon entropy.

\begin{theorem}[Sink-driven ranking reversal]
\label{thm:reversal}
Fix $D\ge3$, index rows $x\in\Sm{D}$ by $0,\dots,D-1$ with coordinate $0$ the sink, and write $R_0(x):=(x_1,\dots,x_{D-1})/(1-x_0)\in\Sm{D-1}$. Fix $s'\in(0,1)$ and $\pi\in\Sm{D-1}$, for each permutation $\tau$ of the content coordinates with $\tau\pi\ne\pi$ and $s\in(s',1)$ set $p_s:=(s,(1-s)\pi)$, $q:=(s',(1-s')\pi)$, and $r_{s,\tau}:=(s,(1-s)\tau\pi)$; fix one such $\sigma$ and write $\widetilde\pi:=\sigma\pi$ and $r_s:=r_{s,\sigma}$. Then:
\emph{(i)} $R_0(p_s)=R_0(q)=\pi$ and $R_0(r_s)=\widetilde\pi$, so after sink removal $q$ is strictly closer to $p_s$ than $r_s$ under cosine, $\JS$, Euclidean distance, and \emph{every} metric on $\Sm{D-1}$.
\emph{(ii)} There is a threshold $s_\ast=s_\ast(\pi,s')\in(s',1)$, uniform over all permutations $\tau$ with $\tau\pi\ne\pi$, such that for every $s\in(s_\ast,1)$ and every such $\tau$ the with-sink verdict is simultaneously reversed for all three classical measures: $c(p_s,r_{s,\tau})>c(p_s,q)$, $\JS(p_s,r_{s,\tau})<\JS(p_s,q)$, and $\lVert p_s-r_{s,\tau}\rVert<\lVert p_s-q\rVert$.
\emph{(iii)} The set of triples in $(\Sm{D})^3$ on which all three classical verdicts flip between the two conventions is open and has positive $3(D-1)$-dimensional Lebesgue measure in the affine hull.
\emph{(iv)} $\dAc(x,y):=\dA(R_0(x),R_0(y))$ is a pseudometric on $\Sm{D}$ (a metric on $\Sm{D}/{\sim_0}$, $x\sim_0 y$ iff $R_0(x)=R_0(y)$); on the family, $\dAc(p_s,q)=0$ and $\dAc(p_s,r_s)=\dA(\pi,\widetilde\pi)=:B>0$ under \emph{both} conventions. Full $\dA$ satisfies $\dA(p_s,q)=\alpha_D\log\tfrac{s(1-s')}{s'(1-s)}$ ($\alpha_D=\sqrt{(D-1)/D}$) and $\dA(p_s,r_s)=B$, so it ranks $q$ closer exactly when $s<s_{\mathrm A}$ for an explicit cutoff $s_{\mathrm A}\in(s',1)$; the Table~\ref{tab:witness} witness satisfies $s<s_{\mathrm A}$, which is why full $\dA$ agrees across conventions there. (Proof: Appendix~\ref{app:incoherence}.)
\end{theorem}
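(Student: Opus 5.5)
Part (i) is immediate from the factorization $p=(s,(1-s)\pi)$ in \S\ref{sec:background}: dividing the content block by $1-s$ gives $R_0(p_s)=R_0(q)=\pi$ and $R_0(r_s)=\widetilde\pi\ne\pi$. Any metric $\delta$ on $\Sm{D-1}$ then gives $\delta(R_0(p_s),R_0(q))=0<\delta(\pi,\widetilde\pi)$. For cosine, I will note that two distinct probability vectors cannot be positively proportional, so $c(\pi,\widetilde\pi)<1=c(\pi,\pi)$.

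For (ii) I will compare the two with-sink gaps as $s\to1$. The comparison $p_s$ versus $r_{s,\tau}$ shares the sink mass, so the difference lives entirely in the content block. This gives $\lVert p_s-r_{s,\tau}\rVert=(1-s)\lVert\pi-\tau\pi\rVert$, and $\JS(p_s,r_{s,\tau})=(1-s)\JS(\pi,\tau\pi)$, because JS is jointly homogeneous on the shared-support block and the sink terms cancel. Cosine likewise gives $1-c=O((1-s)^2)$. On the other side, $p_s$ and $q$ differ in sink mass by $s-s'$, which stays bounded away from $0$ as $s\to 1$. Hence $\lVert p_s-q\rVert\ge s-s'$, $\JS(p_s,q)\to\JS((1,0),(s',1-s'))>0$ (computed on the binary sink/content split, by the chain rule for JS with equal content conditionals), and $c(p_s,q)\to s'/\lVert q\rVert<1$. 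Each gap for $r_{s,\tau}$ vanishes, while each gap for $q$ tends to a positive limit, so each inequality holds for $s$ near 1. Uniformity over $\tau$ holds because there are finitely many permutations, so I take $s_\ast$ as the maximum of finitely many thresholds. To make the thresholds explicit, I would bound $\lVert\pi-\tau\pi\rVert\le\sqrt2$ and $\JS\le\log2$.

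For (iii), the three strict inequalities in (ii) and the three strict inequalities in (i) are all continuous in $(p,q,r)\in(\Sm{D})^3$, since $R_0$, cosine, JS, and the norm are continuous on the interior. The flip set is therefore open. By (ii) it is nonempty, because it contains $(p_s,q,r_s)$ for $s\in(s_\ast,1)$. A nonempty open subset of the $3(D-1)$-dimensional affine hull has positive Lebesgue measure.

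For (iv), $\dAc$ is the pullback of the metric $\dA$ under $R_0$. It is therefore a pseudometric that vanishes exactly on $\sim_0$-classes, and it descends to a metric on the quotient. Since $R_0$ is idempotent on sink-dropped rows, both conventions give the same values $0$ and $B$. For full $\dA$, I compute clr coordinates. With $\ell=\log\frac{s}{1-s}-\log\frac{s'}{1-s'}$, the clr difference between $p_s$ and $q$ is $\ell(1-\tfrac1D,-\tfrac1D,\dots,-\tfrac1D)$, whose norm is $\alpha_D\ell$. For $p_s$ and $r_s$ the sink parts agree, so the difference lives on content parts, and there the clr restricted to the content block, recentred, equals $\clr(\pi)$ padded. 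This gives $\dA(p_s,r_s)=\dA(\pi,\widetilde\pi)=B$. Since $\ell$ increases strictly from $0$ to $\infty$ in $s$, the cutoff is $s_{\mathrm A}$ solving $\alpha_D\ell=B$. The statement about the witness is then a numerical check.

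The main obstacle is part (ii) for cosine. There the $q$-side gap must be shown to have a strictly positive limit, and the uniform threshold needs the non-asymptotic bounds written carefully. I would handle this by expanding $c$ to first order in $1-s$.
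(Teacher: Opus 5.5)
Your proposal is correct and follows the paper's proof essentially step for step. In (ii) you use the same closed forms: the $r$-side gaps are $(1-s)\lVert\pi-\tau\pi\rVert$, $(1-s)\JS(\pi,\tau\pi)$ and $O((1-s)^2)$, while the $q$-side gaps tend to positive limits, and your $s'/\lVert q\rVert_2$ is exactly the paper's $\gamma$. Parts (iii) and (iv) are the same continuity and CLR computations.

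The only difference is that the paper makes (ii) quantitative (Pinsker, $\JS\le\log 2$, and a reverse-triangle bound on the cosine). This yields an explicit $s_\ast$ that depends on $\pi$ only through $\lVert\pi\rVert_2^2$. The first-order cosine expansion you flag as the main obstacle is therefore unnecessary: your limit argument plus a finite maximum over permutations already gives existence.

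One justification in (iv) needs tightening. Equal sink masses alone do not put the CLR difference of $p_s$ and $r_s$ on the content block. You also need $\sum_i\log(\pi_i/\widetilde\pi_i)=0$, i.e.\ $g(\pi)=g(\widetilde\pi)$ (equivalently $b(p_s)=b(r_s)$). This holds because $\widetilde\pi$ is a permutation of $\pi$.
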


\begin{table}[t]
\centering
\small
\caption{\textbf{Exact witness for Theorem~\ref{thm:reversal}} with $p=(0.90,0.08,0.02)$, $q=(0.70,0.24,0.06)$, $r=(0.90,0.02,0.08)$: $q$ shares $p$'s content ratio ($4{:}1$) and $r$ shares $p$'s sink mass with mirrored content. ``Closer'' means larger cosine similarity or smaller dissimilarity. All three classical verdicts reverse; the JS and Euclidean values also violate subcompositional dominance ($\JS(p',r')>\JS(p,r)$, $\lVert p'-r'\rVert>\lVert p-r\rVert$).}
\label{tab:witness}
\begin{tabular}{lcccccccc}
\toprule
 & \multicolumn{3}{c}{with sink} & & \multicolumn{3}{c}{sink dropped} & \\
\cmidrule{2-4}\cmidrule{6-8}
metric & $d(p,q)$ & $d(p,r)$ & closer & & $d(p',q')$ & $d(p',r')$ & closer & \\
\midrule
cosine & $0.9693$ & $0.9956$ & $r$ & & $1.0000$ & $0.4706$ & $q$ & reversal \\
JS & $0.0324$ & $0.0193$ & $r$ & & $0.0000$ & $0.1927$ & $q$ & reversal \\
Euclidean & $0.2592$ & $0.0849$ & $r$ & & $0.0000$ & $0.8485$ & $q$ & reversal \\
$\dA$ & $1.1022$ & $1.9605$ & $q$ & & $0.0000$ & $1.9605$ & $q$ & stable \\
\bottomrule
\end{tabular}
\end{table}

The witness entries are exact rationals; every displayed quantity is evaluated in closed form in Appendices~\ref{app:incoherence} and~\ref{app:characterization}.

\begin{proposition}[Entropy inversion under sink removal]
\label{prop:entropy}
For every $D\ge3$ there exist $p,q\in\Sm{D}$ with $H(p)<H(q)$ but $H(p')>H(q')$: entropy-collapse comparisons, across heads, layers, or training steps, can invert with the analyst's convention. A witness at $D=3$ is $P=\big(\tfrac{19}{20},\tfrac1{40},\tfrac1{40}\big)$, $Q=\big(\tfrac12,\tfrac{999}{2000},\tfrac1{2000}\big)$: $H(P)\approx0.233<0.697\approx H(Q)$, yet $H(P')=\log2>H(Q')\approx0.0079$. The convention-invariant content statistics agree with the conditional view: $H(P')=\log2$ is maximal and $\CAc(P)=0$ (content exactly uniform), versus $H(Q')\approx0.008$ and $\CAc(Q)\approx4.88$ (content nearly degenerate). (Proof and sharpness: Appendix~\ref{app:incoherence}.)
\end{proposition}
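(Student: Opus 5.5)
The argument rests on the chain rule for entropy under the sink/content factorization, plus an explicit witness family. Write any row as $p=(s,(1-s)\pi)$ as in \S\ref{sec:background}. Direct expansion of $-\sum_i p_i\log p_i$ gives the identity $H(p)=H_b(s)+(1-s)H(\pi)$, where $H_b$ is the binary entropy. Since $p'=\pi$, the with-sink entropy and the sink-dropped entropy differ by the sink term $H_b(s)$ and the scaling factor $(1-s)$. The plan is to choose $p$ with a large sink ($s$ near $1$) and maximally spread content, and $q$ with a moderate sink ($s=1/2$) and nearly degenerate content. The identity then shows that $H(p)$ is small while $H(q)\approx\log 2$, whereas the conditional entropies are ordered the other way.

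For general $D\ge3$, take $P=(s_1,(1-s_1)u)$, where $u$ is uniform on $\Sm{D-1}$, and take $Q=(\tfrac12,\tfrac12\rho_\eta)$, where $\rho_\eta=(1-\eta,\eta/(D-2),\dots,\eta/(D-2))$. This gives $H(P')=\log(D-1)$ and $H(Q')=h(\eta)\to0$ as $\eta\to0$, where $h(\eta)=H_b(\eta)+\eta\log(D-2)$. Hence $H(P')>H(Q')$ for all small $\eta$. On the with-sink side, $H(Q)\ge H_b(1/2)=\log2$, while $H(P)=H_b(s_1)+(1-s_1)\log(D-1)\to0$ as $s_1\to1$. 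So we fix $s_1$ close enough to $1$ that $H(P)<\log2\le H(Q)$, and both inequalities hold. Strict positivity of all parts is automatic for $\eta\in(0,1)$ and $s_1<1$.

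For the stated $D=3$ witness, I will check the numbers directly:
\[
H(P)=H_b(19/20)+\tfrac1{20}\log2\approx0.1985+0.0347\approx0.233,
\]
\[
H(Q)=\log2+\tfrac12H_b(1/1000)\approx0.6931+0.0040\approx0.697.
\]
We also have $P'=(\tfrac12,\tfrac12)$, so $H(P')=\log2$, and $Q'=(0.999,0.001)$, so $H(Q')=H_b(0.001)\approx0.0079$.

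For the $\CAc$ values I will assume, consistent with the stated numbers, that $\CAc$ is the Aitchison norm of the content composition, $\lVert\clr(\pi)\rVert_2$. This quantity depends only on $p'$, so it is convention-invariant. For $\pi=(\tfrac12,\tfrac12)$ it is $0$. For $\pi=(0.999,0.001)$ it equals $\tfrac1{\sqrt2}\log 999\approx 4.88$, which matches the stated value.

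The step most likely to need care is the ``sharpness'' part. I read it as showing that the inversion needs the sink term: if $s_P=s_Q$, then the identity gives $H(p)-H(q)=(1-s)(H(p')-H(q'))$, so no inversion can occur. More generally, an inversion requires $|H_b(s_P)-H_b(s_Q)|$ together with the $(1-s)$ weights to outweigh the content difference. I would state this as a necessary condition derived from the identity.
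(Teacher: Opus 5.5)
Your proposal is correct and follows essentially the same route as the paper. Both use the chain rule $H=H_b(s)+(1-s)H(\pi)$, pit a near-unit sink with uniform content against a sink of $\tfrac12$ with nearly degenerate content, and observe for sharpness that equal sinks give $H(p)-H(q)=(1-s)\bigl(H(p')-H(q')\bigr)$. The differences are minor: the paper certifies the $D=3$ witness with exact inequalities (e.g.\ $40(20/19)^{19}<120<2^{20}$, which gives $H(P)<\log2$) rather than decimals. Its sharpness remark also notes that $D\ge3$ is necessary, since for $D=2$ every sink-dropped row is the one-part composition with $H(p')=0$.
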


The mechanism is the chain rule $H(p)=H_b(s)+(1-s)H(\pi)$ (Appendix~\ref{app:incoherence}): a growing sink drives $H$ down regardless of what the content distribution does, so $H$ conflates two unrelated quantities, sink strength and content spread, that the Aitchison geometry separates exactly (Lemma~\ref{lem:pythagoras}). The constructive remedy is to report the pair $(s,H(\pi))$ in place of $H(p)$; $\CAc$ is a complementary log-ratio dispersion of the same content channel, not an entropy substitute (Remark~\ref{rem:schur}).

\subsection{A coherent toolkit, and why it is essentially unique}
\label{sec:toolkit}

\begin{definition}[Coherent attention diagnostics]
\label{def:toolkit}
For rows $p,q\in\Sm{D}$ with sink part $0$:
\emph{(a)} the \emph{head signature} of a set of rows $\{p_t\}$ is their Aitchison mean $\cen(\{p_t\})=\clo\big(\exp\big(\tfrac1T\sum_t\log p_t\big)\big)$, the Fr\'echet mean under $\dA$;
\emph{(b)} head dissimilarity is $\dA$ of \eqref{eq:da}, and the direction similarity is the ILR-cosine $\rhoA(p,q)=\tfrac{\langle\clr p,\clr q\rangle}{\lVert\clr p\rVert\lVert\clr q\rVert}$;
\emph{(c)} the \emph{sink balance} is $b(p)=\sqrt{\tfrac{D-1}{D}}\,\log\tfrac{p_0}{g(p_1,\dots,p_{D-1})}$;
\emph{(d)} the \emph{content distance} is $\dAc(p,q)=\dA(p',q')$ and the \emph{content log-ratio dispersion} is $\CAc(p)=\lVert\clr(p')\rVert_2=\dA(p',e)$, the Aitchison distance of the content to uniform (it orders by log-ratio dispersion, not by majorization; Remark~\ref{rem:schur});
\emph{(e)} dispersion of a set of rows is the total Aitchison variance $\tfrac1n\sum_i\dA\big(p_i,\cen\big)^2$.
\end{definition}

\begin{lemma}[Pythagorean sink decomposition]
\label{lem:pythagoras}
Let $D\ge2$, $p,q\in\Sm{D}$ with designated sink $0$, and $b(\cdot)$ the sink balance of Definition~\ref{def:toolkit}(c). In CLR space the exact orthogonal identity $\clr(p)=b(p)\,u_b+J\clr(p')$ holds, where $u_b$ is the unit balance direction and $J$ is the isometric zero-extension of the content CLR space; hence
\[
\dA(p,q)^2=\dAc(p,q)^2+\big(b(p)-b(q)\big)^2 .
\]
There is a sink-adapted ILR basis whose first coordinate is $b(\cdot)$ and whose remaining $D-2$ coordinates are ILR coordinates of the sink-dropped composition. Consequently $\dAc$ and $\CAc$ depend only on the content composition: varying the sink mass with the content held fixed changes neither, and both take identical values whether computed in the balance-orthogonal coordinates of the full rows or after dropping the sink and re-closing. (Proof: Appendix~\ref{app:characterization}.)
\end{lemma}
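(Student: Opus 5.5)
The plan is to compute $\clr(p)$ directly in the sink/content factorization $p=(s,(1-s)\pi)$ and read off the two orthogonal pieces. Write $L_i=\log p_i$ and let $\bar L_c=\tfrac{1}{D-1}\sum_{i\ge1}L_i=\log g(p_1,\dots,p_{D-1})$ be the mean content log. Then $\log g(p)=\tfrac1D L_0+\tfrac{D-1}{D}\bar L_c$. This gives
\[
\clr(p)_0=\tfrac{D-1}{D}\,(L_0-\bar L_c),\qquad
\clr(p)_i=(L_i-\bar L_c)-\tfrac1D\,(L_0-\bar L_c)\quad(i\ge1).
\]
The content part $L_i-\bar L_c$ is exactly $\clr(p')_i$, because closure rescales every content part by $1/(1-s)$ and log-ratios among content parts are unchanged. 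Define $J:\mathbb{R}^{D-1}\supset\mathcal{H}_{D-1}\to\mathcal{H}_D$ by $Jy=(0,y)$. It is an isometry, and it lands in $\mathcal{H}_D$ because $\sum_i y_i=0$. Define also
\[
u_b=\tfrac{1}{\sqrt{D(D-1)}}\,\bigl(D-1,-1,\dots,-1\bigr).
\]
This vector sums to zero and has unit norm. The leftover vector is $(L_0-\bar L_c)\bigl(\tfrac{D-1}{D},-\tfrac1D,\dots,-\tfrac1D\bigr)$, which equals $\sqrt{\tfrac{D-1}{D}}(L_0-\bar L_c)\,u_b=b(p)\,u_b$. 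This proves $\clr(p)=b(p)u_b+J\clr(p')$.

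Next I check orthogonality. We have $\langle u_b,Jy\rangle=-\tfrac{1}{\sqrt{D(D-1)}}\sum_{i\ge1}y_i=0$ since $y\in\mathcal{H}_{D-1}$. Subtracting the decompositions of $p$ and $q$ gives $\clr(p)-\clr(q)=(b(p)-b(q))u_b+J(\clr(p')-\clr(q'))$. Pythagoras together with the isometry of $J$ then yields
\[
\dA(p,q)^2=(b(p)-b(q))^2+\dA(p',q')^2,
\]
and $\dA(p',q')=\dAc(p,q)$ by Definition~\ref{def:toolkit}(d). When $D=2$, the content space is $\{0\}$, so $\dAc\equiv0$ and the identity reduces to $|b(p)-b(q)|=\dA(p,q)$, which is consistent.

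For the sink-adapted ILR basis, I take any orthonormal contrast basis $W\in\mathbb{R}^{(D-1)\times(D-2)}$ of $\mathcal{H}_{D-1}$ and set $V=[\,u_b\;\;JW\,]$, where $J$ acts columnwise. The columns are orthonormal: $J$ is an isometry, $u_b\perp\operatorname{im}J$, and $W$ is orthonormal. They are contrasts since all lie in $\mathcal{H}_D$. There are $D-1$ of them, so $V$ is an ILR basis. Its coordinates are
\[
V^\top\clr(p)=\bigl(b(p),\,W^\top\clr(p')\bigr)=\bigl(b(p),\,\ilr_W(p')\bigr),
\]
using the decomposition and $\langle JWe_k,u_b\rangle=0$.

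The consequences follow directly. $\dAc$ and $\CAc$ are functions of $p'=\pi$ alone, so changing $s$ with $\pi$ fixed leaves them unchanged. Computing $\dAc$ as the norm of the last $D-2$ sink-adapted coordinates of the full rows, which equals $\lVert\ilr_W(p')-\ilr_W(q')\rVert$, agrees with dropping the sink and re-closing. Likewise $\CAc(p)=\lVert\ilr_W(p')\rVert=\lVert\clr(p')\rVert$.

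The only real work is the bookkeeping in the first step: correctly splitting $\log g(p)$ and recognizing the residual vector as exactly $b(p)u_b$ with the normalization $\sqrt{(D-1)/D}$. Everything else is linear algebra once that identity is in place.
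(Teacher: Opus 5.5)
Your proof is correct and follows essentially the same route as the paper. You use the same unit balance vector $u_b$ (your normalization $\tfrac{1}{\sqrt{D(D-1)}}(D-1,-1,\dots,-1)$ coincides with the paper's), the same zero-extension $J$, Pythagoras, and the same sink-adapted basis $[\,u_b\;JW\,]$. The only cosmetic difference is the order of steps. You split $\clr(p)$ coordinatewise via $\log g(p)=\tfrac1D L_0+\tfrac{D-1}{D}\bar L_c$, whereas the paper first establishes $\mathcal{H}_D=\operatorname{span}\{u_b\}\oplus W$ and then extracts the components as $\langle\clr(p),u_b\rangle=b(p)$ and $P_W\clr(p)=J\clr(p')$.
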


\begin{proposition}[Invariances and equivariances]
\label{prop:invariances}
With softmax rows $p=\clo(e^{\ell})$, $q=\clo(e^{\ell'})$:
\emph{(i)} a shared finite logit bias $u$ acts as perturbation by $c_u=\clo(e^u)$, and $\dA$ and $\dAc$ are invariant under it; $\CAc$ (for $D\ge3$), Euclidean distance, $\JS$, cosine, and $\rhoA$ are not.
\emph{(ii)} a temperature $\tau$ acts as powering by $1/\tau$: $\dA$, $\dAc$, and $\CAc$ scale by $1/\tau$ (so one common temperature preserves all rankings and ties), total Aitchison variance scales by $1/\tau^2$, and $\rhoA$ is invariant to \emph{separate} positive per-head temperatures applied uniformly within each head (row-dependent temperatures can change it).
\emph{(iii)} sink coherence: $\dAc(p,q)$ and $\CAc(p)$ are independent of the sink masses; full $\dA$ is not sink-invariant but decomposes exactly (Lemma~\ref{lem:pythagoras}).
\emph{(iv)} a subcomposition on $S$ ($|S|\ge2$) is an orthogonal projection in CLR space, so $\dA(p^{(S)},q^{(S)})\le\dA(p,q)$; hard masks require re-closure on a common positive support; and \emph{(v)} for $D\ge3$, Euclidean distance, $\JS$, and $1-\cos$ are \emph{not} subcompositionally dominant.
(Full statements with all qualifications, proofs, and scope: Appendix~\ref{app:characterization}.)
\end{proposition}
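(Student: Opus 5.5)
The plan is to work entirely in CLR coordinates and reduce every item to one structural fact: softmax turns logit vectors into compositions, and the perturbation and powering operations correspond to addition and scalar multiplication of logits. Concretely, $\clr(\clo(e^{\ell}))=\ell-\bar\ell\mathbf 1$, the centering of $\ell$. Items (i)--(iv) then become statements about linear maps on the hyperplane $\mathcal H$. Item (v) is settled separately by explicit counterexamples.

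\textbf{Items (i) and (ii).} For (i), $\clr(p\oplus c_u)=\clr(p)+\clr(c_u)$. Differences of CLR vectors are therefore unchanged, so $\dA$ is invariant. For $\dAc$, note that the subcomposition commutes with perturbation: $(p\oplus c_u)'=p'\oplus c_u'$. Hence $\dAc$ is invariant by the same argument applied on $\Sm{D-1}$. To show that $\CAc$ fails to be invariant when $D\ge3$, I would take $p=e$ and $u$ with non-constant content part. Then $\CAc$ moves from $0$ to $\lVert\clr(c_u')\rVert>0$. (When $D=2$ the content is a single part and this cannot happen.) Cosine, $\JS$, Euclidean distance and $\rhoA$ each fail on a small numerical instance at $D=3$, for example with $p=e$ and $q$ a fixed non-uniform row. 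For $\rhoA$, a bias that shifts the clr vectors changes the angle between them.

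For (ii), $\clr(\alpha\odot p)=\alpha\clr(p)$ with $\alpha=1/\tau$. Powering also commutes with subcomposition, so $\dA$, $\dAc$ and $\CAc$ scale by $1/\tau$. The Aitchison variance scales by $1/\tau^2$, since $\cen$ is equivariant under powering. The ILR-cosine is invariant under separate positive scalings $\alpha\clr p$ and $\beta\clr q$, because cosine is scale-invariant in each argument. I would add a two-row example per head to show that row-dependent temperatures break this.

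\textbf{Items (iii) and (iv).} Item (iii) is immediate from Lemma~\ref{lem:pythagoras}. For (iv), I would show that $\clr(p^{(S)})$, extended by zero, is the orthogonal projection of $\clr(p)$ onto the subspace $\{x:\operatorname{supp}x\subseteq S,\ \sum x=0\}$. On $S$ this projection subtracts the mean over $S$, and the map is linear. Since projections are $1$-Lipschitz, $\dA(p^{(S)},q^{(S)})\le\dA(p,q)$ follows. The remark about hard masks only needs $\log$ to be defined, which requires positivity on the common support.

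\textbf{Item (v), the main obstacle.} The difficulty is producing one clean witness that works for every $D\ge3$ and for all three measures. I would first try the Table~\ref{tab:witness} rows: there $\JS$ and Euclidean distance already violate dominance, since $\JS(p',r')>\JS(p,r)$ and $\lVert p'-r'\rVert>\lVert p-r\rVert$. I would then check $1-\cos$ on the same pair, where $1-0.4706>1-0.9956$, so it violates dominance too.

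To get every $D\ge3$, I would embed the witness by adding parts of tiny equal mass $\eta$, renormalizing, and taking $S$ to be the two original content parts. By continuity in $\eta$, the strict inequalities survive for small $\eta$. If continuity is too loose for $\JS$, I would instead verify the three inequalities directly for the explicit $\eta$ family.
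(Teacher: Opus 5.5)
Your plan is correct and follows the paper's proof essentially step for step. It uses CLR linearity of $\oplus$ and $\odot$ and their commutation with drop-and-reclose for (i)--(ii), and projection onto $\{x:\operatorname{supp}x\subseteq S,\ \sum_i x_i=0\}$ for (iv). For (v) it takes the Table~\ref{tab:witness} pair $(p,r)$ and pads it with equal small parts. Your continuity argument there is sound, because all three measures are continuous on the closed simplex and the $S$-subcompositions do not depend on $\eta$; the paper instead computes exactly that Euclidean distance and $\JS$ scale by $1-k\varepsilon$.

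Two small repairs are needed in (i). First, show the classical non-invariances for every $D$, not just $D=3$: the paper perturbs any distinct $p,q$ by $c_t=\clo(e^{t\mathbf e_j})$, driving both rows to a vertex. Second, $p=e$ cannot serve for $\rhoA$, which is undefined there since $\clr(e)=0$. The paper instead uses rows with CLR vectors $v,-v$ perturbed by $\clo(e^{2v})$, which sends $\rhoA$ from $-1$ to $1$.
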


\textbf{The axioms are the transformer's own knobs.} Given the operational meaning the transformer itself assigns to $\oplus$ (shared logit biases) and $\odot$ (temperature), four interpretable axioms force the Aitchison distance, and the invariance axioms among them are transformations the architecture performs. A quotient version characterizes the content distance itself (Lemma~\ref{lem:quotient}).

\begin{theorem}[Characterization of the Aitchison distance]
\label{thm:characterization}
Let $\delta:\Sm{D}\times\Sm{D}\to\mathbb{R}_{\ge0}$ satisfy:
\emph{(A1)} $\delta$ is a metric;
\emph{(A2)} perturbation invariance: $\delta(c\oplus p,\,c\oplus q)=\delta(p,q)$ for all $c$;
\emph{(A3)} powering homogeneity: $\delta(\alpha\odot p,\,\alpha\odot q)=|\alpha|\,\delta(p,q)$ for all $\alpha\in\mathbb{R}$;
\emph{(A4)} the parallelogram law: writing $N(v)=\delta(v,e)$, $\;N(v\oplus w)^2+N(v\ominus w)^2=2N(v)^2+2N(w)^2$;
\emph{(A5)} permutation invariance: $\delta(\sigma p,\sigma q)=\delta(p,q)$ for all $\sigma\in S_D$.
Then $\delta=c\,\dA$ for some constant $c>0$.
\end{theorem}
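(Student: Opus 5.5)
Transport the problem through $\clr$ (equivalently any $\ilr$) to the Euclidean hyperplane $\mathcal H$. Since $\clr$ is a linear isomorphism carrying $\oplus$ to addition, $\odot$ to scalar multiplication, and $e$ to $0$, define $\widetilde\delta(x,y):=\delta(\clr^{-1}x,\clr^{-1}y)$ on $\mathcal H$. Axiom (A2) makes $\widetilde\delta$ translation invariant, so $\widetilde\delta(x,y)=\nu(x-y)$ with $\nu(v):=N(\clr^{-1}v)$. Axiom (A3) gives $\nu(\alpha v)=|\alpha|\nu(v)$ for all real $\alpha$. Combined with the triangle inequality and definiteness from (A1), $\nu$ is a norm on $\mathcal H$.

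Next, (A4) reads $\nu(v+w)^2+\nu(v-w)^2=2\nu(v)^2+2\nu(w)^2$. By the Jordan--von Neumann theorem, $\nu$ is induced by an inner product $\langle v,w\rangle_\nu:=\tfrac14\big(\nu(v+w)^2-\nu(v-w)^2\big)$. Verifying bilinearity is the classical part: additivity comes from the parallelogram law, and homogeneity over $\mathbb{R}$ comes from (A3) directly. So no continuity argument beyond rational scalars is needed. Hence $\nu(v)^2=v^\top M v$ for some symmetric positive definite $M$ on $\mathcal H$.

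Finally, use (A5). A coordinate permutation $\sigma$ commutes with $\clr$, since $\clr(\sigma p)=\sigma\clr(p)$, and it preserves $\mathcal H$. So $M$ is invariant under the action of $S_D$ on $\mathcal H$, which is the standard representation, of dimension $D-1$. I expect this to be the main step, and I would prove it by hand rather than cite Schur's lemma over $\mathbb R$. The standard representation is absolutely irreducible for $D\ge2$, so the commutant is $\mathbb R\cdot I$.

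The elementary argument runs as follows. The invariant symmetric bilinear forms on $\mathcal H$ are restrictions of forms $x^\top(aI+b\mathbf 1\mathbf 1^\top)x$, because every $S_D$-invariant symmetric matrix on $\mathbb R^D$ has constant diagonal and constant off-diagonal entries. To extend an invariant form from $\mathcal H$ to $\mathbb R^D$, average over the group and use the orthogonal complement spanned by $\mathbf 1$. On $\mathcal H$ the term $\mathbf 1\mathbf 1^\top$ vanishes. Hence $M=c^2 I$, so $\nu(v)=c\lVert v\rVert_2$ and $\delta(p,q)=c\lVert\clr p-\clr q\rVert_2=c\,\dA(p,q)$ by \eqref{eq:da}, with $c>0$ by definiteness.

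I would check that the edge case $D=2$ is consistent: $\mathcal H$ is one-dimensional there, so (A5) is vacuous yet harmless. I would also note in a remark which axioms are used where. (A1) is needed only for definiteness and the norm structure. (A4) supplies the inner product. (A5) removes the $\binom{D}{2}$-parameter family of Mahalanobis-type alternatives $\lVert \clr p-\clr q\rVert_M$, which satisfy (A1)--(A4).
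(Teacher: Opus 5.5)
Your argument is correct and follows the paper's skeleton. You reduce $\delta$ to a norm on $\mathcal H$ via (A1)--(A3), get an inner product from (A4) by Jordan--von Neumann, and force isotropy from (A5). The one genuine difference is the isotropy step.

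The paper proves by hand that the permutation representation on $\mathcal H$ is irreducible: if $x\in U$ has $x_i\neq x_j$, then $x-P_{(ij)}x=(x_i-x_j)(\mathbf e_i-\mathbf e_j)\in U$, and such differences span $\mathcal H$. It then observes that any eigenspace of the self-adjoint $M$ is permutation-invariant and hence equals $\mathcal H$. This Schur-type argument needs only real irreducibility, since a symmetric $M$ has a real eigenvalue, and it transfers unchanged to any other irreducible symmetry group.

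You instead compute the commutant explicitly. The extension $M P_{\mathcal H}$ (zero on $\mathbf 1$) is already $S_D$-invariant, because $P_\sigma$ commutes with $P_{\mathcal H}=I-\tfrac1D\mathbf 1\mathbf 1^\top$, so no averaging is needed. Transitivity of $S_D$ on indices and on ordered pairs of distinct indices then gives $aI+b\mathbf 1\mathbf 1^\top$, and the $b$-term vanishes on $\mathcal H$. This route is equally elementary and shows why the only other invariant form is invisible on $\mathcal H$. Your appeal to absolute irreducibility is stronger than anything either argument actually uses.

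One side remark should be corrected. (A3) controls $\nu(\alpha v)$ but not $\nu(\alpha v\pm w)$ when $w$ is not parallel to $v$. So it does not directly yield $\langle\alpha v,w\rangle_\nu=\alpha\langle v,w\rangle_\nu$ for real $\alpha$. The standard route is rational homogeneity from additivity, extended by continuity. Continuity is free here because $\nu$ is already a norm: $|\nu(\alpha v+w)-\nu(\beta v+w)|\le|\alpha-\beta|\,\nu(v)$. Since you also invoke Jordan--von Neumann for a norm, as the paper does, the conclusion is unaffected; just drop the claim that no continuity argument is needed.
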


The proof (Appendix~\ref{app:characterization}) reduces $\delta$ to a norm via A2, upgrades it to an inner-product norm via A4 by \citet{jordan1935inner}, and pins the inner product to a multiple of the CLR one via A5 and irreducibility of the permutation representation on the sum-zero hyperplane. The axioms are tight: $N_1(v)=\lVert\clr v\rVert_1$ satisfies A1--A3 and A5 but not A4 (Remark~\ref{rem:l1}). A2 and A3 are exactly invariance to shared logit biases and consistency under temperature, the two nuisance transformations the transformer parameterization itself supplies. Cross-dimensional comparisons need one common calibration constant, under which dominance, the sink decomposition, and the bias/temperature behavior follow rather than being assumed (Corollary~\ref{cor:consequences}, Remark~\ref{rem:aitchison-characterization-scope}).

\subsection{Consequences: dynamic range, taxonomies, and training curves}
\label{sec:consequences}

Three consequences sharpen what \S\ref{sec:experiments} measures.

\begin{theorem}[Dynamic-range collapse in the sink-dominated regime]
\label{thm:range}
Fix $D\ge3$ and $s_0\in(\tfrac12,1)$. \emph{(i)} If $p,q\in\Sm{D}$ have sink masses at least $s_0$, then, strictly,
\[
1-c(p,q)<\frac{(1-s_0)^2}{s_0^2+(1-s_0)^2},
\qquad
\lVert p-q\rVert<\sqrt{2}\,(1-s_0),
\qquad
\JS(p,q)<(1-s_0)\log 2,
\]
and each right-hand side is the exact, nonattained supremum (the attainable values fill $[0,\cdot)$); the cosine band shrinks quadratically, $\sim(1-s_0)^2$. \emph{(ii)} The content-preserving sink transformations $T_t r:=\big(t,(1-t)r'\big)$ leave the entire $\dAc$ dissimilarity matrix and all $\CAc$ values of any finite head family exactly unchanged; in particular, sinks can strengthen arbitrarily while the content geometry stays fixed. (Proof: Appendix~\ref{app:consequences}.)
\end{theorem}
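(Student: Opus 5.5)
The plan is to parametrize every row in sink/content form, $p=(s,u)$ and $q=(t,v)$, where $u=(1-s)p'$ and $v=(1-t)q'$ are strictly positive vectors with $\lVert u\rVert_1=a:=1-s$ and $\lVert v\rVert_1=b:=1-t$. The hypothesis gives $a,b\in(0,m]$ with $m:=1-s_0<\tfrac12$. Each bound in (i) then reduces to an elementary estimate in $(a,b)$ plus a separate inequality on the content vectors. Strictness will always come from the same source. Because $D\ge3$, the content has at least two parts, all positive, so $\lVert u\rVert_2<a$, $\langle u,v\rangle>0$ and $\lVert u-v\rVert_1<a+b$.

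\textbf{Euclidean.} Write $\lVert p-q\rVert^2=(a-b)^2+\lVert u-v\rVert^2$. Then use $\lVert u-v\rVert^2=\lVert u\rVert^2+\lVert v\rVert^2-2\langle u,v\rangle<a^2+b^2$. The function $2a^2+2b^2-2ab$ is convex in each variable, so its maximum on $(0,m]^2$ is approached at the corners and equals $2m^2$. This gives $\lVert p-q\rVert<\sqrt2\,m$.

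\textbf{Cosine.} Use $\langle p,q\rangle>st$ together with $\lVert p\rVert^2<s^2+a^2$ and $\lVert q\rVert^2<t^2+b^2$. This gives $c(p,q)>\phi(s)\phi(t)$ with $\phi(x)=x/\sqrt{x^2+(1-x)^2}$. Since $\phi$ is increasing, $c(p,q)>\phi(s_0)^2=s_0^2/(s_0^2+m^2)$, and subtracting from $1$ yields the claimed band.

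\textbf{Jensen--Shannon.} I would use the per-coordinate term $j(x,y)=\tfrac12\big[x\log\tfrac{2x}{x+y}+y\log\tfrac{2y}{x+y}\big]$. It is jointly convex and positively $1$-homogeneous, hence subadditive. For $x\ge y$ this gives $j(x,y)\le j(y,y)+j(x-y,0)=\tfrac{\log2}{2}(x-y)$. Summing over coordinates yields $\JS(p,q)\le \log2\cdot\mathrm{TV}(p,q)$, where $\mathrm{TV}(p,q)=\tfrac12\big(|a-b|+\lVert u-v\rVert_1\big)$. Bounding $\lVert u-v\rVert_1<a+b$ then gives $\mathrm{TV}<\tfrac12(|a-b|+a+b)=\max(a,b)\le m$.

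\textbf{Sharpness.} For all three measures, take $s=t=s_0$. Let $u$ and $v$ put mass $m(1-\epsilon)$ on two different content coordinates and spread the remaining $m\epsilon$ over the other coordinates. As $\epsilon\to0$, the cosine gap tends to $m^2/(s_0^2+m^2)$, the Euclidean distance to $\sqrt2\,m$, and $\JS$ to $m\log2$. Together with the strict inequalities, this shows each right-hand side is a nonattained supremum. The admissible set of pairs is connected, each measure is continuous on it, and each takes the value $0$ at $p=q$. So the attainable values form an interval containing $0$ and approaching the supremum, namely $[0,\cdot)$. The quadratic rate of the cosine band follows from $m^2/(s_0^2+m^2)\sim m^2/s_0^2$ as $s_0\to1$.

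\textbf{Part (ii).} This is immediate from Lemma~\ref{lem:pythagoras}. $T_t r$ has content composition $(T_t r)'=r'$, and $\dAc$ and $\CAc$ depend only on the content composition. So every entry of the $\dAc$ matrix and every $\CAc$ value is unchanged, for any $t\in(0,1)$ applied to all heads or separately to each head.

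The step I expect to be the main obstacle is the $\JS$ bound. Lin's inequality $\JS\le\log2\cdot\mathrm{TV}$ must be established cleanly through the subadditivity argument. Converting it into a strict bound also relies on positivity of the content, so I would verify that strictness survives even when $a\ne b$.
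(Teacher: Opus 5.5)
Your proof is correct. Apart from the Jensen--Shannon step it follows the paper's argument. The cosine bound via $c(p,q)>h(p_0)h(q_0)\ge h(s_0)^2$ is the same. So is the Euclidean decomposition: your vertex argument for $2(a^2+b^2-ab)\le 2m^2$ is the paper's $a^2+b^2-ab\le\max\{a,b\}^2$ in another form. The sharpness construction at $p_0=q_0=s_0$ with content concentrating on two distinct coordinates, the connectedness argument for the range $[0,\cdot)$, and part (ii) also match.

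The genuinely different step is the Jensen--Shannon bound. The paper writes $p=s_0e_0+\varepsilon u$ and $q=s_0e_0+\varepsilon v$, with $\varepsilon=1-s_0$ and $u,v$ in the closed simplex. It applies joint convexity of $\KL$ against the midpoint $s_0e_0+\varepsilon\frac{u+v}{2}$ to get $\JS(p,q)\le\varepsilon\,\JS(u,v)$. Strictness then comes from $\JS(u,v)<\log 2$, because $u$ and $v$ overlap on the content coordinates. You instead prove the general inequality $\JS\le\log 2\cdot\mathrm{TV}$ from sublinearity of the per-coordinate term, and then show $\mathrm{TV}(p,q)<\max\{1-p_0,1-q_0\}\le 1-s_0$.

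Strictness does survive for $a\ne b$, as you hoped. The identity $\tfrac12(|a-b|+a+b)=\max\{a,b\}$ is exact, and $\lVert u-v\rVert_1<a+b$ holds as soon as one content coordinate is positive in both rows.

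What each route buys:
\begin{itemize}
\item Yours passes through a reusable, distribution-free inequality. It ties the JS band to the same total-variation shrinkage that drives the Euclidean band.
\item The paper's exposes the mechanism more directly: the shared sink floor contributes nothing, and the residual divergence is scaled by $\varepsilon$. It also applies unchanged to any jointly convex divergence that vanishes on equal arguments.
\end{itemize}
Both rest on the same fact for strictness: positive content mass in both rows.

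One detail for the sharpness limit: state that $\JS$ is continuous up to the boundary of the simplex. Your $j$ is continuous on the closed quadrant, so this follows; the paper checks it via $\JS=H(\frac{x+y}{2})-\frac12H(x)-\frac12H(y)$.
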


Concretely, heads with sink mass $\ge0.8$ (Llama-3.2-1B's mean is $0.84$) have \emph{all} pairwise cosine similarities above $0.94$, Euclidean distances below $0.29$, and $\JS$ below $0.14$ nats: the classical dynamic range shrinks into a band comparable to estimation noise, so the orderings inside it, nearest neighbors, redundancy ranks, cluster merges, carry vanishing information (a mechanism candidate for the pruning catastrophe of \S\ref{sec:realmodels}).

\begin{proposition}[Taxonomies: stability and instability]
\label{prop:taxonomy}
\emph{(i)} For any finite labelled head set, the $\dAc$ dissimilarity matrix computed with the sink retained equals, entrywise, the $\dA$ matrix of the explicitly sink-dropped heads. Hence every clustering rule, deterministic or randomized, whose sole data-dependent input is this labelled matrix returns identical outputs in the two analyses (equally, in distribution and under a common coupling, when randomized). \emph{(ii)} There is a nonempty open set of head triples of positive $3(D-1)$-dimensional measure on which, simultaneously for $1-\cos$, $\JS$, and Euclidean distance, the unique closest pair is $(p,r)$ with the sink and $(p,q)$ after the drop; single- and complete-linkage clustering at $K{=}2$ therefore return $\{p,r\}\{q\}$ versus $\{p,q\}\{r\}$. The exact rational witness is Table~\ref{tab:witness}. (Proof: Appendix~\ref{app:consequences}.)
\end{proposition}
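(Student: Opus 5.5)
Part (i) is essentially bookkeeping once Lemma~\ref{lem:pythagoras} is in hand. The lemma says $\dAc$ depends only on the content composition, so the retained-sink $\dAc$ matrix of a labelled family $\{h_k\}$ has entries $\dA(h_k',h_l')$. The $\dA$ matrix of the explicitly sink-dropped heads $\{h_k'\}$ has exactly the same entries, since dropping and re-closing an already sink-free composition is the identity. So the two labelled matrices agree entrywise. A deterministic clustering rule whose only data-dependent input is this matrix is a function of it, so its outputs coincide. For a randomized rule, write it as a measurable function of the matrix and an independent seed. Equal inputs then give equal output distributions, and under the common coupling (same seed) the outputs are identical pathwise. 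No analytic work is needed here beyond citing the lemma and Definition~\ref{def:toolkit}(d).

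For (ii), I would reuse Theorem~\ref{thm:reversal}(iii) rather than redo the analysis. That result gives an open set $U$ of triples, of positive $3(D-1)$-dimensional measure, on which all three classical verdicts about ``$q$ versus $r$ closer to $p$'' flip between conventions. The proposition needs more than this, though: $(p,r)$ must be the \emph{unique} closest pair among all three pairs, with sink, and $(p,q)$ the unique closest pair after the drop. That means controlling the third pair $(q,r)$ as well. My plan is to check strict inequalities at the exact witness of Table~\ref{tab:witness}, for all three dissimilarities and both conventions, and then use continuity to get an open neighbourhood.

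\textbf{With the sink.} I need $d(p,r)<d(q,r)$, on top of $d(p,r)<d(p,q)$ which the table already gives.

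\textbf{After the drop.} $d(p',q')=0$, and $d(q',r')=d(p',r')>0$ because $q'=p'$. So $(p',q')$ is the unique minimum.

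The only new computation is $d(q,r)$ for $q=(0.7,0.24,0.06)$ and $r=(0.9,0.02,0.08)$.
\begin{itemize}
\item Euclidean: $\sqrt{0.04+0.0484+0.0004}\approx0.298>0.0849$.
\item Cosine: $1-\cos(q,r)$ is visibly larger than $1-0.9956$.
\item JS: it exceeds $0.0193$, since $q$ and $r$ differ on both the sink and the content coordinates.
\end{itemize}
I would write these out as closed-form rationals and logs, in the style of the appendix.

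All six pairwise dissimilarities are continuous in a neighbourhood of the witness; this includes those of the dropped rows, since $R_0$ is continuous on the open simplex. Finitely many strict inequalities therefore persist on an open neighbourhood in $(\Sm{D})^3$. A nonempty open subset of the $3(D-1)$-dimensional affine hull has positive Lebesgue measure.

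To cover general $D\ge3$, I would embed the witness by splitting content mass. One option is to append $D-3$ extra content coordinates with tiny equal masses $\eta$, shared by all three rows, and let $\eta\to0$ by continuity. The alternative is to use the family of Theorem~\ref{thm:reversal} directly, with $s$ close to $1$, where Theorem~\ref{thm:range}-type bounds make $d(p_s,r_s)$ vanish while $d(q,r_s)$ stays bounded away from $0$.

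\textbf{Clustering consequence.} With three points and a unique closest pair, both single and complete linkage merge that pair first, so at $K=2$ they return $\{p,r\}\{q\}$ with the sink and $\{p,q\}\{r\}$ after the drop.

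\textbf{Main obstacle.} The hardest step is the third-pair inequality for general $D$, and doing it uniformly across the three metrics. My first attempt would be the $s\to1$ limit in the Theorem~\ref{thm:reversal} family, where all three classical dissimilarities satisfy $d(p_s,r_s)\to0$ while $d(q,r_s)$ tends to a positive limit. That limit is positive because $q$ keeps sink mass $s'<1$ and $r_s$ does not.
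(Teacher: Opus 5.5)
Your proposal is correct and follows essentially the paper's route: part (i) is read off directly from the definition $\dAc(h_i,h_j)=\dA(h_i',h_j')$. For (ii) the paper likewise uses the Theorem~\ref{thm:reversal} family with $s\uparrow1$ (so $p_s,r_s\to e_0$ while $q\neq e_0$ keeps the third pair bounded away from zero), defines the open set by finitely many strict inequalities that include the third pair, and verifies the $D=3$ witness separately. One detail worth taking from the paper: $\JS(q,r)>\JS(p,r)$ at the witness follows by data processing under the coarse-graining $(x_0,x_1,x_2)\mapsto(x_0,x_1+x_2)$, which gives $\JS(q,r)\ge\JS\bigl((0.7,0.3),(0.9,0.1)\bigr)=\JS(p,q)>\JS(p,r)$ and turns your ``differs on both sink and content'' remark into a proof.
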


\begin{corollary}[Checkpoint entropy decomposition and sink-only collapse]
\label{cor:checkpoint}
Along any checkpoint family $p_t=\big(s_t,(1-s_t)\pi_t\big)$,
$H(p_t)=H_b(s_t)+(1-s_t)H(\pi_t)$ pointwise. If the content is fixed, $\pi_t\equiv\pi$, then $\CAc(p_t)$ is constant while $s\mapsto H\big(s,(1-s)\pi\big)$ is strictly decreasing on $[\tfrac12,1)$ (its unique maximizer is $\theta_\pi=\big(1+e^{H(\pi)}\big)^{-1}<\tfrac12$); with uniform content the apparent collapse can approach $\log D$ with $\CAc\equiv0$. This is a possibility result, not a claim that sink mass grows along any empirical trajectory. (Proof: Appendix~\ref{app:consequences}.)
\end{corollary}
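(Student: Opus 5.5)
The pointwise identity is the chain rule for Shannon entropy applied to the two-stage description of a row: first choose sink versus content with probabilities $(s_t,1-s_t)$, then choose a content token according to $\pi_t$. Concretely, I would expand
\[
-s_t\log s_t-\sum_{i\ge1}(1-s_t)\pi_{t,i}\log\big((1-s_t)\pi_{t,i}\big),
\]
split the logarithm of the product, and use $\sum_i\pi_{t,i}=1$. This gives $H(p_t)=H_b(s_t)+(1-s_t)H(\pi_t)$ for each $t$, with no assumption on the trajectory. The claim that $\CAc(p_t)$ is constant when $\pi_t\equiv\pi$ follows from Definition~\ref{def:toolkit}(d): $\CAc(p_t)=\lVert\clr(p_t')\rVert_2$ and $p_t'=\pi$. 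Equivalently, Lemma~\ref{lem:pythagoras} and Theorem~\ref{thm:range}(ii) already state that $\CAc$ depends only on the content composition.

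For monotonicity, I would set $h=H(\pi)\in[0,\log(D-1)]$ and $f(s)=H_b(s)+(1-s)h$. Then
\[
f'(s)=\log\frac{1-s}{s}-h,\qquad f''(s)=-\frac{1}{s(1-s)}<0,
\]
so $f$ is strictly concave on $(0,1)$. Its unique critical point solves $(1-s)/s=e^{h}$, which gives $\theta_\pi=(1+e^{h})^{-1}$, and this critical point is the unique maximizer. Since $h\ge0$, we have $\theta_\pi\le\tfrac12$. For $s\ge\tfrac12$, $\log\frac{1-s}{s}\le0$, so $f'(s)\le -h$.

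This is the one place that needs care. When $h>0$, $f'<0$ on all of $[\tfrac12,1)$ and $f$ is strictly decreasing there. When $h=0$, $\pi$ is a vertex of the simplex and not in the open simplex $\Sm{D-1}$, because its other coordinates vanish. So $\pi\in\Sm{D-1}$ forces $h>0$, and therefore the strict inequality $\theta_\pi<\tfrac12$. Strict concavity then gives strict decrease on $[\theta_\pi,1)\supset[\tfrac12,1)$.

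For the uniform-content statement, I would take $\pi=e$, so $h=\log(D-1)$ and $\CAc\equiv0$. The maximum of $f$ is $f(\theta_\pi)=\log(1+e^{h})=\log D$, attained at $\theta_\pi=1/D$; this is exactly the uniform row on $\Sm{D}$. As $s\to1$, $H_b(s)\to0$ and $(1-s)h\to0$, so $f(s)\to0$. Hence, moving the sink mass from $1/D$ toward $1$ with the content held fixed, the entropy drop $f(1/D)-f(s)$ approaches $\log D$ while $\CAc$ stays $0$. The possibility disclaimer in the statement needs no proof. The only subtle step is securing the strict inequality $h>0$, and that comes from openness of the simplex.
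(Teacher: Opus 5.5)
Your proof is correct and follows the paper's own argument essentially step for step. It uses the chain-rule expansion, constancy of $\CAc$ from $p_t'=\pi$, strict concavity of $f(s)=H_b(s)+(1-s)H(\pi)$ with unique critical point $\theta_\pi$, strict positivity $H(\pi)>0$ because $\pi$ lies in the open simplex, and the uniform-content limit $H(p_{1/D})=\log D$, $H(p_s)\to0$ as $s\uparrow1$. The one assumption you leave implicit is $D\ge3$, which the paper's full statement makes explicit: it guarantees the content simplex has at least two parts, so a vertex (the case $H(\pi)=0$) is genuinely excluded.
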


\S\ref{sec:downstream} turns all three into prospectively specified measurements.

\section{Experiments}
\label{sec:experiments}

All numbers in this section are exact evaluations, Monte Carlo estimates, or single-seed model measurements produced by the released code with fixed seeds; \S\ref{sec:synthetic} isolates the mechanism in silico and \S\ref{sec:realmodels} measures it on pretrained models.

\subsection{Exact constructions and calibrated simulations}
\label{sec:synthetic}

\textbf{Exact witnesses.} Table~\ref{tab:witness} and Figure~\ref{fig:punchline} evaluate the Theorem~\ref{thm:reversal} witness; Proposition~\ref{prop:entropy} is instantiated in its statement; every invariance claim of Proposition~\ref{prop:invariances} is verified to machine precision in the released tests.

\textbf{How often do verdicts flip?} Figure~\ref{fig:reversal} samples head triples with sink structure calibrated to reported statistics \citep{barbero2025first,gu2025sink} and asks the basic analyst's question, \emph{which of $q,r$ is closer to $p$?}, under both sink conventions. At mean sink mass $\bar s=0.5/0.7/0.9$ the conventions disagree on $36/39/41\%$ (cosine), $23/31/40\%$ (JS), and $27/37/45\%$ (Euclidean) of triples, growing with sink mass toward the $50\%$ chance level. The content distance $\dAc$ is identical in both pipelines by Lemma~\ref{lem:pythagoras}, hence $0\%$ everywhere. The positive-measure region of Theorem~\ref{thm:reversal} is not thin at realistic sink levels; stars overlay the \S\ref{sec:realmodels} measurements at each model's signature sink mass.

\begin{figure}[t]
\centering
\includegraphics[width=0.72\textwidth]{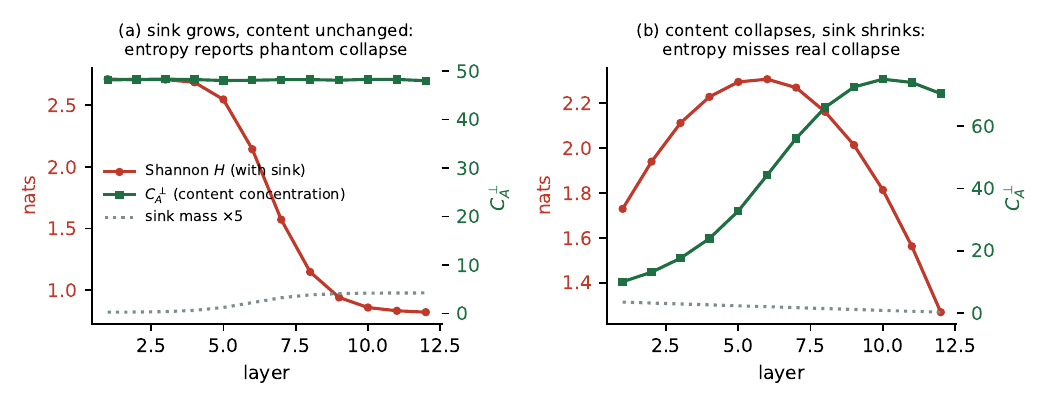}
\caption{\textbf{Entropy conflates sink strength with content spread; $\CAc$ separates them.} Synthetic 12-layer models, $4{,}000$ rows per layer. (a) Sink mass grows $0.06\to0.85$ with content log-ratio dispersion held fixed: mean Shannon entropy falls $2.71\to0.82$ nats ($-70\%$, a textbook ``entropy collapse'') while $\CAc$ moves $48.1\to48.0$ ($-0.3\%$): the collapse is phantom. (b) Content genuinely collapses while the sink recedes $0.70\to0.05$: entropy drifts $1.73\to1.27$ ($-27\%$, reads as mild) while $\CAc$ rises $10.0\to70.4$ ($7\times$): the real collapse is masked.}
\label{fig:collapse}
\end{figure}

\textbf{Collapse diagnostics.} Figure~\ref{fig:collapse} constructs two 12-layer scenarios. In Scenario (a), only the sink grows; Shannon entropy reports a $70\%$ collapse while content log-ratio dispersion is flat, exactly the failure Proposition~\ref{prop:entropy} predicts, and the mode to which entropy-collapse monitoring \citep{zhai2023stabilizing} is exposed when sinks strengthen \citep{gu2025sink}. In Scenario (b), the content genuinely collapses while the sink recedes; entropy barely moves. $\CAc$ responds in both cases because it is a statistic of the content subcomposition only.

\subsection{Pretrained models across five model families}

\begin{figure}[!t]
\centering
\includegraphics[width=0.50\textwidth]{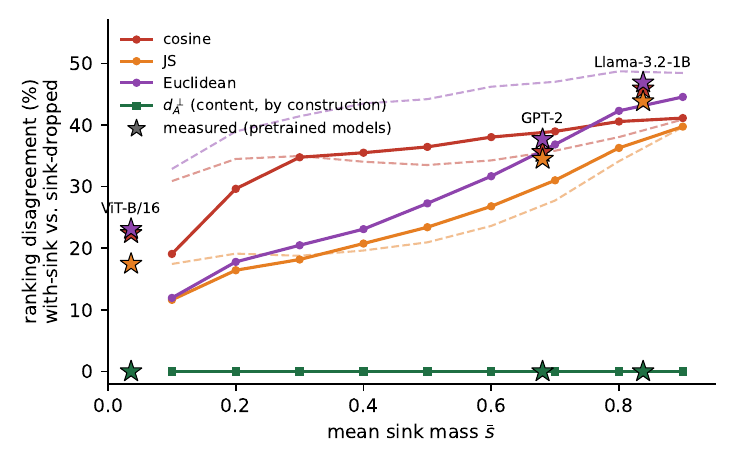}
\caption{\textbf{Calibrated reversal curves and ten measured models.} Curves: Monte Carlo closer-head disagreement rates between sink conventions for cosine, JS, and Euclidean distance, computed before any pretrained model was run (solid $\alpha{=}0.1$, dashed $\alpha{=}1$). Stars: the ten models, placed at the sink mass of the signatures the distances are computed on; mean absolute deviation from the solid curves is $4.8$ points (max $13$). The $\dAc$ rates are zero identically (Lemma~\ref{lem:pythagoras}), plotted as a code check.}
\label{fig:reversal}
\end{figure}
\label{sec:realmodels}

\textbf{Protocol} (released; two consumer GPUs, a few hours, no training; ten models, GPT-2, Pythia 70M--1.4B, Qwen2.5-1.5B, Llama-3.2-1B/3B, ViT-B/16; three resampling seeds each, varying data order and triple sampling for one fixed public checkpoint per model). \textbf{Extraction:} forward hooks on the attention softmax; for LMs, query positions $t\in[w,T]$ restricted to the first $W=64$ keys (a fixed common support; no zero imputation needed), with key $0$ (the $\bos$/first token) as the sink column; for ViT-B/16, the CLS column as sink. \textbf{Aggregation:} per-head Aitchison mean over query positions and inputs (Definition~\ref{def:toolkit}(a)). \textbf{Measurements:} (1) closer-head ranking-reversal rates over $50{,}000$ head triples between the with-sink and sink-dropped conventions, for cosine/JS/Euclidean versus $\dAc$; (2) layer-wise mean $H$, $H(\pi)$, and sink mass; (3) redundancy pruning at matched sparsity, prune the $m\in\{10,20,30\}\%$ most redundant heads (nearest-neighbor distance) under JS, under the content distance $\dAc$, and under the full $\dA$ (sink balance included), comparing perplexity (LMs) and top-1 agreement with the unpruned model (ViT, so no labels are needed); (4) an $\varepsilon$ sweep over $\{10^{-5},10^{-6},10^{-7}\}$. Data: $2{,}000$ WikiText-103 validation sequences (LMs), $2{,}048$ COCO val2017 images (ViT); details in Appendix~\ref{app:experiments}.

\begin{table}[!t]
\centering
\scriptsize\setlength{\tabcolsep}{4pt}\renewcommand{\arraystretch}{0.94}
\caption{\textbf{Measured convention-dependence on ten pretrained models} (mean$\pm$sd over three seeds). Reversal $=$ \% of $50{,}000$ head triples whose closer-head verdict flips between sink conventions (bit-identical across the $\varepsilon$ sweep). Rates rise with signature sink mass (Figure~\ref{fig:reversal}). Hell./FR: Hellinger reversal; Fisher--Rao is identical since both are strictly monotone in the Bhattacharyya coefficient, so every ranking verdict coincides. Collapse disagr.: fraction of adjacent-layer transitions on which $H$ and the content entropy $H(\pi)$ move in opposite directions. Sink (row/sig.): arithmetic row-level mean and the sink mass of the Aitchison-mean signatures the distances use. The content-distance column is omitted: its rate is zero by construction (Lemma~\ref{lem:pythagoras}).}
\label{tab:real}
\begin{tabular}{lcccccc}
\toprule
 & & \multicolumn{4}{c}{ranking-reversal rate} & \\
\cmidrule{3-6}
Model & sink (row/sig.) & cosine & JS & Euclid. & Hell./FR & collapse disagr. \\
\midrule
GPT-2 (124M) & $0.68/0.78$ & $36.1{\scriptstyle\pm 0.1}$\% & $35.0{\scriptstyle\pm 0.2}$\% & $38.1{\scriptstyle\pm 0.2}$\% & $34.4{\scriptstyle\pm 0.2}$\% & $27.3{\scriptstyle\pm 0.0}$\% \\
Llama-3.2-1B & $0.84/0.94$ & $46.1{\scriptstyle\pm 0.2}$\% & $44.1{\scriptstyle\pm 0.2}$\% & $47.2{\scriptstyle\pm 0.3}$\% & $43.6{\scriptstyle\pm 0.2}$\% & $46.7{\scriptstyle\pm 0.0}$\% \\
Qwen2.5-1.5B & $0.68/0.82$ & $43.9{\scriptstyle\pm 0.1}$\% & $42.5{\scriptstyle\pm 0.1}$\% & $44.3{\scriptstyle\pm 0.1}$\% & $42.3{\scriptstyle\pm 0.1}$\% & $29.6{\scriptstyle\pm 0.0}$\% \\
Llama-3.2-3B & $0.86/0.95$ & $45.0{\scriptstyle\pm 0.3}$\% & $43.7{\scriptstyle\pm 0.3}$\% & $46.7{\scriptstyle\pm 0.3}$\% & $43.5{\scriptstyle\pm 0.3}$\% & $37.0{\scriptstyle\pm 0.0}$\% \\
ViT-B/16 & $0.04/0.07$ & $22.4{\scriptstyle\pm 0.1}$\% & $17.3{\scriptstyle\pm 0.1}$\% & $23.2{\scriptstyle\pm 0.1}$\% & $18.1{\scriptstyle\pm 0.1}$\% & $9.1{\scriptstyle\pm 0.0}$\% \\
Pythia-70M & $0.08/0.43$ & $33.5{\scriptstyle\pm 0.2}$\% & $30.6{\scriptstyle\pm 0.2}$\% & $31.0{\scriptstyle\pm 0.2}$\% & $30.3{\scriptstyle\pm 0.2}$\% & $40.0{\scriptstyle\pm 0.0}$\% \\
Pythia-160M & $0.28/0.69$ & $43.2{\scriptstyle\pm 0.2}$\% & $43.9{\scriptstyle\pm 0.2}$\% & $44.3{\scriptstyle\pm 0.3}$\% & $43.4{\scriptstyle\pm 0.2}$\% & $18.2{\scriptstyle\pm 0.0}$\% \\
Pythia-410M & $0.52/0.74$ & $43.5{\scriptstyle\pm 0.2}$\% & $42.4{\scriptstyle\pm 0.2}$\% & $43.8{\scriptstyle\pm 0.2}$\% & $42.1{\scriptstyle\pm 0.2}$\% & $21.7{\scriptstyle\pm 0.0}$\% \\
Pythia-1B & $0.59/0.70$ & $41.0{\scriptstyle\pm 0.2}$\% & $40.3{\scriptstyle\pm 0.2}$\% & $41.3{\scriptstyle\pm 0.2}$\% & $40.3{\scriptstyle\pm 0.1}$\% & $33.3{\scriptstyle\pm 0.0}$\% \\
Pythia-1.4B & $0.64/0.80$ & $41.7{\scriptstyle\pm 0.2}$\% & $40.4{\scriptstyle\pm 0.1}$\% & $42.4{\scriptstyle\pm 0.2}$\% & $40.1{\scriptstyle\pm 0.1}$\% & $30.4{\scriptstyle\pm 0.0}$\% \\
\bottomrule
\end{tabular}
\vspace{5pt}
\caption{\textbf{Redundancy pruning at $20\%$ sparsity, all ten models} (mean$\pm$sd, three resampling seeds; $10/20/30\%$ sweep in Table~\ref{tab:prunefull}; random and total-$\dA$ baselines in Table~\ref{tab:prunev2}). LMs: perplexity, lower is better; ViT: top-1 agreement with the unpruned model, a stability rather than accuracy metric, higher is better. JS$'$ is JS on the sink-dropped rows; Hellinger on the content rows selects identical prune sets (Appendix~\ref{app:tailaudit}). ``-L1'' restricts neighbors to the same layer and defers a head whose nearest neighbor is already selected (one representative of each mutual pair is retained). Bold: best of the five criteria; the regime structure is described in the text.}
\label{tab:prune}
\setlength{\tabcolsep}{4pt}\renewcommand{\arraystretch}{0.92}
\begin{tabular}{lcccccc}
\toprule
Model & base & JS & JS$'$ & $\dAc$ & JS-L1 & $\dAc$-L1 \\
\midrule
GPT-2 (124M) & $43.5{\scriptstyle\pm 0.4}$ & $74.5{\scriptstyle\pm 0.7}$ & $106.3{\scriptstyle\pm 0.9}$ & $104.7{\scriptstyle\pm 0.7}$ & $\mathbf{70.6{\scriptstyle\pm 0.7}}$ & $448.5{\scriptstyle\pm 271.1}$ \\
Llama-3.2-1B & $15.3{\scriptstyle\pm 0.1}$ & $2094.1{\scriptstyle\pm 139.7}$ & $70.5{\scriptstyle\pm 9.6}$ & $63.6{\scriptstyle\pm 4.1}$ & $616.6{\scriptstyle\pm 30.2}$ & $\mathbf{45.0{\scriptstyle\pm 3.5}}$ \\
Qwen2.5-1.5B & $14.8{\scriptstyle\pm 0.1}$ & $111.5{\scriptstyle\pm 0.7}$ & $28.1{\scriptstyle\pm 0.2}$ & $30.3{\scriptstyle\pm 0.7}$ & $34.3{\scriptstyle\pm 1.1}$ & $\mathbf{26.9{\scriptstyle\pm 1.1}}$ \\
Llama-3.2-3B & $12.1{\scriptstyle\pm 0.1}$ & $310.8{\scriptstyle\pm 3.8}$ & $25.3{\scriptstyle\pm 0.1}$ & $40.2{\scriptstyle\pm 1.6}$ & $1733.2{\scriptstyle\pm 358.7}$ & $\mathbf{23.4{\scriptstyle\pm 3.0}}$ \\
ViT-B/16 & $1.000$ & $0.578{\scriptstyle\pm 0.003}$ & $0.557{\scriptstyle\pm 0.015}$ & $0.566{\scriptstyle\pm 0.013}$ & $0.572{\scriptstyle\pm 0.004}$ & $\mathbf{0.581{\scriptstyle\pm 0.006}}$ \\
Pythia-70M & $724.4{\scriptstyle\pm 10.7}$ & $\mathbf{661.7{\scriptstyle\pm 16.5}}$ & $1253.6{\scriptstyle\pm 19.0}$ & $1354.5{\scriptstyle\pm 16.1}$ & $1002.3{\scriptstyle\pm 15.9}$ & $1340.3{\scriptstyle\pm 24.5}$ \\
Pythia-160M & $131.4{\scriptstyle\pm 0.7}$ & $\mathbf{142.1{\scriptstyle\pm 1.8}}$ & $736.8{\scriptstyle\pm 15.7}$ & $721.5{\scriptstyle\pm 25.8}$ & $150.4{\scriptstyle\pm 11.3}$ & $476.4{\scriptstyle\pm 58.6}$ \\
Pythia-410M & $32.4{\scriptstyle\pm 0.1}$ & $75.2{\scriptstyle\pm 0.5}$ & $135.7{\scriptstyle\pm 11.4}$ & $194.8{\scriptstyle\pm 5.0}$ & $\mathbf{50.7{\scriptstyle\pm 0.7}}$ & $77.1{\scriptstyle\pm 13.7}$ \\
Pythia-1B & $21.8{\scriptstyle\pm 0.1}$ & $44.9{\scriptstyle\pm 0.1}$ & $63.0{\scriptstyle\pm 1.9}$ & $63.0{\scriptstyle\pm 1.9}$ & $\mathbf{37.6{\scriptstyle\pm 0.5}}$ & $47.4{\scriptstyle\pm 8.3}$ \\
Pythia-1.4B & $20.4{\scriptstyle\pm 0.1}$ & $33.4{\scriptstyle\pm 0.3}$ & $593.4{\scriptstyle\pm 41.4}$ & $377.8{\scriptstyle\pm 8.8}$ & $\mathbf{28.1{\scriptstyle\pm 0.1}}$ & $41.6{\scriptstyle\pm 2.2}$ \\
\bottomrule
\end{tabular}
\renewcommand{\arraystretch}{1.0}
\end{table}

\begin{figure}[!t]
\centering
\includegraphics[width=0.39\textwidth]{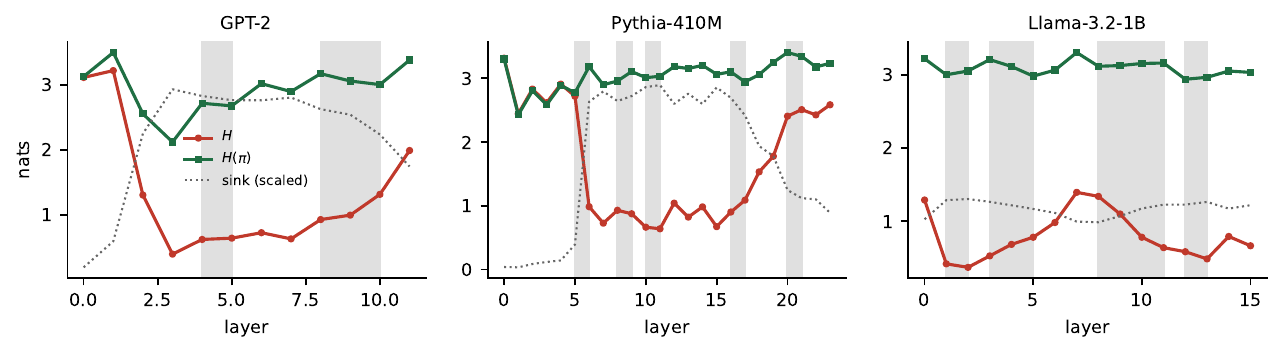}
\caption{\textbf{Measured layer-wise collapse decomposed.} Mean Shannon entropy $H$ (red) versus content entropy $H(\pi)$ (green), sink mass dotted (seed~0); shaded bands mark adjacent-layer transitions on which the two disagree in sign. Where the sink strengthens with depth, $H$ reports deepening collapse that the content channel does not show: Proposition~\ref{prop:entropy}'s conflation observed in pretrained models.}
\label{fig:realcollapse}
\end{figure}

\textbf{Results: measurement.} Table~\ref{tab:real} confirms the theory on ten pretrained models across five families. Row-level sink masses on the fixed $64$-key support span $0.04$ (ViT CLS) to $0.86$, corroborating \citet{barbero2025first}; within each family sink mass grows with scale. The signatures the distances are computed on concentrate further, geometric averaging amplifying a shared sink (row/signature masses in Table~\ref{tab:real}; Pythia-70M $0.08\to0.43$). Indexed by the signature sink, reversal rates rise from $17$--$23\%$ on ViT to $44$--$47\%$, approaching the $50\%$ chance level, on the Llamas, and track the calibrated curves of Figure~\ref{fig:reversal} with mean absolute deviation $4.8$ points (max $13$); reversal under $\dAc$ is zero by construction (verified to the bit); classical rates are $\varepsilon$-invariant with resampling sd $\le0.4$ points. The support choice is not decisive: the window retains $48$--$71\%$ of row mass, reversal moves at most $2$ points across $W\in\{32,64,128\}$, and $\dAc$ distances correlate at $\rho\ge0.87$ across windows (Appendix~\ref{app:clarkw}). Entropy and the content entropy $H(\pi)$ disagree in sign on $9$--$47\%$ of layer transitions across models (Figure~\ref{fig:realcollapse} shows three). (A measured Llama triple realizes Theorem~\ref{thm:reversal} verbatim: Fig.~\ref{fig:realtriple}.) Appendix~\ref{app:tailaudit} audits the estimator (tail coordinates carry $\le4.6\%$ of squared $\dAc$; dropped-row JS agrees on coarse ranking but not on nearest neighbors or taxonomies).

\begin{figure}[!t]
\centering
\includegraphics[width=0.44\textwidth]{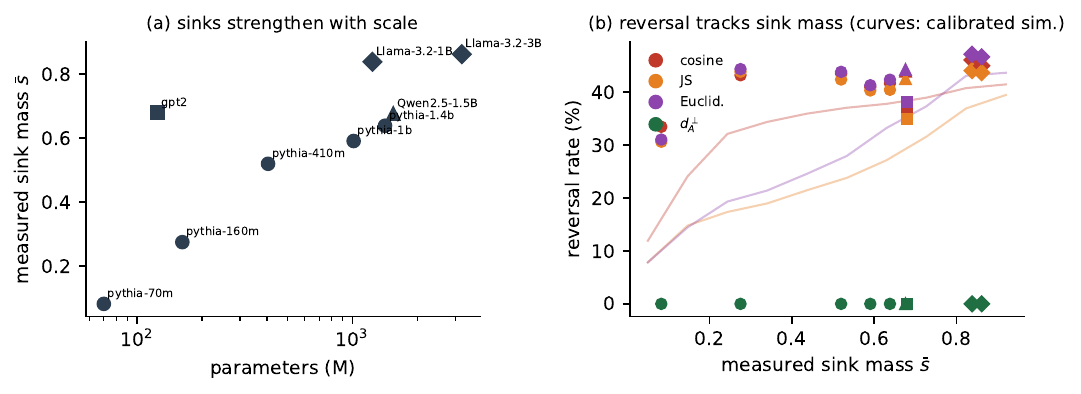}\hfill
\includegraphics[width=0.40\textwidth]{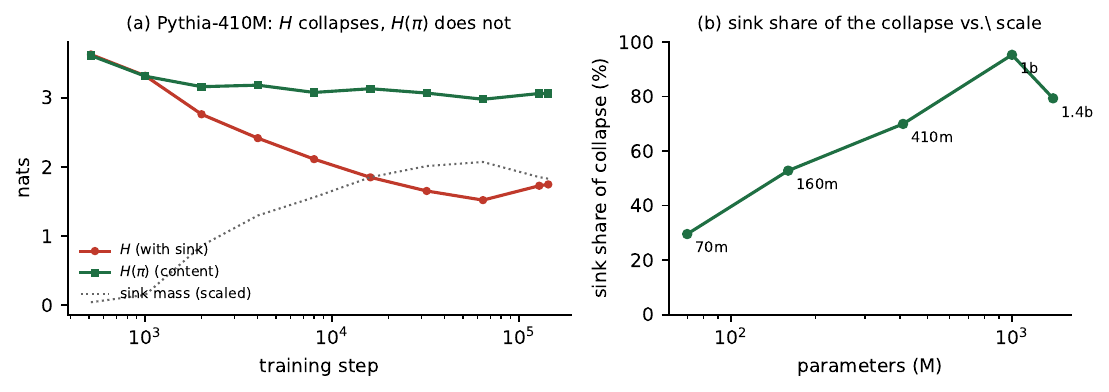}
\caption{\textbf{Prospectively specified downstream outcomes.} \emph{(a)} reversal tracks sink mass, which strengthens with scale within each family. \emph{(b)} at Pythia-410M $H$ collapses while $H(\pi)$ barely moves; the share $1-\Delta H(\pi)/\Delta H$ is $30/53/70/95/79\%$ across 70M--1.4B, rising to $95\%$ at 1B ($79\%$ at 1.4B).}
\label{fig:downstream}
\end{figure}

\textbf{Results: pruning.} Table~\ref{tab:prune} is a regime map. Under the global protocol, conditioning out the sink is the decisive step and the choice of geometry within the content channel is secondary: on Llama-3.2-1B, with-sink JS collapses ($15.3\to2094{\pm}140$) while dropped-row JS ($70.5{\pm}9.6$) and $\dAc$ ($63.6{\pm}4.1$) both avoid the failure, and dropped-row JS wins outright on Llama-3.2-3B ($25.3$ versus $40.2$); the full $\dA$, re-mixing the sink channel, fails like JS ($2480{\pm}211$), and random orderings beat global $\dAc$ on $15$ of $30$ sweep cells. The separation appears under a protocol closer to practice (same-layer neighbors, one representative kept per redundant pair): $\dAc$-L1 is the best of all nine criteria on every strong-sink model ($45.0$, $26.9$, $23.4$ on Llama-1B/Qwen/Llama-3B; $0.581$ on ViT), whereas each classical criterion fails severely on at least one of them (JS$'$-L1 reaches $2642$ on Llama-3.2-1B; JS-L1 reaches $1733$ on Llama-3.2-3B). $\dAc$-L1 in turn fails on GPT-2 ($449$), and with-sink JS(-L1) wins wherever sinks are weak. Across resampling seeds the $\dAc$-L1 sets are the less stable (Jaccard $0.36$--$0.77$ versus $0.92$--$1.00$ for JS-L1) even where their outcomes are stable: each criterion is stable along a different axis. Sink mass alone does not predict the regime (GPT-2 and Qwen2.5 share $s=0.68$ with opposite verdicts); the cross-head dispersion of the sink share separates all nine LMs (coefficient of variation, CV, $\le0.36$ in the content regime, $\ge0.42$ in the JS regime); a frozen out-of-sample test on three unseen models returned one confirmation, one boundary abstention, and one failure, so the rule is presented as descriptive, not validated (Appendix~\ref{app:conventions}). The experiments support two claims: measurements feeding any criterion are convention-dependent under classical metrics (Table~\ref{tab:real}); which channel carries the functional signal is model-dependent, a question the decomposition makes precise. Two stress tests separate the Aitchison choice from simply dropping the sink: rescaling the content logits by an unobservable temperature $\tau\in[0.5,2]$ leaves every $\dAc$ verdict and prune set bit-identical (Proposition~\ref{prop:invariances}) while dropped-row JS changes up to $38\%$ of its top-$20\%$ prune-set membership, and across sink definitions (first-1 versus first-4) $\dAc$ is the less sensitive on every model, with nested-projection dominance verified (Appendix~\ref{app:conventions}).

\subsection{Prospectively specified downstream tests}
\label{sec:downstream}

Are \emph{conclusions} built on these measurements convention-dependent too? Three predictions were stated before the runs (in the released repository; no external registry); we report each outcome.

\emph{(1) Taxonomies.} Clark-style hierarchical head clustering (agglomerative, $K\in\{4,6,8\}$) on BERT-base, GPT-2, and Llama-3.2-1B \citep{clark2019bert}. Prediction (Proposition~\ref{prop:taxonomy}): classical between-convention ARIs sit materially below $1$, decreasing with sink mass; the $\dAc$ partitions are identical by construction. \emph{Outcome}: at $64$ keys, ARIs sit at or below chance on all three (Table~\ref{tab:taxonomy}). On the full $128$-token support with \citet{clark2019bert}'s own pipeline (JS, average linkage), cross-convention ARI is $0.01$--$0.05$, and the SEP-head block ($68$/$144$ heads) drops from Jaccard $0.87$--$0.92$ to $\approx0.48$ when the sink is dropped (Appendix~\ref{app:clarkw}): its most prominent structure is the convention.

\emph{(2) Scale.} Sink mass and reversal rate across Pythia 70M--1.4B; prediction (Theorem~\ref{thm:range}; \citealp{gu2025sink}): both increase with parameters. \emph{Outcome}: sink mass is monotone within each family and reversal tracks it near the pre-computed curves (Fig.~\ref{fig:downstream}a). \emph{(3) Training curves.} $H$ versus $\CAc$ across Pythia checkpoints; prediction (Corollary~\ref{cor:checkpoint}): during sink emergence $H$ falls with $\CAc$ approximately flat. \emph{Outcome on the frozen endpoint: not confirmed.} $\CAc$ changes by $57/72/85\%$ at 70M/160M/410M ($16\%$ at 1B, $42\%$ at 1.4B), failing at the three smaller sizes and approximately holding at 1B. \emph{Exploratory analysis} via the exact identity $H=H_b(s)+(1-s)H(\pi)$ (Fig.~\ref{fig:downstream}b): at 410M $H$ collapses $3.63\to1.52$ while $H(\pi)$ moves $0.6$ nats, and at 1B $H(\pi)$ moves $0.07$ nats through a $1.6$-nat collapse; the share $1-\Delta H(\pi)/\Delta H$ (the fraction of the drop not attributable to a fall in $H(\pi)$) is $30/53/70/95/79\%$ across 70M--1.4B, rising to $95\%$ at 1B and remaining high ($79\%$) at 1.4B (signed decomposition: Appendix~\ref{app:conventions}).

\section{Conclusion}
\label{sec:discussion}

Attention rows are compositions, and the field's standard summaries mix a sink question with a content question under an unreported convention. We separated the channels exactly, showed the separation changes published-style conclusions (verdict flips near chance, a clustering whose main structure is the convention, a collapse that is mostly sink), and mapped when each channel matters functionally. \textbf{What the theory does and does not claim.} Theorems~\ref{thm:reversal}--\ref{thm:characterization} are statements about \emph{measurement}: for relative allocation among content tokens, the classical toolkit answers a convention-dependent question, the Aitchison toolkit a well-posed one. We make no claim that computation ``uses'' log-ratios. \textbf{Limitations.} The scale law rests on one family; the frozen regime test returned one confirmation, one abstention, one failure (OPT-125M: uniformly high sink, benign pruning, a third behavior the map lacks); $\dAc$-L1 selects unstable sets across resampling; audits are signature-level rather than per-row. \textbf{Future work:} fitting $d_\lambda^2=(\dAc)^2+\lambda(\Delta b)^2$ from held-out functional evidence, per-row audits, and value-weighted extensions.

\bibliography{refs}
\bibliographystyle{plainnat}

\appendix

\newtheorem{alemma}{Lemma}[section]
\newtheorem{atheorem}{Theorem}[section]
\newtheorem{aremark}{Remark}[section]

\section{Worked example and the geometry of the reversal}
\label{app:worked}

\textbf{The witness in full.} The triple of Figure~\ref{fig:punchline} and Table~\ref{tab:witness} is $p=(0.90,0.08,0.02)$, $q=(0.70,0.24,0.06)$, $r=(0.90,0.02,0.08)$ over parts $(\text{sink},a,b)$. In the parameterization of Theorem~\ref{thm:reversal}: $\pi=(0.8,0.2)$, $\tilde\pi=(0.2,0.8)$ (the transposition of the two content parts), $s=0.90$, $s'=0.70$. All values in Table~\ref{tab:witness} are exact evaluations (code: \texttt{experiments/exact\_examples.py}). The sink balances are $b(p)=b(r)=2.5422$ and $b(q)=1.4400$, and the Pythagorean decomposition of Lemma~\ref{lem:pythagoras} verifies to machine precision:
\[
\dA(p,q)^2=1.214868=0+(2.5422-1.4400)^2,\qquad
\dA(p,r)^2=3.843624=1.9605^2+0 .
\]
The $(p,q)$ discrepancy is entirely sink balance; the $(p,r)$ discrepancy is entirely content. Cosine, JS, and Euclidean distance mix the two components in a renormalization-dependent way, which is precisely what the reversal exploits.

\begin{figure}[t]
\centering
\includegraphics[width=0.5\textwidth]{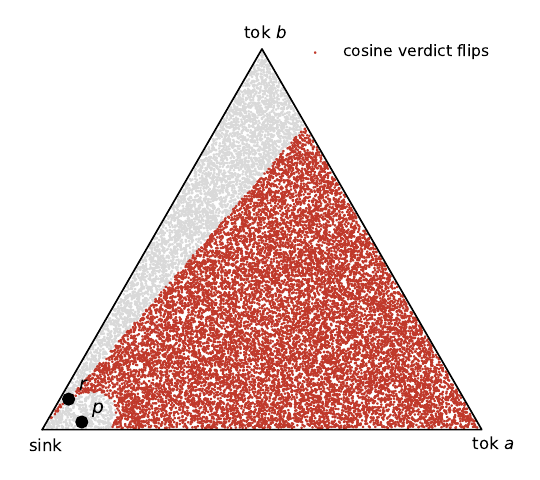}
\caption{\textbf{The reversal region is not thin.} Fixing $p$ and $r$ of the witness and sweeping the third row $q$ over the interior of the simplex ($20{,}000$ Dirichlet$(1,1,1)$ samples with all parts $>0.004$), the cosine verdict about which of $q,r$ is closer to $p$ differs between the with-sink and sink-dropped conventions on $77.9\%$ of positions (red).}
\label{fig:wedge}
\end{figure}

\textbf{How large is the flip region for one anchor pair?} Figure~\ref{fig:wedge} fixes $(p,r)$ and sweeps $q$: the cosine verdict is convention-dependent on $77.9\%$ of the simplex. This complements the positive-measure statement of Theorem~\ref{thm:reversal}(iii): near strong sinks, convention-dependence is the typical case, not the exception.

\textbf{Transparency note: total $\dA$ versus content $\dAc$.} In the Monte Carlo of Figure~\ref{fig:reversal}, the \emph{total} Aitchison distance and the \emph{content} distance $\dAc$ disagree about the closer head on $0.1\%$ of triples, at every sink level. The two are different, explicitly declared estimands; whenever they disagree the discrepancy is exactly the sink-balance term of Lemma~\ref{lem:pythagoras}, in contrast to the undisclosed convention-dependence of cosine/JS/Euclidean. An analyst who wants sink-inclusive similarity reports $\dA$; one who wants content similarity reports $\dAc$; both are stable under the keep-vs-drop choice.

\begin{figure}[h]
\centering
\includegraphics[width=0.92\textwidth]{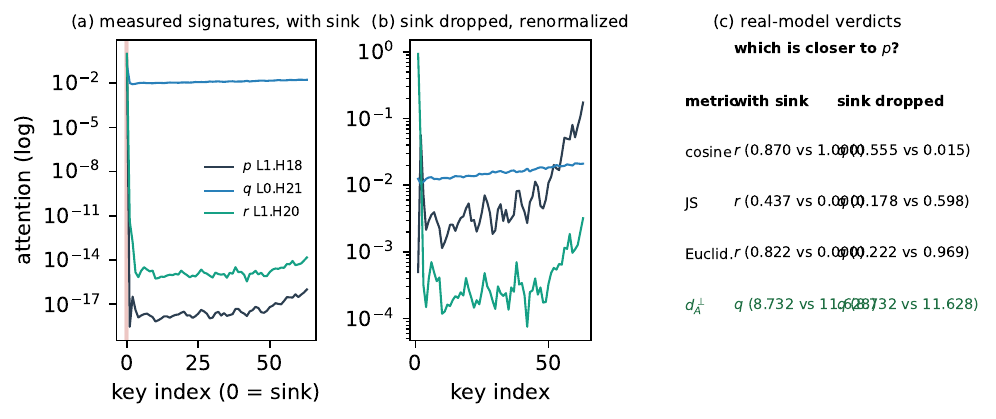}
\caption{\textbf{A measured Llama-3.2-1B head triple realizing Theorem~\ref{thm:reversal}.} Found automatically by \texttt{find\_real\_triple.py} from the saved seed-0 signatures: with the sink, cosine, JS, and Euclidean distance all rank $r$ closer to $p$; after dropping the sink and re-closing, all three rank $q$ closer; the $\dAc$ verdict is identical in both pipelines.}
\label{fig:realtriple}
\end{figure}

\section{Proofs for Section~\ref{sec:incoherence}}
\label{app:incoherence}

\paragraph{Standing convention.}
For every integer $m\geq2$, let
$\Sm{m}:=\{x\in(0,\infty)^m:\mathbf 1^\top x=1\}$.
For $x\in\Sm{m}$, let $\log x$ denote the componentwise logarithm and write
\[
\clr_m(x):=\log x-\frac1m\bigl(\mathbf 1^\top\log x\bigr)\mathbf 1 .
\]
Whenever $\dA$ is applied to two $m$-part compositions, it denotes the unrescaled CLR distance
$\dA(x,y):=\lVert\clr_m(x)-\clr_m(y)\rVert_2$;
the dimension of the CLR map is determined by the arguments of $\dA$, so full-row and sink-removed distances use $\clr_D$ and $\clr_{D-1}$, respectively. This is a metric on $\Sm{m}$: if $\clr_m(x)=\clr_m(y)$, then $\log x-\log y=c\mathbf 1$ for some $c\in\mathbb R$, so $x=e^c y$; since both vectors sum to one, $c=0$ and $x=y$. All logarithms are natural, and $H(x):=-\sum_j x_j\log x_j$ with $0\log0:=0$.

\begin{proof}[Proof of Theorem~\ref{thm:reversal}]
Put
\[
t:=1-s,\qquad
u:=1-s',\qquad
a:=\lVert\pi\rVert_2^2,\qquad
\beta:=\langle\pi,\widetilde\pi\rangle,\qquad
\Delta:=a-\beta.
\]
Because $\widetilde\pi$ is a permutation of $\pi$,
$\lVert\widetilde\pi\rVert_2^2=a$. Cauchy--Schwarz gives
$\beta\leq a$, and equality would force $\widetilde\pi$ to be a
positive scalar multiple of $\pi$. Since both vectors sum to one,
equality would imply $\widetilde\pi=\pi$, contrary to assumption.
Thus
\[
0<a<1,\qquad
0<\beta<a,\qquad
0<\Delta<a,\qquad
\lVert\pi-\widetilde\pi\rVert_2^2=2\Delta.
\]
Here
\[
1-a=2\sum_{i<j}\pi_i\pi_j>0,
\]
because $D-1\geq2$ and every coordinate of $\pi$ is positive.

\emph{Proof of \textup{(i)}.}
The identities for $R_0(p_s),R_0(q),R_0(r_s)$ follow directly from
the definition. Hence $R_0(p_s)=R_0(q)$ and
$R_0(p_s)\ne R_0(r_s)$. Metric definiteness proves the Euclidean and
arbitrary-metric assertions. Jensen--Shannon divergence is zero if
and only if its arguments agree. Finally,
\[
c\bigl(R_0(p_s),R_0(q)\bigr)=1,
\qquad
c\bigl(R_0(p_s),R_0(r_s)\bigr)=\frac{\beta}{a}<1.
\]

\emph{Proof of \textup{(ii)}.}
Direct calculation gives
\begin{align}
\lVert p_s-q\rVert_2^2
  &=(s-s')^2(1+a)=(u-t)^2(1+a),
  \label{eq:rev-euclid-pq}\\
\lVert p_s-r_s\rVert_2^2
  &=2\Delta t^2,
  \label{eq:rev-euclid-pr}\\
c(p_s,r_s)
  &=\frac{(1-t)^2+\beta t^2}{(1-t)^2+a t^2}
   =1-\frac{\Delta t^2}{(1-t)^2+a t^2}.
  \label{eq:rev-cos-pr}
\end{align}
Also,
\begin{equation}
c(p_s,q)
=\frac{(1-t)s'+tua}
{\sqrt{(1-t)^2+at^2}\sqrt{(s')^2+au^2}}.
\label{eq:rev-cos-pq}
\end{equation}

For Jensen--Shannon divergence,
\[
m_{pq}:=\frac{p_s+q}{2}
=\left(\frac{s+s'}2,\frac{t+u}{2}\pi\right).
\]
Consequently,
\begin{align*}
\KL(p_s\Vert m_{pq})
&=s\log\frac{2s}{s+s'}+t\log\frac{2t}{t+u},\\
\KL(q\Vert m_{pq})
&=s'\log\frac{2s'}{s+s'}+u\log\frac{2u}{t+u}.
\end{align*}
Therefore
\begin{equation}
\JS(p_s,q)=\JS_{\rm Ber}(s,s'),
\label{eq:rev-js-pq}
\end{equation}
where
\[
\JS_{\rm Ber}(v,w)
:=\JS\bigl((v,1-v),(w,1-w)\bigr).
\]
Similarly,
\[
m_{pr}:=\frac{p_s+r_s}{2}
=\left(s,t\frac{\pi+\widetilde\pi}{2}\right),
\]
and direct substitution gives
\begin{equation}
\JS(p_s,r_s)=t\,\JS(\pi,\widetilde\pi).
\label{eq:rev-js-pr}
\end{equation}

Set
\[
\gamma:=\frac{s'}{\sqrt{(s')^2+a(1-s')^2}}\in(0,1),
\qquad
\eta:=1-\gamma>0,
\]
and define
\[
\varepsilon_\ast
:=\min\left\{
\frac{u}{2},
\frac{u^2}{8\log 2},
\frac{\eta}{4\sqrt{1+a}},
\sqrt{\frac{\eta}{8a}}
\right\},
\qquad
s_\ast:=1-\varepsilon_\ast.
\]
Every entry in the minimum is strictly positive. Moreover,
$\varepsilon_\ast\leq u/2<u$, so $s_\ast\in(s',1)$.
This threshold depends on $s'$ and on $\pi$ only through
$a=\lVert\pi\rVert_2^2$; it is therefore uniform over all admissible
permutations.

Suppose $s>s_\ast$, so $0<t<\varepsilon_\ast$. Use
\[
\operatorname{TV}(P,Q):=\tfrac12\lVert P-Q\rVert_1.
\]
If $P,Q$ are Bernoulli laws with parameters $s,s'$ and
$M=(P+Q)/2$, then
\[
\operatorname{TV}(P,M)=\operatorname{TV}(Q,M)
=\frac{|s-s'|}{2}.
\]
Pinsker's inequality applied to both terms gives
\[
\JS_{\rm Ber}(s,s')
\geq\frac{(s-s')^2}{2}
=\frac{(u-t)^2}{2}.
\]
Furthermore, $\JS(\pi,\widetilde\pi)\leq\log 2$. Indeed, if
$m=(\pi+\widetilde\pi)/2$, then
$m_i\geq\pi_i/2$ and $m_i\geq\widetilde\pi_i/2$, so each KL divergence
in the definition of $\JS$ is at most $\log 2$. Consequently,
\[
\JS(p_s,q)
\geq\frac{(u-t)^2}{2}
>\frac{u^2}{8}
>t\log 2
\geq\JS(p_s,r_s).
\]

The Euclidean identities, together with $\Delta<a<1$ and $t<u/2$,
give
\begin{align*}
\lVert p_s-r_s\rVert_2
&=t\sqrt{2\Delta}
 <t\sqrt{2a}
 <\frac{u}{2}\sqrt{2a}
 <\frac{u}{2}\sqrt{1+a}\\
&<(u-t)\sqrt{1+a}
 =\lVert p_s-q\rVert_2.
\end{align*}

It remains to compare cosine similarities. Let
$e_0=(1,0,\ldots,0)$ and $\widehat x=x/\lVert x\rVert_2$ for
$x\ne0$. Since $t<u/2<1/2$,
\[
(1-t)^2+at^2>\frac14,
\]
and equation~\eqref{eq:rev-cos-pr} gives
\[
c(p_s,r_s)>1-4at^2>1-\frac{\eta}{2}.
\]
Moreover, $\gamma=\langle e_0,\widehat q\rangle$ and
\begin{align*}
\bigl|c(p_s,q)-\gamma\bigr|
&=\bigl|\langle\widehat p_s-e_0,\widehat q\rangle\bigr|
 \leq\lVert\widehat p_s-e_0\rVert_2\\
&\leq2\lVert p_s-e_0\rVert_2
 =2t\sqrt{1+a}
 <\frac{\eta}{2}.
\end{align*}
Here
\[
\begin{aligned}
\lVert\widehat p_s-e_0\rVert_2
&\leq\lVert\widehat p_s-p_s\rVert_2+\lVert p_s-e_0\rVert_2\\
&=|1-\lVert p_s\rVert_2|+\lVert p_s-e_0\rVert_2\\
&\leq2\lVert p_s-e_0\rVert_2,
\end{aligned}
\]
where the last inequality is the reverse triangle inequality.
Since $1-\eta/2=\gamma+\eta/2$,
\[
c(p_s,r_s)>1-\frac{\eta}{2}
=\gamma+\frac{\eta}{2}>c(p_s,q).
\]
This proves all three with-sink inequalities. Part~\textup{(i)}
then gives the simultaneous reversals.

The argument above applies verbatim to every permutation $\tau$ with $\tau\pi\ne\pi$ (replace $\widetilde\pi$ by $\tau\pi$); since there are finitely many such $\tau$, taking $s_\ast$ to be the maximum of the per-$\tau$ thresholds yields the threshold of (ii), uniform over $\tau$.

\emph{Proof of \textup{(iii)}.}
For $z=(x,y,w)\in(\Sm{D})^3$, define
\begin{align*}
F_1(z)&:=c(x,w)-c(x,y),\\
F_2(z)&:=c\bigl(R_0(x),R_0(y)\bigr)
        -c\bigl(R_0(x),R_0(w)\bigr),\\
F_3(z)&:=\JS(x,y)-\JS(x,w),\\
F_4(z)&:=\JS\bigl(R_0(x),R_0(w)\bigr)
        -\JS\bigl(R_0(x),R_0(y)\bigr),\\
F_5(z)&:=\lVert x-y\rVert_2^2-\lVert x-w\rVert_2^2,\\
F_6(z)&:=\lVert R_0(x)-R_0(w)\rVert_2^2
        -\lVert R_0(x)-R_0(y)\rVert_2^2.
\end{align*}
Then
\[
\mathcal R_D=\bigcap_{j=1}^6F_j^{-1}\bigl((0,\infty)\bigr).
\]
Squaring a nonnegative distance does not change a strict comparison.
Moreover,
\[
1-x_0=\sum_{i=1}^{D-1}x_i>0
\]
on $\Sm{D}$, so $R_0$ is smooth there. Cosine similarity,
Jensen--Shannon divergence, and squared Euclidean distance are
continuous on the relevant positive simplices. Thus every $F_j$ is
continuous. Choosing any $s\in(s_\ast,1)$,
parts~\textup{(i)}--\textup{(ii)} give
\[
(p_s,q,r_s)\in\mathcal R_D,
\]
so $\mathcal R_D$ is nonempty.

Let
\[
\mathcal A_D
:=\left\{x\in\mathbb R^D:
\sum_{i=0}^{D-1}x_i=1\right\}.
\]
Because
\[
\Sm{D}=\mathcal A_D\cap(0,\infty)^D,
\]
the set $(\Sm{D})^3$ is open in the $3(D-1)$-dimensional affine
space $\mathcal A_D^3$. Hence $\mathcal R_D$ is open in
$\mathcal A_D^3$. Every nonempty open subset of that affine space
contains a nondegenerate relative Euclidean ball and therefore has
positive intrinsic $3(D-1)$-dimensional Lebesgue measure. This proves
part~\textup{(iii)}.

\emph{Proof of \textup{(iv)}.}
Because $\dA$ is a metric and $R_0$ is a map, its pullback $\dAc$ is
nonnegative and symmetric and satisfies the triangle inequality.
Moreover,
\[
\dAc(x,y)=0
\quad\Longleftrightarrow\quad
R_0(x)=R_0(y)
\quad\Longleftrightarrow\quad
x\sim_0y.
\]
Therefore $\dAc$ is a pseudometric on $\Sm{D}$, and
\[
d_{A,0}([x],[y]):=\dAc(x,y)
\]
defines a genuine metric on $\Sm{D}/{\sim_0}$. It is well defined
because $x\sim_0x'$ and $y\sim_0y'$ imply
$R_0(x)=R_0(x')$ and $R_0(y)=R_0(y')$.

The identities for $\dAc$ follow from its definition and
part~\textup{(i)}. Moreover, $B>0$ because $\dA$ is a metric and
$\pi\ne\widetilde\pi$.

Put
\[
L_s:=\log\!\left(\frac{s(1-s')}{s'(1-s)}\right)>0.
\]
Direct centering of the coordinatewise log-ratio gives
\[
\clr_D(p_s)-\clr_D(q)
=\frac{L_s}{D}(D-1,-1,\ldots,-1).
\]
Therefore
\[
\dA(p_s,q)
=\left\lVert\clr_D(p_s)-\clr_D(q)\right\rVert_2
=\alpha_D L_s
=\alpha_D\bigl(\operatorname{logit}(s)
-\operatorname{logit}(s')\bigr).
\]

Next set
\[
h_i:=\log\frac{\pi_i}{\widetilde\pi_i}.
\]
Because $\widetilde\pi$ is a permutation of $\pi$,
\[
\sum_{i=1}^{D-1}h_i
=\log\!\left(
\frac{\prod_{i=1}^{D-1}\pi_i}
     {\prod_{i=1}^{D-1}\widetilde\pi_i}
\right)=0.
\]
Consequently,
\[
\clr_D(p_s)-\clr_D(r_s)=(0,h_1,\ldots,h_{D-1}),
\]
while
\[
\clr_{D-1}(\pi)-\clr_{D-1}(\widetilde\pi)
=(h_1,\ldots,h_{D-1}).
\]
Thus
\[
\dA(p_s,r_s)=\dA(\pi,\widetilde\pi)=B.
\]

Let
\[
E:=\exp(B/\alpha_D)>1.
\]
Then
\[
s_{\mathrm A}=\frac{s'E}{1-s'+s'E},
\]
and
\[
s_{\mathrm A}-s'
=\frac{s'(1-s')(E-1)}{1-s'+s'E}>0,
\qquad
1-s_{\mathrm A}
=\frac{1-s'}{1-s'+s'E}>0.
\]
Hence $s_{\mathrm A}\in(s',1)$. Moreover,
\[
L_{s_{\mathrm A}}=\log E=\frac{B}{\alpha_D},
\qquad
\frac{d}{ds}L_s=\frac{1}{s(1-s)}>0.
\]
Therefore $\dA(p_s,q)<B$ exactly when $s<s_{\mathrm A}$, with
equality at $s=s_{\mathrm A}$ and the reverse inequality when
$s>s_{\mathrm A}$.

In the special case $D=3$, for the Table~\ref{tab:witness} values,
\[
(h_1,h_2)=(\log4,-\log4),
\qquad
L_s=\log\frac{(9/10)(3/10)}{(7/10)(1/10)}
=\log\frac{27}{7}.
\]
Hence
\[
B=\sqrt2\log4,
\qquad
\dA(p_s,q)=\sqrt{\frac23}\log\frac{27}{7}.
\]
Finally,
\[
\sqrt{\frac23}\log\frac{27}{7}
<\sqrt{\frac23}\log4
<\sqrt2\log4,
\]
which completes the proof.
\end{proof}

\begin{aremark}[Scope of the Aitchison conclusion]
On the full simplex,
\[
\dAc(x,y)=0
\quad\Longleftrightarrow\quad
R_0(x)=R_0(y).
\]
The map $R_0$ is surjective because
$R_0\bigl((1/2,\pi/2)\bigr)=\pi$ for every $\pi\in\Sm{D-1}$.
Therefore
\[
\Phi:\Sm{D}/{\sim_0}\longrightarrow\Sm{D-1},
\qquad
\Phi([x]):=R_0(x),
\]
is a well-defined bijection. By the definition of $\dAc$, it preserves
distances, so the metric quotient is isometric to
$(\Sm{D-1},\dA)$.

Full $\dA$ is a genuine metric, but its ranking is not generally
invariant under sink deletion. Indeed,
\[
\dA(p_s,q)\longrightarrow\infty
\quad\text{as }s\uparrow1,
\qquad
\dA(p_s,r_s)=B.
\]
Thus the unqualified claim that full $\dA$ always ranks $q$ closer
in both views is false. It holds for the Table~\ref{tab:witness}
example because $s<s_{\mathrm A}$ there.
\end{aremark}

\begin{proof}[Proof of Proposition~\ref{prop:entropy}]
We first establish the entropy identity for the sink--content
factorization. Let
\[
r=\bigl(s,(1-s)\pi\bigr),
\qquad
s\in(0,1),\quad \pi\in\Sm{D-1},
\]
and define the binary entropy
\[
H_{\mathrm b}(s):=-s\log s-(1-s)\log(1-s).
\]
Since $\sum_{j=1}^{D-1}\pi_j=1$, direct expansion gives
\begin{align}
H(r)
&=-s\log s
  -\sum_{j=1}^{D-1}(1-s)\pi_j
       \log\!\bigl((1-s)\pi_j\bigr) \notag\\
&=-s\log s-(1-s)\log(1-s)
  -(1-s)\sum_{j=1}^{D-1}\pi_j\log\pi_j \notag\\
&=H_{\mathrm b}(s)+(1-s)H(\pi).
\label{eq:entropy-inversion-chain-rule}
\end{align}

Fix $D\geq3$, and put $m:=D-1\geq2$. Define
\[
\pi^{(p)}
:=\left(\frac1m,\ldots,\frac1m\right)\in\Sm{m},
\qquad
H\bigl(\pi^{(p)}\bigr)=\log m.
\]
For $0<\varepsilon<1/m$, define
\[
\pi^{(q)}_\varepsilon
:=\bigl(1-(m-1)\varepsilon,
        \varepsilon,\ldots,\varepsilon\bigr)
\in\Sm{m}
\]
and set
$\eta_\varepsilon:=H\bigl(\pi^{(q)}_\varepsilon\bigr)$.
The vector $\pi^{(q)}_\varepsilon$ has $m$ strictly positive
coordinates that sum to one, and
\[
\eta_\varepsilon
=-\bigl(1-(m-1)\varepsilon\bigr)
     \log\bigl(1-(m-1)\varepsilon\bigr)
  -(m-1)\varepsilon\log\varepsilon
\longrightarrow0
\]
as $\varepsilon\downarrow0$. Indeed, the first term tends to zero by
continuity of $-x\log x$ at $x=1$, while the second tends to zero by
$x\log x\to0$ as $x\downarrow0$. Because $m\geq2$ and every
coordinate of $\pi^{(q)}_\varepsilon$ lies in $(0,1)$, we also have
$\eta_\varepsilon>0$. We may therefore fix
$\varepsilon\in(0,1/m)$ such that
$0<\eta_\varepsilon<\log2$.

For $s\in(0,1)$, set
\[
p_s:=\bigl(s,(1-s)\pi^{(p)}\bigr),
\qquad
q:=\left(\frac12,\frac12\pi^{(q)}_\varepsilon\right).
\]
Both vectors belong to $\Sm{D}$. By
\eqref{eq:entropy-inversion-chain-rule},
\[
H(p_s)=H_{\mathrm b}(s)+(1-s)\log m\longrightarrow0
\qquad(s\uparrow1),
\]
whereas
\[
H(q)
=H_{\mathrm b}\!\left(\frac12\right)+\frac12\eta_\varepsilon
=\log2+\frac12\eta_\varepsilon>0.
\]
Since $H(q)>0$ is fixed, there exists $s_0\in(0,1)$, sufficiently
close to one, such that
$H(p_{s_0})<H(q)$.
Set $p:=p_{s_0}$. Removing the sink and re-closing gives
$p'=\pi^{(p)}$ and $q'=\pi^{(q)}_\varepsilon$,
and hence
\[
H(p')=\log m\geq\log2
>\eta_\varepsilon=H(q').
\]
This proves the asserted inversion for every $D\geq3$.

We now verify the displayed $D=3$ witness by exact inequalities. Direct
re-closure gives
\[
P'=\left(\frac12,\frac12\right),
\qquad
Q'=\left(\frac{999}{1000},\frac1{1000}\right).
\]
For $P$,
\[
H(P)
=\frac{19}{20}\log\frac{20}{19}+\frac1{20}\log40,
\]
so
\[
20H(P)
=\log\!\left[40\left(\frac{20}{19}\right)^{19}\right].
\]
Since
\[
\left(1+\frac1{19}\right)^{19}<e<3,
\]
we obtain
\[
40\left(\frac{20}{19}\right)^{19}<120<2^{20}.
\]
Taking logarithms and dividing by $20$ yields
$H(P)<\log2$.
Applying \eqref{eq:entropy-inversion-chain-rule} to $Q$ gives
\[
H(Q)
=\log2+\frac12H_{\mathrm b}\!\left(\frac1{1000}\right)
>\log2.
\]
Thus $H(P)<H(Q)$ exactly.

Next,
\[
H_{\mathrm b}'(t)=\log\frac{1-t}{t},
\qquad
H_{\mathrm b}''(t)=-\frac1{t(1-t)}<0
\qquad(0<t<1).
\]
Hence $H_{\mathrm b}$ is strictly concave and has its unique maximum
$\log2$ at $t=1/2$. Since $1/1000\neq1/2$,
\[
H(Q')
=H_{\mathrm b}\!\left(\frac1{1000}\right)
<\log2
=H(P').
\]

Finally, for a binary composition $u=(a,b)$,
\[
\clr_2(u)
=\left(\frac12\log\frac ab,-\frac12\log\frac ab\right).
\]
It follows that
\[
\clr_2(P')=(0,0),
\qquad
\clr_2(Q')
=\left(\frac12\log999,-\frac12\log999\right),
\]
and therefore
\[
\CAc(P)=0,
\qquad
\CAc(Q)=\frac{\log999}{\sqrt2}.
\]
This completes the verification of the witness and the concentration values.
\end{proof}

\begin{aremark}[Sharpness and scope of the entropy inversion]
The restriction $D\geq3$ is necessary. If $D=2$, every sink-dropped
composition equals $(1)$, and therefore $H(p')=H(q')=0$ for all
$p,q\in\Sm{2}$.
Moreover, an entropy inversion necessarily uses unequal sink masses. Indeed,
if $p=\bigl(s,(1-s)\pi_p\bigr)$ and $q=\bigl(s,(1-s)\pi_q\bigr)$,
then \eqref{eq:entropy-inversion-chain-rule} implies
\[
H(p)-H(q)
=(1-s)\bigl(H(p')-H(q')\bigr).
\]
Because $1-s>0$, the entropy differences have the same sign.
Finally, for a fixed normalized content composition $\pi$,
\[
\CAc\bigl(s,(1-s)\pi\bigr)
=\lVert\clr_{D-1}(\pi)\rVert_2,
\qquad 0<s<1.
\]
Thus $\CAc$ is independent of the mass assigned to the fixed
designated sink when $\pi$ is held fixed, and it equals the CLR norm
computed directly from the re-closed content composition. This does not
assert invariance under changing \emph{which} coordinate is designated as the
sink, nor does it identify $\CAc$ with the CLR norm of the full composition.
\end{aremark}

\section{Proofs for Section~\ref{sec:toolkit}}
\label{app:characterization}

\begin{table}[h]
\centering
\small
\caption{\textbf{Invariance matrix.} Perm.\ $=$ permutation invariance; Perturb.\ $=$ invariance to a shared logit bias (Prop.~\ref{prop:invariances}(i)); Temp.\ $=$ global-temperature ranking preservation; Sink $=$ coherent under keep-vs-drop of the sink column; Dom.\ $=$ subcompositional dominance. $\dagger$: $\dA$ is $1/\tau$-equivariant, so rankings are exact; $\rhoA$ is invariant even to per-head temperatures. $\ddagger$: $\dA$ decomposes exactly (Lemma~\ref{lem:pythagoras}); $\dAc,\CAc$ are exactly sink-invariant. ``n/a'': not applicable.}
\label{tab:invariance}
\begin{tabular}{lccccc}
\toprule
 & Perm. & Perturb.\ ($\oplus$) & Temp. & Sink & Dom. \\
\midrule
Euclidean & \cmark & \xmark & \xmark & \xmark & \xmark \\
cosine & \cmark & \xmark & \xmark & \xmark & \xmark \\
JS & \cmark & \xmark & \xmark & \xmark & \xmark \\
Shannon $H$ & \cmark & \xmark & \xmark & \xmark & n/a \\
\midrule
$\dA$ & \cmark & \cmark & \cmark$^\dagger$ & \cmark$^\ddagger$ & \cmark \\
$\rhoA$ (ILR-cosine) & \cmark & \xmark & \cmark$^\dagger$ & n/a & n/a \\
$\dAc$, $\CAc$ & \cmark & \cmark & \cmark$^\dagger$ & \cmark$^\ddagger$ & \cmark \\
\bottomrule
\end{tabular}
\end{table}

\paragraph{Standing notation.}
All logarithms are natural. For every positive vector
$x=(x_0,\ldots,x_{m-1})$, write $\clo(x):=x/\sum_i x_i$. For every integer
$m\geq1$, let $\Sm{m}$ be the open $m$-part simplex, $e_m:=\frac1m\mathbf 1_m$,
\[
  g(x_1,\ldots,x_m):=\Bigl(\prod_{i=1}^m x_i\Bigr)^{1/m},
  \qquad
  \clr_m(x):=\bigl(\log x_i-m^{-1}\textstyle\sum_j\log x_j\bigr)_i,
\]
and throughout $\dA(x,y):=\lVert\clr_m(x)-\clr_m(y)\rVert_2$ for
$x,y\in\Sm{m}$, with no dimension-dependent rescaling. For $D\geq2$ and
$p,q\in\Sm{D}$ with designated sink coordinate $0$,
\[
  p':=\clo(p_1,\ldots,p_{D-1}),
  \qquad
  b(p):=\sqrt{\tfrac{D-1}{D}}\log\tfrac{p_0}{g(p_1,\ldots,p_{D-1})},
\]
\[
  \dAc(p,q):=\dA(p',q'),
  \qquad
  \CAc(p):=\lVert\clr_{D-1}(p')\rVert_2 .
\]
Perturbation, powering, and the Aitchison center are
$x\oplus y:=\clo((x_iy_i)_i)$, $\alpha\odot x:=\clo((x_i^\alpha)_i)$, and
$\cen(x_1,\ldots,x_n):=\clo(\exp\{\frac1n\sum_t\log x_t\})$, all componentwise.
When $x\neq e_m$ and $y\neq e_m$,
$\rhoA(x,y):=\langle\clr_m(x),\clr_m(y)\rangle/(\lVert\clr_m(x)\rVert_2\lVert\clr_m(y)\rVert_2)$,
and for nonzero Euclidean vectors $\cos(x,y):=x^\top y/(\lVert x\rVert_2\lVert y\rVert_2)$.
For $m=1$, set $\Sm{1}=\{(1)\}$ and regard its CLR space and ILR coordinate
vector as zero-dimensional, so its CLR vector, norm, and all distances are zero.

\begin{proof}[Proof of Lemma~\ref{lem:pythagoras}]
For $m\geq1$, let
\[
  \mathcal H_m
  :=\left\{x\in\mathbb R^m:\mathbf1_m^{\top}x=0\right\}.
\]
The map $\clr_m$ takes values in $\mathcal H_m$.  Define
\[
  u_b
  :=\sqrt{\frac{D-1}{D}}
    \left(1,-\frac1{D-1},\ldots,-\frac1{D-1}\right)
  \in\mathcal H_D
\]
and
\[
  W:=\{(0,z):z\in\mathcal H_{D-1}\}\subset\mathcal H_D,
  \qquad
  J:\mathcal H_{D-1}\longrightarrow W,
  \qquad Jz=(0,z).
\]
The map $J$ is a linear isometry.  Moreover,
\[
  \lVert u_b\rVert_2^2
  =\frac{D-1}{D}
    \left(1+\frac{D-1}{(D-1)^2}\right)=1,
\]
and for every $z\in\mathcal H_{D-1}$,
\[
  \langle u_b,Jz\rangle
  =-\frac{1}{\sqrt{D(D-1)}}\sum_{i=1}^{D-1}z_i=0.
\]
Since $\dim W=D-2$ and $\dim\mathcal H_D=D-1$, it follows that
\begin{equation}
  \mathcal H_D
  =\operatorname{span}\{u_b\}\oplus W,
  \qquad \operatorname{span}\{u_b\}\perp W.
  \label{eq:sink-orthogonal-sum}
\end{equation}

Write uniquely
\[
  p=(s,(1-s)\pi),
  \qquad s=p_0\in(0,1),
  \qquad \pi=p'\in\Sm{D-1}.
\]
Because the entries of $u_b$ sum to zero, the common centering term
in $\clr_D(p)$ vanishes in its inner product with $u_b$.  Hence
\begin{align}
  \langle\clr_D(p),u_b\rangle
  &=\sqrt{\frac{D-1}{D}}
    \left(\log p_0-
      \frac1{D-1}\sum_{i=1}^{D-1}\log p_i\right)\notag\\
  &=\sqrt{\frac{D-1}{D}}
    \log\frac{p_0}{g(p_1,\ldots,p_{D-1})}
   =b(p).
  \label{eq:sink-balance-coordinate}
\end{align}

Let $P_W$ denote Euclidean orthogonal projection onto $W$.  For
$x\in\mathcal H_D$,
\[
  (P_Wx)_0=0,
  \qquad
  (P_Wx)_i=x_i-
    \frac1{D-1}\sum_{j=1}^{D-1}x_j,
  \quad 1\leq i\leq D-1.
\]
Applying this to $x=\clr_D(p)$ gives, for $i\geq1$,
\begin{align*}
  (P_W\clr_D(p))_i
  &=\log p_i-
    \frac1{D-1}\sum_{j=1}^{D-1}\log p_j\\
  &=\log\pi_i-
    \frac1{D-1}\sum_{j=1}^{D-1}\log\pi_j
   =(\clr_{D-1}(\pi))_i.
\end{align*}
Therefore
\begin{equation}
  P_W\clr_D(p)=J\clr_{D-1}(p').
  \label{eq:sink-projection}
\end{equation}
Equations \eqref{eq:sink-orthogonal-sum}--\eqref{eq:sink-projection}
yield the exact orthogonal identity
\begin{equation}
  \boxed{\;
  \clr_D(p)=b(p)u_b+J\clr_{D-1}(p').
  \;}
  \label{eq:sink-clr-decomposition}
\end{equation}

Apply \eqref{eq:sink-clr-decomposition} to $p$ and $q$, subtract,
and use the orthogonality in \eqref{eq:sink-orthogonal-sum} and the
isometry of $J$.  Then
\begin{align*}
  \dA(p,q)^2
  &=\lVert\clr_D(p)-\clr_D(q)\rVert_2^2\\
  &=\bigl(b(p)-b(q)\bigr)^2
    +\lVert\clr_{D-1}(p')-\clr_{D-1}(q')\rVert_2^2\\
  &=\bigl(b(p)-b(q)\bigr)^2+\dAc(p,q)^2,
\end{align*}
which proves the Pythagorean identity.

To verify the ILR assertion, let $w_2,\ldots,w_{D-1}$ be any
orthonormal basis of $W$; the list is empty when $D=2$.  Then
$(u_b,w_2,\ldots,w_{D-1})$ is an orthonormal contrast basis of
$\mathcal H_D$ and hence defines an ILR map.  Its first coordinate is
$b(p)$ by \eqref{eq:sink-balance-coordinate}.  The vectors
$J^{-1}w_2,\ldots,J^{-1}w_{D-1}$ form an orthonormal basis of
$\mathcal H_{D-1}$, and \eqref{eq:sink-projection} gives
\[
  \langle\clr_D(p),w_j\rangle
  =\langle\clr_{D-1}(p'),J^{-1}w_j\rangle,
  \qquad 2\leq j\leq D-1.
\]
Thus the remaining coordinates are precisely an ILR coordinate
system for $p'$.  In particular,
\[
  \dAc(p,q)
  =\bigl\lVert P_W(\clr_D(p)-\clr_D(q))\bigr\rVert_2,
  \qquad
  \CAc(p)=\lVert P_W\clr_D(p)\rVert_2.
\]
The first expression depends only on $p'$ and $q'$, and the second
only on $p'$; neither depends on a sink mass.  Equation
\eqref{eq:sink-projection} also makes the projection
claim precise: it is a statement in CLR space under $J$, not a
Euclidean projection of probability vectors.
\end{proof}

\paragraph{Exact evaluation of the Table~\ref{tab:witness} witness.}
For
\[
  p=\left(\frac9{10},\frac2{25},\frac1{50}\right),\qquad
  q=\left(\frac7{10},\frac6{25},\frac3{50}\right),\qquad
  r=\left(\frac9{10},\frac1{50},\frac2{25}\right),
\]
interpreting the decimal entries in Table~\ref{tab:witness} as these exact
rational numbers, one has
\[
  p'=q'=\left(\frac45,\frac15\right),
  \qquad
  r'=\left(\frac15,\frac45\right),
\]
and
\begin{align*}
  b(p)=b(r)
  &=\sqrt{\frac23}\log\frac{45}{2}
    \approx2.542174604636,\\
  b(q)
  &=\sqrt{\frac23}\log\frac{35}{6}
    \approx1.439964055745.
\end{align*}
Consequently, the pure-sink and pure-content identities are
\begin{align*}
  \dAc(p,q)&=0,\\
  \dA(p,q)^2
  &=\frac23\log^2\frac{27}{7}
    \approx1.214868094088,\\
  b(p)-b(r)&=0,\\
  \dA(p,r)^2
  &=\dA(p',r')^2
    =2\log^2 4
    =8\log^2 2
    \approx3.843624111346,
\end{align*}
with
\[
  \dA(p,r)=\sqrt2\log4\approx1.960516286937.
\]
Thus rounded decimal displays must use $\approx$; the exact equalities
are the logarithmic identities above.

\subsection*{Full statement of Proposition~\ref{prop:invariances}}
Let $D\geq2$, let $\ell,\ell'\in\mathbb R^D$ be finite logit rows, and put
$p=\clo(e^\ell)$, $q=\clo(e^{\ell'})$, where exponentials, products, and
powers are componentwise.
\begin{enumerate}
\item[\rm(i)]
\emph{Shared finite logit bias equals perturbation.}
For every finite $u\in\mathbb R^D$, with $c_u=\clo(e^u)$,
\[
  \clo(e^{\ell+u})=p\oplus c_u,
  \qquad
  \clo(e^{\ell'+u})=q\oplus c_u,
\]
and
$\dA(p\oplus c_u,q\oplus c_u)=\dA(p,q)$.
For a fixed designated sink, $\dAc$ has the same shared-perturbation
invariance.  If $D\geq3$, $\CAc$ is not perturbation-invariant in
general; when $D=2$, it is identically zero.  Euclidean distance,
Jensen--Shannon divergence, and cosine similarity likewise fail to
be invariant under every shared perturbation.  The directional
similarity $\rhoA$ is also not perturbation-invariant in general, even
when it is defined both before and after perturbation.
\item[\rm(ii)]
\emph{Temperature equals powering.}
For every finite temperature $\tau>0$,
$\clo(e^{\ell/\tau})=\frac1\tau\odot p$ and
$\clo(e^{\ell'/\tau})=\frac1\tau\odot q$, and hence
\[
  \dA\left(\tfrac1\tau\odot p,\tfrac1\tau\odot q\right)
  =\tfrac1\tau\dA(p,q),
  \qquad
  \dAc\left(\tfrac1\tau\odot p,\tfrac1\tau\odot q\right)
  =\tfrac1\tau\dAc(p,q),
  \qquad
  \CAc\left(\tfrac1\tau\odot p\right)
  =\tfrac1\tau\CAc(p).
\]
Thus a single common temperature applied to every member of a family
preserves all weak and strict rankings among the resulting distances
or concentration values; these are equivariances, not value
invariances.
For a nonempty finite collection $P=(p_1,\ldots,p_n)$, define
$\operatorname{Var}_A(P):=\frac1n\sum_{t}\dA(p_t,\cen(P))^2$. Then
\[
  \operatorname{Var}_A
  \left(\tfrac1\tau\odot p_1,\ldots,\tfrac1\tau\odot p_n\right)
  =\tfrac1{\tau^2}\operatorname{Var}_A(P),
\]
where the center on the left is recomputed after powering.
Whenever $p\neq e_D$ and $q\neq e_D$, so that $\rhoA(p,q)$ is defined,
separate positive temperatures cancel: for every $\tau_h,\tau_{h'}>0$,
\[
  \rhoA\left(\tfrac1{\tau_h}\odot p,\tfrac1{\tau_{h'}}\odot q\right)
  =\rhoA(p,q).
\]
More formally, for $h\in\{1,2\}$ let $n_h\geq1$ and
$p_{ht}\in\Sm{D}$, put $P_h:=(p_{h1},\ldots,p_{hn_h})$ and
$s_h:=\cen(P_h)\neq e_D$, and give every row in head $h$ the same
exponent $\alpha_h>0$.  If
$s_h^{(\alpha_h)}:=\cen(\alpha_h\odot p_{h1},\ldots,\alpha_h\odot p_{hn_h})$,
then
\[
  s_h^{(\alpha_h)}=\alpha_h\odot s_h,
  \qquad
  \rhoA\bigl(s_1^{(\alpha_1)},s_2^{(\alpha_2)}\bigr)
  =\rhoA(s_1,s_2).
\]
Thus one positive temperature per head cancels in $\rhoA$ when the
rows have a common support and the two resulting signatures are
nonuniform.  This conclusion can fail for row-dependent temperatures.
Under unequal head temperatures or row-dependent temperatures, the
separate-scaling cancellation asserted here is specific to $\rhoA$.
No general invariance or ranking-preservation claim is made for
$\dA$, $\dAc$, $\CAc$, or Aitchison variance.
\item[\rm(iii)]
\emph{Sink coherence.}
For a fixed designated sink, write
$p=(s,(1-s)\pi)$ and $q=(t,(1-t)\eta)$ with $\pi,\eta\in\Sm{D-1}$.
Then
$\dAc(p,q)=\dA(\pi,\eta)$ and
$\CAc(p)=\lVert\clr_{D-1}(\pi)\rVert_2$.
These values are independent of $s$ and $t$ and are identical whether
computed in the balance-orthogonal coordinates of the full compositions
or after dropping the sink and re-closing.  Full $\dA$ is not itself
sink-invariant; Lemma~\ref{lem:pythagoras} gives its exact
sink--content decomposition.
\item[\rm(iv)]
\emph{Subcomposition and masking.}
Let $S\subseteq\{0,\ldots,D-1\}$ with $|S|\geq2$, and define
$p^{(S)}=\clo((p_i)_{i\in S})$ and $q^{(S)}=\clo((q_i)_{i\in S})$.
After canonically embedding the $|S|$-part CLR space into
$\mathcal H_D$, subcomposition is an orthogonal projection and
\[
  \dA\bigl(p^{(S)},q^{(S)}\bigr)\leq\dA(p,q).
\]
Coordinate deletion is literal in an $S$-adapted ILR basis; in an
arbitrary fixed ILR basis the same map is generally a non-diagonal
orthogonal projection.
Outside the standing finite-logit hypothesis, let $\widetilde p$ and
$\widetilde q$ be hard-masked probability rows in the closed simplex
with positive supports $T_{\widetilde p}$ and $T_{\widetilde q}$.
If $S=T_{\widetilde p}\cap T_{\widetilde q}$ has at least two parts,
$\dA(\widetilde p^{(S)},\widetilde q^{(S)})$ (of the re-closed
restrictions) is a well-defined within-common-support comparison.  If
the two supports differ, this comparison discards the support mismatch.
Since structural-zero rows do not lie in the open simplex, their full
Aitchison distance is undefined, and no contraction inequality relative
to that nonexistent full distance is asserted.  If positive pre-mask
rows in $\Sm{D}$ are available and the same mask retains $S$, the
preceding contraction inequality does apply to those pre-mask rows and
their $S$-subcompositions.
\item[\rm(v)]
\emph{Dominance.}
Part \textup{(iv)} shows that $\dA$ is subcompositionally dominant.
For every $D\geq3$, Euclidean distance, Jensen--Shannon divergence
$\JS$ (not its square root), and cosine dissimilarity $1-\cos$ are
not subcompositionally dominant.  For $D=2$, there is no nontrivial
proper subcomposition containing at least two parts.
\end{enumerate}

\begin{proof}[Proof of Proposition~\ref{prop:invariances}]
For \textup{(i)}, closure is unchanged by multiplication by a common
positive scalar.  Therefore
\begin{align*}
  \clo(e^{\ell+u})
  &=\clo\bigl((e^{\ell_i}e^{u_i})_{i=0}^{D-1}\bigr)\\
  &=\clo\bigl((p_i(c_u)_i)_{i=0}^{D-1}\bigr)
   =p\oplus c_u,
\end{align*}
and similarly for $q$.  Since
$\clr(a\oplus b)=\clr(a)+\clr(b)$,
we obtain
\begin{align*}
  \dA(p\oplus c_u,q\oplus c_u)
  &=\bigl\lVert
      \clr(p)+\clr(c_u)-\clr(q)-\clr(c_u)
    \bigr\rVert_2\\
  &=\dA(p,q).
\end{align*}

For the content distance, let $c_u'$ be the sink-dropped and re-closed
composition.  Directly from the definition of perturbation,
\[
  (p\oplus c_u)'=p'\oplus c_u',
  \qquad
  (q\oplus c_u)'=q'\oplus c_u'.
\]
Applying the same CLR perturbation identity in dimension $D-1$
(including the stated zero-dimensional convention when $D=2$) gives
$\dAc(p\oplus c_u,q\oplus c_u)=\dAc(p,q)$.
In contrast,
\[
  \CAc(p\oplus c_u)
  =\bigl\lVert\clr_{D-1}(p')+\clr_{D-1}(c_u')\bigr\rVert_2.
\]
For $D\geq3$, this equals $\CAc(p)$ for every $p$ if and only if
$\clr_{D-1}(c_u')=0$, equivalently, the content coordinates of $u$
are all equal.  Thus the analogous invariance under arbitrary shared
perturbations is false for $\CAc$.  Indeed, let
$p=e_D$ and $c=\clo(1,2,1,\ldots,1)$,
where coordinate $0$ is the sink and the factor $2$ is on a content
coordinate.  Then
\[
  \CAc(p)=0,
  \qquad
  (p\oplus c)'=\clo(2,1,\ldots,1),
  \qquad
  \CAc(p\oplus c)>0.
\]
At $D=3$, the last value is exactly $(\log2)/\sqrt2$.  When $D=2$,
the content simplex has one part and $\CAc\equiv0$.

To prove failure of the corresponding classical invariances, fix
distinct $p,q\in\Sm{D}$, choose a coordinate $j$, let
$u_t=t\mathbf e_j$, and put $c_t=\clo(e^{u_t})$.  Since all entries
of $p$ and $q$ are positive,
\[
  c_t\oplus p\longrightarrow\mathbf e_j,
  \qquad
  c_t\oplus q\longrightarrow\mathbf e_j
  \qquad(t\to\infty).
\]
Using the standard continuous extension of $\JS$ to the closed
simplex (with $0\log0=0$), and continuity of Euclidean distance and
cosine similarity at probability vectors, gives
\begin{align*}
  \lVert c_t\oplus p-c_t\oplus q\rVert_2&\longrightarrow0,\\
  \JS(c_t\oplus p,c_t\oplus q)&\longrightarrow0,\\
  \cos(c_t\oplus p,c_t\oplus q)&\longrightarrow1.
\end{align*}
For $p\neq q$, the original Euclidean distance and $\JS$ divergence
are strictly positive, whereas the original cosine similarity is
strictly less than one.  Consequently none is invariant under every
shared perturbation.

The same conclusion holds for $\rhoA$.  Fix
$0\neq v\in\mathcal H_D$ and set
\[
  p=\clo(e^v),
  \qquad q=\clo(e^{-v}),
  \qquad c=\clo(e^{2v}).
\]
Then $\clr(p)=v$, $\clr(q)=-v$, and $\clr(c)=2v$, so all relevant
directional similarities are defined and
\[
  \rhoA(p,q)=-1,
  \qquad
  \rhoA(p\oplus c,q\oplus c)
  =\rhoA\bigl(\clo(e^{3v}),\clo(e^v)\bigr)=1.
\]

For \textup{(ii)}, set $\alpha=1/\tau>0$.  Then
$\clo(e^{\ell/\tau})=\clo((e^{\ell_i})^\alpha)=\alpha\odot p$.
The elementary identity
\begin{equation}
  \clr(\alpha\odot p)=\alpha\clr(p)
  \label{eq:clr-powering}
\end{equation}
implies
$\dA(\alpha\odot p,\alpha\odot q)=\alpha\dA(p,q)$.
Moreover, dropping and re-closing commutes with powering:
$(\alpha\odot p)'=\alpha\odot p'$.
Applying \eqref{eq:clr-powering} in the $(D-1)$-part content simplex
proves the displayed scaling identities for $\dAc$ and $\CAc$.

For the variance assertion, put $x_t=\clr(p_t)$ and
$\bar x=n^{-1}\sum_{t=1}^n x_t$.  From the definition of the
Aitchison center,
$\clr(\cen(P))=\bar x$.
After common powering by $\alpha$, the CLR vectors and their
recomputed center are $\alpha x_t$ and $\alpha\bar x$.  Hence
\[
  \frac1n\sum_{t=1}^n
  \lVert\alpha x_t-\alpha\bar x\rVert_2^2
  =\alpha^2\frac1n\sum_{t=1}^n
  \lVert x_t-\bar x\rVert_2^2,
\]
which proves the factor $1/\tau^2$.

If $p,q\neq e_D$ and $\alpha,\beta>0$, then
\[
  \frac{\langle\alpha\clr(p),\beta\clr(q)\rangle}
       {\lVert\alpha\clr(p)\rVert_2
        \lVert\beta\clr(q)\rVert_2}
  =\rhoA(p,q),
\]
which proves invariance of $\rhoA$ to separate positive temperatures.

For the aggregation qualification, let
$s_h=\cen(p_{h1},\ldots,p_{hn_h})$ and let
$\alpha_h=1/\tau_h>0$ be common to all rows in head $h$.  Applying
\eqref{eq:clr-powering} and the center identity above gives
\[
  \cen(\alpha_h\odot p_{h1},\ldots,
       \alpha_h\odot p_{hn_h})
  =\alpha_h\odot s_h.
\]
Thus separate positive head temperatures cancel in $\rhoA$ whenever
the resulting signatures are nonuniform.  With row-dependent
coefficients $\alpha_{ht}$, the new center has CLR vector
$n_h^{-1}\sum_t\alpha_{ht}\clr(p_{ht})$, which need not be a scalar
multiple of $\clr(s_h)$.  An explicit failure is available in every
$D\geq2$.  Fix $0\neq v\in\mathcal H_D$.  Let the first head contain
\[
  p_{11}=\clo(e^{2v}),
  \qquad
  p_{12}=\clo(e^{-v}),
\]
and let the second head consist of $p_{21}=\clo(e^v)$.  Before
row-dependent powering, their signatures satisfy
\[
  \clr(s_1)=\frac12v,
  \qquad
  \clr(s_2)=v,
  \qquad
  \rhoA(s_1,s_2)=1.
\]
Now use exponents $\alpha_{11}=1/4$ and $\alpha_{12}=1$ in the first
head, leaving the second head unchanged.  The new first-head
signature $\widetilde s_1$ has
\[
  \clr(\widetilde s_1)
  =\frac12\left(\frac14(2v)-v\right)
  =-\frac14v,
  \qquad
  \rhoA(\widetilde s_1,s_2)=-1.
\]
Hence row-dependent positive temperatures can change $\rhoA$.

For \textup{(iii)}, closure of the content block gives $p'=\pi$ and
$q'=\eta$.  The two displayed identities follow immediately from
the definitions
$\dAc(p,q):=\dA(p',q')$ and $\CAc(p):=\lVert\clr_{D-1}(p')\rVert_2$.
Their equality with the full balance-orthogonal calculation is
\eqref{eq:sink-projection}.  The final qualification follows from the
Pythagorean identity: the balance term generally changes when sink
masses change.

For \textup{(iv)}, define
\[
  W_S:=\left\{w\in\mathbb R^D:
    w_i=0\text{ for }i\notin S,
    \ \sum_{i\in S}w_i=0\right\}\subset\mathcal H_D,
\]
and let $I_S:\mathcal H_{|S|}\to W_S$ be zero-extension, using the
inherited coordinate order on $S$.  This is a linear isometry.  If
$P_{W_S}$ is orthogonal projection onto $W_S$, then for every
$x\in\mathcal H_D$,
\[
  (P_{W_S}x)_i=
  \begin{cases}
    x_i-|S|^{-1}\sum_{j\in S}x_j,&i\in S,\\
    0,&i\notin S.
  \end{cases}
\]
Applying this to $x=\clr_D(p)$ shows that
\begin{equation}
  I_S\clr_{|S|}(p^{(S)})=P_{W_S}\clr_D(p),
  \qquad
  I_S\clr_{|S|}(q^{(S)})=P_{W_S}\clr_D(q).
  \label{eq:subcomposition-projection}
\end{equation}
Subtracting the identities in
\eqref{eq:subcomposition-projection}, using the isometry of $I_S$,
and using contractivity of an orthogonal projection gives
\begin{align*}
  \dA(p^{(S)},q^{(S)})
  &=\bigl\lVert
      P_{W_S}(\clr_D(p)-\clr_D(q))
    \bigr\rVert_2\\
  &\leq\lVert\clr_D(p)-\clr_D(q)\rVert_2
   =\dA(p,q).
\end{align*}
This proves both the projection identity and dominance.  The masking
qualifications in the statement follow because CLR coordinates exist
only for strictly positive parts.

For \textup{(v)}, first consider $D=3$ and use the
Table~\ref{tab:witness} compositions
\[
  p=\left(\frac9{10},\frac2{25},\frac1{50}\right),
  \qquad
  r=\left(\frac9{10},\frac1{50},\frac2{25}\right),
  \qquad
  S=\{1,2\}.
\]
Then
\[
  p^{(S)}=\left(\frac45,\frac15\right),
  \qquad
  r^{(S)}=\left(\frac15,\frac45\right).
\]
For Euclidean distance,
\[
  \lVert p-r\rVert_2=\frac{3\sqrt2}{50}
  <\frac{3\sqrt2}{5}
  =\lVert p^{(S)}-r^{(S)}\rVert_2.
\]
For cosine dissimilarity, direct calculation gives
\[
  \lVert p\rVert_2^2=\lVert r\rVert_2^2=\frac{2042}{2500},
  \qquad
  p^\top r=\frac{2033}{2500},
\]
and
\[
  \lVert p^{(S)}\rVert_2^2
  =\lVert r^{(S)}\rVert_2^2=\frac{17}{25},
  \qquad
  (p^{(S)})^\top r^{(S)}=\frac8{25}.
\]
Consequently,
\[
  1-\cos(p,r)=\frac9{2042}
  <\frac9{17}
  =1-\cos(p^{(S)},r^{(S)}).
\]

For $a,b$ in a common simplex, let
\[
  \JS(a,b)
  :=\frac12\KL\left(a\middle\|\frac{a+b}{2}\right)
   +\frac12\KL\left(b\middle\|\frac{a+b}{2}\right).
\]
Put
\[
  \pi=\left(\frac45,\frac15\right),
  \qquad
  \widetilde\pi=\left(\frac15,\frac45\right),
  \qquad
  m_c=\left(\frac12,\frac12\right),
\]
and let
\[
  J_{\pi}:=\frac45\log\frac85+\frac15\log\frac25>0.
\]
By symmetry,
\[
  \KL(\pi\|m_c)
  =\KL(\widetilde\pi\|m_c)=J_{\pi},
  \qquad
  \JS(\pi,\widetilde\pi)=J_{\pi}.
\]
Here
\[
  p=\left(\frac9{10},\frac1{10}\pi\right),
  \qquad
  r=\left(\frac9{10},\frac1{10}\widetilde\pi\right),
\]
and their midpoint is $(9/10,(1/10)m_c)$.  Therefore
\[
  \KL\bigl(p\|(p+r)/2\bigr)
  =\KL\bigl(r\|(p+r)/2\bigr)
  =\frac1{10}J_{\pi},
\]
and hence
\[
  \JS(p,r)=\frac1{10}J_{\pi}
  <J_{\pi}
  =\JS(p^{(S)},r^{(S)}).
\]

It remains to extend the counterexample to every $D>3$.  Put
$k=D-3$, choose $\varepsilon\in(0,1/k)$, set
$a=1-k\varepsilon\in(0,1)$, and append $k$ equal parts:
\[
  \bar p=(ap_0,ap_1,ap_2,
           \underbrace{\varepsilon,\ldots,\varepsilon}_{k}),
  \qquad
  \bar r=(ar_0,ar_1,ar_2,
           \underbrace{\varepsilon,\ldots,\varepsilon}_{k}).
\]
These are strictly positive $D$-part compositions, and their
subcompositions on $S=\{1,2\}$ remain $p^{(S)}$ and $r^{(S)}$.
The appended parts agree, so
\[
  \lVert\bar p-\bar r\rVert_2
  =a\lVert p-r\rVert_2
  <\lVert p^{(S)}-r^{(S)}\rVert_2
\]
and, because the common appended parts contribute zero to both KL
terms while the factor $a$ cancels inside each logarithm,
\begin{align*}
  \KL\left(
    \bar p\middle\|\frac{\bar p+\bar r}{2}\right)
  &=a\KL\left(p\middle\|\frac{p+r}{2}\right),\\
  \KL\left(
    \bar r\middle\|\frac{\bar p+\bar r}{2}\right)
  &=a\KL\left(r\middle\|\frac{p+r}{2}\right).
\end{align*}
Consequently,
\[
  \JS(\bar p,\bar r)=a\JS(p,r)
  <\JS(p^{(S)},r^{(S)}).
\]
Furthermore,
\begin{align*}
  1-\cos(\bar p,\bar r)
  &=\frac{a^2(\lVert p\rVert_2^2-p^\top r)}
          {a^2\lVert p\rVert_2^2+k\varepsilon^2}\\
  &<\frac{\lVert p\rVert_2^2-p^\top r}
           {\lVert p\rVert_2^2}
   =\frac9{2042}
   <\frac9{17}
   =1-\cos(p^{(S)},r^{(S)}).
\end{align*}
The first inequality is strict because $k\varepsilon^2>0$.
Thus, for every $D\geq3$, each classical dissimilarity can increase
strictly after taking a subcomposition, so none is subcompositionally
dominant.
\end{proof}

\begin{aremark}[Scope of the invariance claims]
\label{rem:invariance-scope}
A hard mask uses an offset $-\infty$ and is not an interior
perturbation, so it lies outside
Proposition~\ref{prop:invariances}\textup{(i)} on the full simplex.
One must first restrict and re-close on a common positive support.
Within that reduced simplex, finite shared offsets are again governed
by part~\textup{(i)}; if positive pre-mask rows and a common retained
support are available, part~\textup{(iv)} supplies contraction relative
to their full distance.  With unequal structural-zero supports, the
common-support comparison discards the mismatch and no full Aitchison
distance between the masked rows exists.  Sink coherence always refers
to a fixed designated sink (or to a simultaneous relabeling of both the
composition and the sink designation).  In particular, full $\dA$ is
decomposable but not sink-invariant; $\dAc$ is perturbation-invariant,
whereas $\CAc$ is not when $D\geq3$ (and is identically zero when
$D=2$); and $\dAc$ and $\CAc$ are temperature-equivariant rather than
value-invariant.
\end{aremark}

\subsection*{Full statement and proof of Theorem~\ref{thm:characterization}}

Here $\mathcal S^m$ is the open $m$-part simplex and
$\mathfrak S_m$ the symmetric group. Under the operations above, the map
$T_m:=\clr_m:(\Sm{m},\oplus,\odot)\to(\mathcal H_m,+,\cdot)$
is a linear isomorphism with inverse $T_m^{-1}(x)=\clo(\exp x)$; for
$\sigma\in\mathfrak S_m$ we use the action $(\sigma p)_i=p_{\sigma^{-1}(i)}$.
The full statement of Theorem~\ref{thm:characterization} is: if
$\delta_D:\Sm{D}\times\Sm{D}\to[0,\infty)$ satisfies \textup{(A1)}
$\delta_D$ is a metric; \textup{(A2)}
$\delta_D(c\oplus p,c\oplus q)=\delta_D(p,q)$ for all $c$;
\textup{(A3)} $\delta_D(\alpha\odot p,\alpha\odot q)=|\alpha|\delta_D(p,q)$
for all $\alpha\in\mathbb R$; \textup{(A4)} with $N(v):=\delta_D(v,e_D)$,
$N(v\oplus w)^2+N(v\ominus w)^2=2N(v)^2+2N(w)^2$ for all $v,w$; and
\textup{(A5)} $\delta_D(\sigma p,\sigma q)=\delta_D(p,q)$ for every
$\sigma\in\mathfrak S_D$; then there is a unique constant $\kappa_D>0$
with $\delta_D=\kappa_D d_{A}$ on $\Sm{D}$, and conversely every
$\kappa_D\dA$ satisfies \textup{(A1)}--\textup{(A5)}.

\begin{proof}[Proof of Theorem~\ref{thm:characterization}]
We give the complete argument.

\smallskip
\noindent
\emph{Reduction to a norm.}
Because $T=\clr_D$ is bijective, the identities
\[
 \clr_D(p\oplus q)
 =\clr_D(p)+\clr_D(q),
 \qquad
 \clr_D(\alpha\odot p)
 =\alpha\clr_D(p)
\]
transport the usual vector-space laws on $\mathcal H_D$ to
$(\Sm{D},\oplus,\odot)$, whose zero is $e_D$.  Applying
\textup{(A2)} with $c=\ominus q$ gives
\begin{equation}
 \delta_D(p,q)
 =\delta_D(p\ominus q,e_D)
 =N(p\ominus q).
 \label{eq:aitchison-translation-reduction}
\end{equation}
By \textup{(A1)}, $N(v)\ge0$ and $N(v)=0$ if and only if $v=e_D$.
Since $\alpha\odot e_D=e_D$, \textup{(A3)} yields
$N(\alpha\odot v)=|\alpha|N(v)$.
Finally, the triangle inequality in \textup{(A1)} and perturbation
invariance give
\[
\begin{aligned}
 N(v\oplus w)
 &=\delta_D(v\oplus w,e_D)\\
 &\le \delta_D(v\oplus w,w)+\delta_D(w,e_D)\\
 &=\delta_D(v,e_D)+\delta_D(w,e_D)
 =N(v)+N(w),
\end{aligned}
\]
where the third line follows by perturbing the first distance by
$\ominus w$.  Hence $N$ is a norm on the Aitchison vector space.

\smallskip
\noindent
\emph{The parallelogram identity.}
Assumption \textup{(A4)} is exactly the parallelogram identity for $N$.
The real Jordan--von Neumann theorem therefore implies that
\begin{equation}
 B_\delta(v,w)
 :=\frac14\left\{N(v\oplus w)^2-N(v\ominus w)^2\right\}
 \label{eq:aitchison-polarization}
\end{equation}
is an inner product and that $N(v)^2=B_\delta(v,v)$.

\smallskip
\noindent
\emph{Transport to CLR coordinates.}
Let $T:=\clr_D$.  Transporting $B_\delta$ through the
linear isomorphism $T:\Sm{D}\to\mathcal H_D$ gives an inner product
on $\mathcal H_D$.  Because $\mathcal H_D$ is finite-dimensional, there
is a unique Euclidean-self-adjoint positive-definite operator
$M:\mathcal H_D\to\mathcal H_D$ such that
\begin{equation}
 B_\delta(v,w)
 =\left\langle T(v),M T(w)\right\rangle_2
 \qquad(v,w\in\Sm{D}).
 \label{eq:aitchison-M-representation}
\end{equation}

\smallskip
\noindent
\emph{Permutation invariance forces isotropy.}
Let $P_\sigma$ denote the permutation matrix corresponding to
$\sigma\in\mathfrak S_D$, and put
$Q_\sigma:=P_\sigma|_{\mathcal H_D}$.  The space $\mathcal H_D$ is
invariant under $P_\sigma$, and $Q_\sigma$ is orthogonal.  Coordinate
permutations are Aitchison-linear: they commute with $\oplus$, $\ominus$,
and $\odot$, and $T(\sigma v)=Q_\sigma T(v)$.  Since
$\sigma e_D=e_D$, assumption
\textup{(A5)} gives $N(\sigma v)=N(v)$.  Polarization in
\eqref{eq:aitchison-polarization} consequently gives
$B_\delta(\sigma v,\sigma w)=B_\delta(v,w)$, and hence
\[
 Q_\sigma^* M Q_\sigma=M.
\]
As $Q_\sigma$ is orthogonal, $M$ commutes with every $Q_\sigma$.

For completeness, the coordinate-permutation representation on
$\mathcal H_D$ is irreducible over $\mathbb R$.  Indeed, let
$U$ be a nonzero linear subspace of $\mathcal H_D$ that is invariant
under every coordinate permutation, and choose $0\ne x\in U$.  Not all
coordinates of $x$ can be equal, because $\mathbf{1}_D^\top x=0$; hence
$x_i\ne x_j$ for some $i\ne j$.  If $(ij)$ is the corresponding
transposition and
$\mathbf e_i$ denotes the $i$th standard basis vector of $\mathbb R^D$,
then
\[
 x-P_{(ij)}x=(x_i-x_j)(\mathbf e_i-\mathbf e_j)\in U.
\]
Thus $\mathbf e_i-\mathbf e_j\in U$.  Permuting coordinates shows that
$\mathbf e_k-\mathbf e_\ell\in U$ for every $k\ne\ell$, and these
vectors span $\mathcal H_D$.  Therefore $U=\mathcal H_D$.

By the finite-dimensional spectral theorem, $M$ has a real eigenvalue
$\lambda$ and a nonzero eigenspace $E_\lambda$.  Since $M$ commutes with
all $Q_\sigma$, the space $E_\lambda$ is invariant under
every coordinate permutation.  Irreducibility forces
$E_\lambda=\mathcal H_D$, so
\begin{equation}
 M=\lambda I_{\mathcal H_D}.
 \label{eq:aitchison-M-scalar}
\end{equation}
Positive definiteness gives $\lambda>0$.

\smallskip
\noindent
\emph{Conclusion and converse.}
Equations \eqref{eq:aitchison-translation-reduction},
\eqref{eq:aitchison-M-representation}, and
\eqref{eq:aitchison-M-scalar} yield
\[
\begin{aligned}
 \delta_D(p,q)^2
 &=N(p\ominus q)^2\\
 &=\lambda\left\|T(p\ominus q)\right\|_2^2\\
 &=\lambda\left\|T(p)-T(q)\right\|_2^2
 =\lambda d_{A}(p,q)^2.
\end{aligned}
\]
Thus $\delta_D=\kappa_D\dA$ with
$\kappa_D:=\sqrt\lambda>0$.  Since $D\ge2$, $\dA$ is nonzero for
some pair, so $\kappa_D$ is unique.

Conversely, fix $\kappa_D>0$ and define
$\delta_D(p,q):=\kappa_D\|T(p)-T(q)\|_2$.  Since $T$ is injective and
Euclidean distance is a metric, \textup{(A1)} holds.  The CLR identities
give
\[
\begin{aligned}
 \delta_D(c\oplus p,c\oplus q)
 &=\kappa_D\|T(c)+T(p)-T(c)-T(q)\|_2
 =\delta_D(p,q),\\
 \delta_D(\alpha\odot p,\alpha\odot q)
 &=\kappa_D\|\alpha(T(p)-T(q))\|_2
 =|\alpha|\,\delta_D(p,q),
\end{aligned}
\]
which prove \textup{(A2)} and \textup{(A3)}.  Moreover,
$N(v)=\kappa_D\|T(v)\|_2$, so the Euclidean parallelogram identity gives
\[
\begin{aligned}
 &N(v\oplus w)^2+N(v\ominus w)^2\\
 &\quad=\kappa_D^2\bigl(\|T(v)+T(w)\|_2^2
                         +\|T(v)-T(w)\|_2^2\bigr)\\
 &\quad=2N(v)^2+2N(w)^2,
\end{aligned}
\]
which is \textup{(A4)}.  Finally, because every permutation matrix is
orthogonal,
\[
 \delta_D(\sigma p,\sigma q)
 =\kappa_D\|Q_\sigma(T(p)-T(q))\|_2
 =\delta_D(p,q),
\]
which is \textup{(A5)}.
\end{proof}

\begin{corollary}[Consequences under a dimension-compatible calibration]
\label{cor:consequences}
Assume $D\ge3$ and let $\delta_D$ satisfy the hypotheses of
Theorem~\ref{thm:characterization}.  Write
$\delta_D=\kappa_Dd_{A}$ and, for each $2\le m<D$, define the
compatible common-calibration distance
$\delta_m^{[\kappa_D]}(r,s):=\kappa_D\dA(r,s)$ for $r,s\in\Sm{m}$
(with $\dA$ the $m$-part CLR distance).  Also put
$\delta_D^{[\kappa_D]}:=\delta_D$.  Then:
\begin{enumerate}
\item[\textup{(i)}] \emph{Scale invariance.}
For $x,y\in(0,\infty)^D$, define
$\bar\delta_D(x,y):=\delta_D(\clo(x),\clo(y))$.
For all $a,b>0$,
$\bar\delta_D(ax,by)=\bar\delta_D(x,y)$.
Thus $\bar\delta_D$ is a scale-invariant pseudometric on the positive
cone; it is not a metric there, since proportional vectors have distance
zero.
\item[\textup{(ii)}] \emph{Subcompositional dominance.}
If $S\subseteq\{1,\ldots,D\}$ has $m:=|S|\ge2$, let
$r^{(S)}:=\clo((r_i)_{i\in S})$ for $r\in\Sm{D}$,
with the coordinates in their inherited order.  Then
\[
 \delta_m^{[\kappa_D]}(p^{(S)},q^{(S)})
 \le \delta_D(p,q).
\]
\item[\textup{(iii)}] \emph{Exact sink decomposition.}
Relabel the $D$ coordinates by $0,1,\ldots,D-1$, with $0$ the sink, and
for any $r\in\Sm{D}$ put $r':=\clo((r_1,\ldots,r_{D-1}))$.  Define
$b_\delta(r):=\kappa_D\,b(r)$ (with $b$ the sink balance) and the content
pseudometric
$\delta_D^\perp(p,q):=\delta_{D-1}^{[\kappa_D]}(p',q')=\kappa_D\dA(p',q')$.
Then
\begin{equation}
 \delta_D(p,q)^2
 =\delta_D^\perp(p,q)^2
  +\bigl(b_\delta(p)-b_\delta(q)\bigr)^2.
 \label{eq:aitchison-scaled-sink-decomposition}
\end{equation}
More precisely, with
$W_c:=\{x\in\mathcal H_D:x_0=0\}$
and $P_{W_c}$ Euclidean orthogonal projection onto $W_c$,
\[
 \delta_D^\perp(p,q)
 =\kappa_D\left\|P_{W_c}\bigl(
 \clr_D(p)-\clr_D(q)\bigr)\right\|_2
 =\kappa_D\dA(p',q').
\]
Thus the orthogonal projection recenters the kept CLR coordinates and is
not mere deletion of the sink coordinate.  Moreover, $\delta_D^\perp$ and
the content log-ratio dispersion
$\mathcal K_\delta(r):=\delta_{D-1}^{[\kappa_D]}(r',e_{D-1})$
depend only on content log-ratios and not on the sink mass.  The full
distance $\delta_D$ need not be sink-invariant, because it also contains
the balance term in \eqref{eq:aitchison-scaled-sink-decomposition}.
\item[\textup{(iv)}] \emph{Shared coordinate-bias invariance and temperature
equivariance.}
Let $\operatorname{sm}_D(\ell):=\clo(\exp\ell)$.  For all
$\ell,\ell',u\in\mathbb R^D$ and $\tau>0$,
\[
\begin{aligned}
 \delta_D\bigl(\operatorname{sm}_D(\ell+u),
                \operatorname{sm}_D(\ell'+u)\bigr)
 &=\delta_D\bigl(\operatorname{sm}_D(\ell),
                  \operatorname{sm}_D(\ell')\bigr),\\
 \delta_D\bigl(\operatorname{sm}_D(\ell/\tau),
                \operatorname{sm}_D(\ell'/\tau)\bigr)
 &=\tau^{-1}\delta_D\bigl(\operatorname{sm}_D(\ell),
                            \operatorname{sm}_D(\ell')\bigr).
\end{aligned}
\]
Consequently, one common temperature applied to every row preserves all
pairwise-distance rankings and ties.
For $n\ge1$ and rows $p^{(1)},\ldots,p^{(n)}\in\Sm{D}$, with
$\cen$ the Aitchison center and
\[
 V_\delta(p^{(1)},\ldots,p^{(n)})
 :=\frac1n\sum_{j=1}^n
 \delta_D\!\left(p^{(j)},
 \cen(p^{(1)},\ldots,p^{(n)})\right)^2,
\]
one has, writing $c_u:=\operatorname{sm}_D(u)$,
\[
 V_\delta(c_u\oplus p^{(1)},\ldots,c_u\oplus p^{(n)})
 =V_\delta(p^{(1)},\ldots,p^{(n)}),
 \qquad
 V_\delta\!\left(\tfrac1\tau\odot p^{(1)},\ldots,
                  \tfrac1\tau\odot p^{(n)}\right)
 =\tau^{-2}V_\delta(p^{(1)},\ldots,p^{(n)}).
\]
Finally, put $N_\delta(r):=\delta_D(r,e_D)$ and, for
$v,w\in\Sm{D}$, define
\[
 B_\delta(v,w)
 :=\frac14\left\{N_\delta(v\oplus w)^2
                 -N_\delta(v\ominus w)^2\right\},
 \qquad
 \rho_\delta(v,w)
 :=\frac{B_\delta(v,w)}{N_\delta(v)N_\delta(w)}
 \quad(v,w\ne e_D).
\]
Then
\[
 \rho_\delta(v,w)
 =\frac{\langle\clr_D(v),\clr_D(w)\rangle_2}
        {\|\clr_D(v)\|_2\|\clr_D(w)\|_2},
\]
and, for all $\alpha,\beta>0$,
$\rho_\delta(\alpha\odot v,\beta\odot w)=\rho_\delta(v,w)$:
the cosine is invariant under separate positive powerings, equivalently
arbitrary separate temperatures $\tau_v,\tau_w>0$.  A shared nonconstant
coordinate-bias vector generally does not preserve the cosine.
\end{enumerate}
\end{corollary}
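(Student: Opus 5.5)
The plan is to reduce everything to Theorem~\ref{thm:characterization}, which already gives $\delta_D=\kappa_D\dA$. From then on each item is a statement about a rescaled CLR geometry, and I can reuse the identities proved for $\dA$ in Lemma~\ref{lem:pythagoras} and Proposition~\ref{prop:invariances}, multiplying through by $\kappa_D$. Because the lower-dimensional distances are \emph{defined} as $\kappa_D\dA$ in dimension $m$, no separate characterization in dimension $m$ is needed. This common calibration is exactly what makes the cross-dimensional comparisons in (ii) and (iii) meaningful.

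\textbf{Items (i) and (ii).} For (i), note that $\clo(ax)=\clo(x)$ for $a>0$. Hence $\bar\delta_D(ax,by)=\delta_D(\clo(x),\clo(y))$, which settles scale invariance. The pseudometric axioms are inherited from (A1) through the map $\clo$. Failure of definiteness is witnessed by $x$ and $2x$. For (ii), I would invoke Proposition~\ref{prop:invariances}(iv): subcomposition is the orthogonal projection $P_{W_S}$ composed with the isometry $I_S$ in CLR space, so $\dA(p^{(S)},q^{(S)})\le\dA(p,q)$. Multiplying by $\kappa_D$ gives the claim. The only care needed is relabeling the index set $\{1,\dots,D\}$ to match the earlier $0$-based convention, which is harmless because of the permutation naturality $T(\sigma v)=Q_\sigma T(v)$.

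\textbf{Item (iii).} Multiply the Pythagorean identity of Lemma~\ref{lem:pythagoras} by $\kappa_D^2$. For the projection formula, I would observe that $W_c=\{x\in\mathcal H_D:x_0=0\}$ is exactly the space $W=J(\mathcal H_{D-1})$ from that proof. Equation~\eqref{eq:sink-projection} then gives $P_{W_c}\clr_D(p)=J\clr_{D-1}(p')$, and the isometry of $J$ yields $\kappa_D\|P_{W_c}(\clr_D p-\clr_D q)\|=\kappa_D\dA(p',q')$. The explicit projection formula $x_i-\frac{1}{D-1}\sum_{j\ge1}x_j$ shows that the kept coordinates are recentered rather than merely deleted. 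Sink-mass independence of $\delta_D^\perp$ and $\mathcal K_\delta$ follows from Proposition~\ref{prop:invariances}(iii). That full $\delta_D$ is not sink-invariant is shown by the witness $(p,q)$ of Table~\ref{tab:witness}, for which $b(p)\ne b(q)$ while $p'=q'$.

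\textbf{Item (iv).} The bias invariance and $\tau^{-1}$ scaling come directly from Proposition~\ref{prop:invariances}(i),(ii) times $\kappa_D$, and ranking preservation is immediate from equivariance by a common positive factor. For $V_\delta$, the key fact is $\clr(\cen(P))=\bar x$. Under perturbation by $c_u$ every CLR vector and the recomputed center shift by $\clr(c_u)$, so the deviations are unchanged. Under powering every CLR vector and the center scale by $1/\tau$, which gives the factor $\tau^{-2}$.

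For $\rho_\delta$, I would substitute $N_\delta(r)=\kappa_D\|\clr_D r\|$ into the polarization formula. The Euclidean polarization identity gives $B_\delta(v,w)=\kappa_D^2\langle\clr v,\clr w\rangle$, and the factor $\kappa_D^2$ cancels in the ratio. Invariance under $\alpha\odot v$ and $\beta\odot w$ then follows from the linearity $\clr(\alpha\odot v)=\alpha\clr v$, with positive scalars cancelling in the ratio. The negative remark about shared biases reuses the $\rhoA$ counterexample from the proof of Proposition~\ref{prop:invariances}, namely $p=\clo(e^v)$, $q=\clo(e^{-v})$, $c=\clo(e^{2v})$, for which the cosine moves from $-1$ to $1$.

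\textbf{Main obstacle.} None of these steps is deep. The one needing care is (iii): identifying $W_c$ with $W$, and making precise that $\delta^\perp_D$ uses the dimension-$(D-1)$ distance at the same constant $\kappa_D$. Without that compatible calibration the identity would fail by a factor $\kappa_{D-1}/\kappa_D$, so I would state the calibration convention explicitly before invoking Lemma~\ref{lem:pythagoras}.
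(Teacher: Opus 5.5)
Your proposal is correct and follows essentially the paper's route: substitute $\delta_D=\kappa_D\dA$ from Theorem~\ref{thm:characterization}, then scale the CLR facts by $\kappa_D$ or $\kappa_D^2$. Those facts are the projections onto $W_S$ and $W_c$, the Pythagorean split, the equivariance of the center under perturbation and powering, and polarization giving $B_\delta=\kappa_D^2\langle\clr_D v,\clr_D w\rangle$. The paper re-derives the projection formulas inline and takes the bias/temperature identities straight from (A2)--(A3), whereas you cite Lemma~\ref{lem:pythagoras} and Proposition~\ref{prop:invariances}; your cosine counterexample is the paper's with $v=\clr_D(c)/2$.
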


\begin{proof}
Part~\textup{(i)} follows from
$\clo(ax)=\clo(x)$ and $\clo(by)=\clo(y)$.

For part~\textup{(ii)}, put
\[
 a_i:=\log\frac{p_i}{q_i},
 \qquad
 \bar a_D:=\frac1D\sum_{i=1}^D a_i,
 \qquad
 \bar a_S:=\frac1m\sum_{i\in S}a_i,
\]
so that
$z:=\clr_D(p)-\clr_D(q)$ satisfies $z_i=a_i-\bar a_D$.
Also define
$W_S:=\{x\in\mathcal H_D:x_i=0\text{ for }i\notin S\}$.
The orthogonal projection of $z$ onto $W_S$ is
\[
 (P_{W_S}z)_i=
 \begin{cases}
  a_i-\bar a_S,&i\in S,\\[4pt]
  0,&i\notin S.
 \end{cases}
\]
Indeed, the displayed vector belongs to $W_S$, and, for every $y\in W_S$,
\[
 \left\langle z-P_{W_S}z,y\right\rangle_2
 =(\bar a_S-\bar a_D)\sum_{i\in S}y_i=0.
\]
Its restriction to the coordinates in $S$ is precisely
$\clr_m(p^{(S)})-\clr_m(q^{(S)})$, and it vanishes on $S^c$.
Therefore
\[
 \dA(p^{(S)},q^{(S)})
 =\|P_{W_S}z\|_2
 \le\|z\|_2
 =\dA(p,q).
\]
Multiplication by the common positive constant $\kappa_D$ proves
part~\textup{(ii)}.

For part~\textup{(iii)}, after the stated relabeling define
\[
 u_b:=\sqrt{\frac{D-1}{D}}
 \left(1,-\frac1{D-1},\ldots,-\frac1{D-1}\right),
 \qquad
 W_c:=\{x\in\mathcal H_D:x_0=0\}.
\]
Then $\|u_b\|_2=1$, $u_b\perp W_c$, and
$\mathcal H_D=\operatorname{span}\{u_b\}\mathbin{\overset{\perp}{\oplus}}W_c$.
Direct calculation gives
$\left\langle\clr_D(r),u_b\right\rangle_2=b(r)$
and
\[
 P_{W_c}\clr_D(r)
 =\bigl(0,\clr_{D-1}(r')_1,\ldots,
          \clr_{D-1}(r')_{D-1}\bigr).
\]
Applying Pythagoras to
$\clr_D(p)-\clr_D(q)$ therefore yields
\[
 \dA(p,q)^2
 =\dA(p',q')^2
  +\bigl(b(p)-b(q)\bigr)^2.
\]
Multiplying by $\kappa_D^2$ and using the definitions of
$\delta_D^\perp$ and $b_\delta$ proves
\eqref{eq:aitchison-scaled-sink-decomposition}.  The displayed formula
for $P_{W_c}\clr_D(r)$ proves the remaining sink claims.
In particular, the zero-padded vector on the right is the orthogonal
projection; simple deletion of the zeroth CLR coordinate would not give
this vector without the displayed recentering.

For part~\textup{(iv)}, coordinatewise calculation gives
\[
 \operatorname{sm}_D(\ell+u)
 =\operatorname{sm}_D(u)\oplus\operatorname{sm}_D(\ell),
 \qquad
 \operatorname{sm}_D(\ell/\tau)
 =\frac1\tau\odot\operatorname{sm}_D(\ell).
\]
The two distance identities follow from \textup{(A2)} and
\textup{(A3)}.  Moreover,
\[
 \clr_D\!\left(
 \cen(p^{(1)},\ldots,p^{(n)})\right)
 =\frac1n\sum_{j=1}^n\clr_D(p^{(j)}).
\]
Consequently, for every $c\in\Sm{D}$ and $\alpha\in\mathbb R$,
\[
\begin{aligned}
 &\cen(c\oplus p^{(1)},\ldots,c\oplus p^{(n)})
 =c\oplus\cen(p^{(1)},\ldots,p^{(n)}),\\
 &\cen(\alpha\odot p^{(1)},\ldots,
                     \alpha\odot p^{(n)})
 =\alpha\odot\cen(p^{(1)},\ldots,p^{(n)}).
\end{aligned}
\]
Applying the two distance identities term by term proves the formulas
for $V_\delta$.

Finally, Theorem~\ref{thm:characterization} and
\eqref{eq:aitchison-M-scalar} give
\[
 B_\delta(v,w)
 =\kappa_D^2
 \left\langle\clr_D(v),\clr_D(w)\right\rangle_2.
\]
For $\alpha,\beta>0$, bilinearity and absolute homogeneity give
\[
 B_\delta(\alpha\odot v,\beta\odot w)
 =\alpha\beta B_\delta(v,w),\qquad
 N_\delta(\alpha\odot v)=\alpha N_\delta(v).
\]
The factor $\kappa_D^2$, and then the positive powering factors, cancel
upon normalizing, proving the cosine claims.  To see that every
nonconstant shared coordinate bias can change a cosine, fix $c\ne e_D$
and put $y:=\clr_D(c)\ne0$.  Surjectivity of the CLR map
provides $v,w\in\Sm{D}$ with CLR vectors $y/2$ and $-y/2$,
respectively.  Then $\rho_\delta(v,w)=-1$, whereas
$\rho_\delta(c\oplus v,c\oplus w)=1$.
\end{proof}

\begin{aremark}[Why the compatibility and scaling clauses are necessary]
\label{rem:aitchison-characterization-scope}
Theorem~\ref{thm:characterization} is a fixed-dimensional
result.  If metrics are separately postulated in different dimensions,
the theorem gives $\delta_m=\kappa_m d_{A}$ (in each dimension $m$),
but \textup{(A1)}--\textup{(A5)} do not force the constants $\kappa_m$
to agree.  In general, the projection inequality gives only
\[
 \delta_m(p^{(S)},q^{(S)})
 \le \frac{\kappa_m}{\kappa_D}\,\delta_D(p,q).
\]
Thus $\kappa_m\le\kappa_D$ is sufficient for dominance, while a common
calibration is the clean equality convention used above.  For example, let
$\delta_3=\dA$, $\delta_2=100\,\dA$, $q=e_3$,
$p=\clo((\exp(a),\exp(-a),1))$ with $a\ne0$,
and take $S=\{1,2\}$.  Then
$\dA(p,q)=\dA(p^{(S)},q^{(S)})=\sqrt2\,|a|$
(three- and two-part CLR distances, respectively), and therefore
\[
 \delta_2(p^{(S)},q^{(S)})=100\sqrt2\,|a|
 >\sqrt2\,|a|=\delta_3(p,q).
\]
Thus cross-dimensional dominance fails even though both metrics separately
satisfy \textup{(A1)}--\textup{(A5)}.  The common-calibration family in
the corollary is therefore an explicit compatibility convention, not a
consequence of the fixed-$D$ axioms alone.
Likewise, the unscaled balance identity must be adjusted when
$\kappa_D\ne1$:
$\delta_D(p,q)^2=\kappa_D^2\dA(p',q')^2+\kappa_D^2(b(p)-b(q))^2$,
equivalently one uses $\delta_D^\perp=\kappa_D\dA$ on contents and
$b_\delta=\kappa_Db$, as in the corollary.  If instead an independently
calibrated distance $\delta_{D-1}=\kappa_{D-1}\dA$ is used, the exact
identity is
\[
 \delta_D(p,q)^2
 =\left(\frac{\kappa_D}{\kappa_{D-1}}\right)^2
   \delta_{D-1}(p',q')^2
  +\bigl(b_\delta(p)-b_\delta(q)\bigr)^2.
\]
Finally, shared-bias invariance applies when both endpoints of a distance,
or all rows and their center, receive the same perturbation.  It does not
imply that the one-row concentration $\dA(p',e_{D-1})$, or the CLR cosine,
is invariant under an arbitrary nonconstant content bias with the reference
$e_{D-1}$ held fixed.
\end{aremark}

\begin{aremark}[Tightness of the axioms]
\label{rem:l1}
$N_1(v)=\lVert\clr v\rVert_1$ satisfies (A1)--(A3) and (A5): it is a norm on the Aitchison vector space (hence induces a perturbation-invariant metric), it is absolutely homogeneous under powering because $\clr(\alpha\odot v)=\alpha\clr v$, and the $\ell_1$ norm is permutation-symmetric. It fails (A4) whenever $D\ge3$: for $u=(1,-1,0,\dots)$ and $v=(1,0,-1,0,\dots)$ in $\mathcal{H}$, $N_1(u)=N_1(v)=2$, $N_1(u+v)=4$, $N_1(u-v)=2$, and $16+4\ne2\cdot4+2\cdot4$. So the parallelogram axiom is not implied by the others; dropping it admits genuinely different coherent geometries, still perturbation- and permutation-invariant and temperature-homogeneous, among which (A4) selects the unique Euclidean one. This is the precise sense in which ``quasi-coherent'' log-ratio alternatives \citep{greenacre2022reappraisal} relate to $\dA$.
\end{aremark}

\section{Proofs for Section~\ref{sec:consequences}}
\label{app:consequences}

Throughout this section, coordinate $0$ is the designated sink, $x'$ denotes
the sink-dropped, re-closed content composition of $x\in\Sm{D}$ ($D\ge3$),
$H(x):=-\sum_j x_j\log x_j$ with $0\log0:=0$, and $\dAc(x,y):=\dA(x',y')$,
$\CAc(x):=\lVert\clr_{D-1}(x')\rVert_2$ with no dimension-dependent rescaling.

\subsection*{Full statement of Theorem~\ref{thm:range}}
Fix an integer $D\geq3$ and $s_0\in(\tfrac12,1)$, and designate
coordinate $0$ as the sink coordinate.
\textup{(i)} For nonzero $x,y\in\mathbb R^D$, write
$c(x,y):=\langle x,y\rangle/(\lVert x\rVert_2\lVert y\rVert_2)$,
and let $\JS$ denote the equal-weight Jensen--Shannon divergence formed
from Kullback--Leibler divergence with natural logarithms. If
$p,q\in\Sm{D}$ satisfy $p_0,q_0\geq s_0$, then
\[
\begin{aligned}
1-c(p,q)
&<
\frac{(1-s_0)^2}{s_0^2+(1-s_0)^2}
\leq
\left(\frac{1-s_0}{s_0}\right)^2,\\
\lVert p-q\rVert_2
&<
\sqrt{2}\,(1-s_0)
\leq
\sqrt{5}\,(1-s_0),\\
\JS(p,q)
&<
(1-s_0)\log 2
\leq
\frac{3}{2}(1-s_0).
\end{aligned}
\]
All three sharp endpoints are nevertheless approached from within the open
simplex. More precisely,
\[
\begin{aligned}
\sup_{\substack{p,q\in\Sm{D}\\p_0,q_0\geq s_0}}
  \{1-c(p,q)\}
&=
\frac{(1-s_0)^2}{s_0^2+(1-s_0)^2},\\
\sup_{\substack{p,q\in\Sm{D}\\p_0,q_0\geq s_0}}
  \lVert p-q\rVert_2
&=
\sqrt{2}\,(1-s_0),\\
\sup_{\substack{p,q\in\Sm{D}\\p_0,q_0\geq s_0}}
  \JS(p,q)
&=
(1-s_0)\log 2.
\end{aligned}
\]
Thus none of the three suprema is attained. In fact, their respective ranges
are exactly
\[
\left[0,\frac{(1-s_0)^2}{s_0^2+(1-s_0)^2}\right),
\qquad
\left[0,\sqrt{2}(1-s_0)\right),
\qquad
\left[0,(1-s_0)\log 2\right),
\]
and their upper endpoints are sharp. In particular, all three range widths
tend to zero as $s_0\uparrow1$, while the cosine range has the quadratic
asymptotic behavior
$\frac{(1-s_0)^2}{s_0^2+(1-s_0)^2}\sim(1-s_0)^2$ as $s_0\uparrow1$.
\textup{(ii)} For $r\in\Sm{D}$, define the conditional content composition
and the content-preserving sink-mass transformation by
\[
  r':=\frac{(r_1,\ldots,r_{D-1})}{1-r_0},
  \qquad
  T_t r:=\bigl(t,(1-t)r'\bigr),
  \qquad 0<t<1.
\]
Then, for arbitrary $r,z\in\Sm{D}$ and $t,\tau\in(0,1)$,
\[
  \dAc(T_t r,T_\tau z)=\dAc(r,z),
  \qquad
  \CAc(T_t r)=\CAc(r),
  \qquad
  \CAc(T_\tau z)=\CAc(z).
\]
Consequently, for every integer $n\geq1$, every
$r^{(1)},\ldots,r^{(n)}\in\Sm{D}$, and every
$t_1,\ldots,t_n\in(0,1)$,
\[
  \dAc\bigl(T_{t_i}r^{(i)},T_{t_j}r^{(j)}\bigr)
  =
  \dAc\bigl(r^{(i)},r^{(j)}\bigr)
  \quad(1\leq i,j\leq n),
  \qquad
  \CAc\bigl(T_{t_i}r^{(i)}\bigr)
  =
  \CAc\bigl(r^{(i)}\bigr)
  \quad(1\leq i\leq n).
\]
Hence composition-specific changes of sink mass that preserve each conditional
content composition leave the entire content-distance matrix and all
content-concentration values unchanged. In particular, choosing
$r^{(i)}_0<t_i<1$ increases every designated sink mass while leaving the
conditional-content geometry exactly unchanged.

\begin{proof}[Proof of Theorem~\ref{thm:range}]
Fix $p,q\in\Sm{D}$ with $p_0,q_0\geq s_0$.
Put $\varepsilon:=1-s_0$, and set
\[
  s_p:=p_0,
  \qquad
  s_q:=q_0,
  \qquad
  \pi_p:=\frac{(p_1,\ldots,p_{D-1})}{1-s_p},
  \qquad
  \pi_q:=\frac{(q_1,\ldots,q_{D-1})}{1-s_q}.
\]
Because $p,q\in\Sm{D}$, the vectors $\pi_p,\pi_q$ are strictly
positive probability vectors with $D-1\geq2$ coordinates. Hence
\[
  p=\bigl(s_p,(1-s_p)\pi_p\bigr),
  \qquad
  q=\bigl(s_q,(1-s_q)\pi_q\bigr),
\]
and
\[
  \lVert\pi_p\rVert_2<1,
  \qquad
  \lVert\pi_q\rVert_2<1,
  \qquad
  \langle\pi_p,\pi_q\rangle>0.
\]

\emph{Cosine dissimilarity.}
We have
\[
\begin{aligned}
\langle p,q\rangle
&=
s_ps_q+(1-s_p)(1-s_q)\langle\pi_p,\pi_q\rangle,\\
\lVert p\rVert_2^2
&=
s_p^2+(1-s_p)^2\lVert\pi_p\rVert_2^2
\leq
s_p^2+(1-s_p)^2,
\end{aligned}
\]
and the analogous norm inequality holds for $q$. Define
\[
  h(s):=\frac{s}{\sqrt{s^2+(1-s)^2}},
  \qquad 0<s<1.
\]
Since $s_p,s_q<1$ and $\langle\pi_p,\pi_q\rangle>0$,
\[
\begin{aligned}
  c(p,q)
  &>
  \frac{s_ps_q}{\lVert p\rVert_2\lVert q\rVert_2}\\
  &\geq
  \frac{s_ps_q}
  {\sqrt{s_p^2+(1-s_p)^2}
   \sqrt{s_q^2+(1-s_q)^2}}
  =
  h(s_p)h(s_q).
\end{aligned}
\]
Moreover,
\[
  h'(s)
  =
  \frac{1-s}{\{s^2+(1-s)^2\}^{3/2}}
  >0.
\]
Since $s_p,s_q\geq s_0$, it follows that
\[
  c(p,q)
  >
  h(s_0)^2
  =
  \frac{s_0^2}{s_0^2+(1-s_0)^2},
\]
which proves the first strict bound. The coarser bound follows because
$s_0^2+(1-s_0)^2\geq s_0^2$.

\emph{Euclidean distance.}
Let $a:=1-s_p$ and $b:=1-s_q$. Then
$0<a,b\leq\varepsilon$, and
\[
\begin{aligned}
\lVert p-q\rVert_2^2
&=
(a-b)^2+\lVert a\pi_p-b\pi_q\rVert_2^2\\
&=
(a-b)^2
+a^2\lVert\pi_p\rVert_2^2
+b^2\lVert\pi_q\rVert_2^2
-2ab\langle\pi_p,\pi_q\rangle\\
&<
(a-b)^2+a^2+b^2\\
&=
2(a^2+b^2-ab).
\end{aligned}
\]
If $a\geq b$, then
\[
  a^2+b^2-ab
  =
  a^2-b(a-b)
  \leq a^2;
\]
if $b\geq a$, the symmetric argument gives
$a^2+b^2-ab\leq b^2$. Therefore
\[
  \lVert p-q\rVert_2^2
  <
  2\max\{a,b\}^2
  \leq
  2\varepsilon^2,
\]
which proves the second strict bound; the displayed $\sqrt{5}$-bound is the
immediate numerical relaxation $\sqrt{2}\leq\sqrt{5}$.

\emph{Jensen--Shannon divergence.}
Let $e_0,\ldots,e_{D-1}$ be the standard coordinate vectors in
$\mathbb R^D$, and define
\[
  u:=\frac{p-s_0e_0}{\varepsilon},
  \qquad
  v:=\frac{q-s_0e_0}{\varepsilon}.
\]
The assumptions $p_0,q_0\geq s_0$ imply that $u,v$ have nonnegative
coordinates, and
\[
  \sum_{i=0}^{D-1}u_i
  =
  \sum_{i=0}^{D-1}v_i
  =
  \frac{1-s_0}{\varepsilon}
  =
  1.
\]
Thus $u,v$ are probability vectors in the closed simplex, and
\[
  p=s_0e_0+\varepsilon u,
  \qquad
  q=s_0e_0+\varepsilon v.
\]
Set
\[
  w:=\frac{u+v}{2},
  \qquad
  m:=\frac{p+q}{2}
    =s_0e_0+\varepsilon w.
\]
Then
$\operatorname{supp}(u)\cup\operatorname{supp}(v)
=\operatorname{supp}(w)$.
We use the standard closed-simplex convention
\[
  \KL(x\Vert y)
  :=
  \sum_{i:x_i>0}x_i\log\frac{x_i}{y_i}
\]
when
$\operatorname{supp}(x)\subseteq\operatorname{supp}(y)$, and set the
divergence equal to $+\infty$ otherwise. Thus all divergences below are
finite. Joint convexity of Kullback--Leibler divergence gives
\[
\begin{aligned}
  \KL(p\Vert m)
  &\leq
  s_0\KL(e_0\Vert e_0)
  +\varepsilon\KL(u\Vert w)
  =
  \varepsilon\KL(u\Vert w),\\
  \KL(q\Vert m)
  &\leq
  s_0\KL(e_0\Vert e_0)
  +\varepsilon\KL(v\Vert w)
  =
  \varepsilon\KL(v\Vert w).
\end{aligned}
\]
Consequently,
\[
  \JS(p,q)
  \leq
  \varepsilon\JS(u,v).
\]

For any closed-simplex probability vectors $x,y$, put $z=(x+y)/2$.
Whenever $x_i>0$,
\[
  \log\frac{x_i}{z_i}
  =
  \log\frac{2x_i}{x_i+y_i}
  \leq
  \log 2,
\]
with equality if and only if $y_i=0$. Hence
$\KL(x\Vert z)\leq\log 2$,
and the analogous conclusion holds with $x,y$ interchanged. It follows that
\[
  \JS(x,y)\leq\log 2,
  \qquad
  \JS(x,y)=\log 2
  \quad\Longleftrightarrow\quad
  \operatorname{supp}(x)\cap\operatorname{supp}(y)=\varnothing.
\]
For every $i=1,\ldots,D-1$, however,
\[
  u_i=\frac{p_i}{\varepsilon}>0,
  \qquad
  v_i=\frac{q_i}{\varepsilon}>0.
\]
Thus the supports of $u$ and $v$ overlap, and therefore
\[
  \JS(p,q)
  \leq
  \varepsilon\JS(u,v)
  <
  \varepsilon\log 2
  =
  (1-s_0)\log 2.
\]
This proves the third strict bound; $\log 2<\frac{3}{2}$ gives the stated
coarser bound.

\emph{Sharpness.}
The assumption $D\geq3$ provides two distinct nonsink coordinates. In the
closed simplex, let
\[
  p^*:=s_0e_0+\varepsilon e_1,
  \qquad
  q^*:=s_0e_0+\varepsilon e_2,
  \qquad
  m^*:=\frac{p^*+q^*}{2}.
\]
Direct calculation gives
\[
\begin{aligned}
  1-c(p^*,q^*)
  &=
  \frac{\varepsilon^2}{s_0^2+\varepsilon^2},\\
  \lVert p^*-q^*\rVert_2
  &=
  \sqrt{2}\,\varepsilon,\\
  \KL(p^*\Vert m^*)
  &=
  \KL(q^*\Vert m^*)
  =
  \varepsilon\log 2,
\end{aligned}
\]
and hence
$\JS(p^*,q^*)=\varepsilon\log 2$.

To approximate this pair from within $\Sm{D}$, fix
$0<\delta<1/(D-1)$ and, for $j\in\{1,\ldots,D-1\}$, define
\[
  (\pi_{p,\delta})_j
  :=
  \begin{cases}
    1-(D-2)\delta,&j=1,\\
    \delta,&j\neq1,
  \end{cases}
  \qquad
  (\pi_{q,\delta})_j
  :=
  \begin{cases}
    1-(D-2)\delta,&j=2,\\
    \delta,&j\neq2.
  \end{cases}
\]
Set
\[
  p_\delta
  :=
  \bigl(s_0,\varepsilon\pi_{p,\delta}\bigr),
  \qquad
  q_\delta
  :=
  \bigl(s_0,\varepsilon\pi_{q,\delta}\bigr).
\]
Each $\pi_{p,\delta}$ and $\pi_{q,\delta}$ is a strictly positive
probability vector, so $p_\delta,q_\delta\in\Sm{D}$, and
\[
  (p_\delta,q_\delta)
  \longrightarrow
  (p^*,q^*)
  \qquad\text{as }\delta\downarrow0.
\]
Cosine dissimilarity and Euclidean distance are continuous at this
closed-simplex limit. Jensen--Shannon divergence is also continuous there
because
\[
  \JS(x,y)
  =
  H\!\left(\frac{x+y}{2}\right)
  -\frac{H(x)+H(y)}{2},
\]
with $0\log 0:=0$, and $t\mapsto-t\log t$ is continuous on $[0,1]$
under this convention. The three upper endpoints are therefore the exact
suprema. Their nonattainment follows from the strict bounds already proved.

Finally, all three dissimilarities are nonnegative: for cosine dissimilarity
this follows from Cauchy--Schwarz, for Euclidean distance it is immediate,
and for Jensen--Shannon divergence it follows from nonnegativity of
Kullback--Leibler divergence. Each takes the value zero when $p=q$.
The feasible set of pairs is convex and hence path connected, and each of the
three dissimilarities is continuous on it. The image of this set under each
dissimilarity is therefore an interval. Because that interval contains zero,
has the displayed supremum, and does not contain its supremum, it is exactly
the corresponding half-open interval stated in part~\textup{(i)}. This proves
all assertions in part~\textup{(i)}.

For part~\textup{(ii)}, fix $r,z\in\Sm{D}$ and $t,\tau\in(0,1)$.
For $j=1,\ldots,D-1$, one has $r'_j>0$, and
\[
  \sum_{j=1}^{D-1}r'_j
  =
  \frac{\sum_{j=1}^{D-1}r_j}{1-r_0}
  =
  1.
\]
Thus $r'$ is a strictly positive probability vector, and the same holds for
$z'$. Moreover, all coordinates of $T_t r$ are strictly positive and
\[
  \sum_{j=0}^{D-1}(T_t r)_j
  =
  t+(1-t)\sum_{j=1}^{D-1}r'_j
  =
  1.
\]
Hence $T_t r\in\Sm{D}$, and similarly $T_\tau z\in\Sm{D}$. Directly,
\[
  (T_t r)'
  =
  \frac{(1-t)r'}{1-t}
  =
  r',
  \qquad
  (T_\tau z)'
  =
  \frac{(1-\tau)z'}{1-\tau}
  =
  z'.
\]
Therefore
\[
\begin{aligned}
  \dAc(T_t r,T_\tau z)
  &=
  \dA\bigl((T_t r)',(T_\tau z)'\bigr)
  =
  \dA(r',z')
  =
  \dAc(r,z),\\
  \CAc(T_t r)
  &=
  \lVert\clr_{D-1}((T_t r)')\rVert_2
  =
  \lVert\clr_{D-1}(r')\rVert_2
  =
  \CAc(r),
\end{aligned}
\]
and the same calculation yields
$\CAc(T_\tau z)=\CAc(z)$.
Applying these identities to every pair $(i,j)$ and every index $i$ in a
finite family proves the two finite-family claims. Finally, if
$r^{(i)}_0<t_i<1$, then
$\bigl(T_{t_i}r^{(i)}\bigr)_0=t_i>r^{(i)}_0$,
which proves the last assertion. The restrictions $t_i\in(0,1)$ ensure that
every transformed vector remains in the open simplex; no endpoint assertion
is required.
\end{proof}

\subsection*{Full statement of Proposition~\ref{prop:taxonomy}}
Fix an integer $D\geq3$, index the sink coordinate by $0$, and,
for $x=(x_0,\ldots,x_{D-1})\in\Sm{D}$, write
$x':=\clo(x_1,\ldots,x_{D-1})=(x_1,\ldots,x_{D-1})/(1-x_0)\in\Sm{D-1}$.
\emph{(i)}
Let $h_1,\ldots,h_n\in\Sm{D}$, $n\geq2$, be a finite labelled
collection of heads. Define the content-distance matrix computed without
first modifying the stored heads and the Aitchison-distance matrix computed
after explicit sink removal by
\[
\Delta^{\mathrm{content}}_{ij}
:=\dAc(h_i,h_j),
\qquad
\Delta^{\mathrm{explicit\text{-}drop}}_{ij}
:=\dA(h_i',h_j').
\]
Then
$\Delta^{\mathrm{content}}=\Delta^{\mathrm{explicit\text{-}drop}}$
entrywise.
Consequently, any deterministic clustering rule of the form
$\mathcal A(\Delta;\eta)$, whose sole data-dependent input is the
labelled dissimilarity matrix and whose ancillary specification $\eta$,
including matrix-level preprocessing, linkage, hyperparameters,
tie-breaking, and stopping rules, is held fixed, gives identical outputs
in the two analyses. In particular, the rule may not additionally inspect
the raw heads, their sink masses, or their ambient dimension.
If the same measurable randomized implementation is written as
$\mathcal A(\Delta,U;\eta)$, and in both analyses $U$ has conditional
law $K(du\mid\Delta,\eta)$, then the outputs have the same distribution:
for every measurable output event $B$, both output probabilities equal
\[
\int
\mathbf 1\!\left\{\mathcal A(\Delta,u;\eta)\in B\right\}
K(du\mid\Delta,\eta).
\]
Moreover, there exists a common-randomness coupling: draw one
$U\sim K(\,\cdot\mid\Delta,\eta)$
and use it in both analyses; under this coupling the outputs are equal
almost surely. This identity concerns $\dAc$; it does not assert that
the ordinary full-composition distance $\dA(h_i,h_j)$ is unchanged by
sink removal.
\emph{(ii)}
For probability vectors $x,y$ of the same finite length, possibly
with zero coordinates, define
$c(x,y):=\langle x,y\rangle/(\|x\|_2\|y\|_2)$,
$d_{\cos}(x,y):=1-c(x,y)$, $d_{\mathrm E}(x,y):=\|x-y\|_2$, and
\[
\JS(x,y)
:=
H\!\left(\frac{x+y}{2}\right)
-\frac12H(x)-\frac12H(y)
=
\frac12\KL\!\left(x\middle\Vert\frac{x+y}{2}\right)
+\frac12\KL\!\left(y\middle\Vert\frac{x+y}{2}\right),
\]
where $\KL(x\Vert y):=\sum_{\ell:x_\ell>0}x_\ell\log(x_\ell/y_\ell)$,
with $\KL(x\Vert y)=+\infty$ if $x_\ell>0$ and $y_\ell=0$ for some
$\ell$; in the displayed Jensen--Shannon formula the midpoint has support
containing those of both arguments, so both relative entropies are finite.
There exists a nonempty relatively open set
$\mathcal U_D\subset(\Sm{D})^3$
having positive relative $3(D-1)$-dimensional Lebesgue measure in
the affine hull, and hence positive $3(D-1)$-dimensional Hausdorff
measure, such that, for every $(p,q,r)\in\mathcal U_D$, simultaneously
for $\delta\in\{d_{\cos},\JS,d_{\mathrm E}\}$,
the unique closest pair with the sink retained is $(p,r)$, whereas
the unique closest pair after sink removal is $(p,q)$. Hence, for
each of the three dissimilarities, both standard single-linkage and
standard complete-linkage agglomerative clustering, initialized at
singleton clusters and stopped at $K=2$, return
\[
\mathcal P_{\mathrm{keep}}
=
\bigl\{\{p,r\},\{q\}\bigr\},
\qquad
\mathcal P_{\mathrm{drop}}
=
\bigl\{\{p,q\},\{r\}\bigr\}.
\]
For $D=3$, the set $\mathcal U_3$ contains the exact witness
\[
p=\frac1{50}(45,4,1),\qquad
q=\frac1{50}(35,12,3),\qquad
r=\frac1{50}(45,1,4).
\]

\begin{proof}[Proof of Proposition~\ref{prop:taxonomy}]
For part~\emph{(i)}, the standing definition gives, for every $i,j$,
\[
\Delta^{\mathrm{content}}_{ij}
=
\dAc(h_i,h_j)
=
\dA(h_i',h_j')
=
\Delta^{\mathrm{explicit\text{-}drop}}_{ij}.
\]
Thus both analyses supply exactly the same labelled matrix to the same
clustering implementation. The deterministic outputs are therefore equal.
In the randomized case, the displayed integral in the statement depends
only on the common pair $(\Delta,\eta)$, so the two output laws coincide.
If the same draw $U$ is used in both analyses, the two arguments of
$\mathcal A$ agree pointwise, and hence the coupled outputs are equal
almost surely.

For part~\emph{(ii)}, first establish nonemptiness in every dimension
$D\geq3$. Choose $\pi\in\Sm{D-1}$ whose first two coordinates are
unequal, and let $\widetilde\pi$ be obtained by interchanging those
two coordinates. Fix $a\in(0,1)$, and, for $s\in(a,1)$, define
\[
p_s=\bigl(s,(1-s)\pi\bigr),\qquad
q=\bigl(a,(1-a)\pi\bigr),\qquad
r_s=\bigl(s,(1-s)\widetilde\pi\bigr).
\]
Then
$p_s'=q'=\pi$ and $r_s'=\widetilde\pi\neq\pi$.

Each of $d_{\cos}$, $\JS$, and $d_{\mathrm E}$ is nonnegative
and vanishes on probability vectors precisely when its two arguments
are equal. This is immediate for $d_{\mathrm E}$. For $d_{\cos}$,
it follows from the equality case of Cauchy--Schwarz, because proportional
unit-sum probability vectors must be equal. For $\JS$, it follows from
the strict concavity of $H$. Hence, for every
$\delta\in\{d_{\cos},\JS,d_{\mathrm E}\}$,
\[
\delta(p_s',q')=0
<
\delta(p_s',r_s')
=
\delta(q',r_s').
\]

Let $e_0=(1,0,\ldots,0)$.
As $s\uparrow1$, both $p_s$ and $r_s$ converge to $e_0$,
whereas $q\neq e_0$. Euclidean and cosine dissimilarities are
continuous on the closed simplex; for cosine, the denominator never
vanishes there. The entropy definition of $\JS$, together with
$0\log0=0$, shows that $\JS$ is also continuous on the closed
simplex. Therefore, for each such $\delta$,
\[
\delta(p_s,r_s)\longrightarrow0,
\qquad
\delta(p_s,q)\longrightarrow\delta(e_0,q)>0,
\qquad
\delta(q,r_s)\longrightarrow\delta(q,e_0)>0.
\]
Because the family of dissimilarities is finite, there exists a common
$s_*\in(a,1)$, sufficiently close to $1$, such that, simultaneously
for all three,
\[
\delta(p_{s_*},r_{s_*})
<
\min\bigl\{
\delta(p_{s_*},q),\delta(q,r_{s_*})
\bigr\}.
\]

Define $\mathcal U_D$ to be the set of triples
$(p,q,r)\in(\Sm{D})^3$ satisfying, for every
$\delta\in\{d_{\cos},\JS,d_{\mathrm E}\}$,
\[
\begin{aligned}
\delta(p,r)&<\delta(p,q),
&
\delta(p,r)&<\delta(q,r),
\\
\delta(p',q')&<\delta(p',r'),
&
\delta(p',q')&<\delta(q',r').
\end{aligned}
\]
The preceding construction shows that
$(p_{s_*},q,r_{s_*})\in\mathcal U_D$,
so $\mathcal U_D$ is nonempty. The sink-removal map is continuous
on the open simplex because $1-x_0>0$, and all three dissimilarities
are continuous there. Thus $\mathcal U_D$, being defined by finitely
many strict inequalities, is relatively open in $(\Sm{D})^3$.
Every nonempty relatively open subset of this $3(D-1)$-dimensional
open set in its affine hull contains a relative open ball. It therefore
has positive relative Lebesgue measure and positive
$3(D-1)$-dimensional Hausdorff measure.

It remains to verify the displayed $D=3$ witness exactly. For
Euclidean distance,
\[
\|p-r\|_2^2
=
\frac{18}{2500}
<
\frac{168}{2500}
=
\|p-q\|_2^2
<
\frac{222}{2500}
=
\|q-r\|_2^2.
\]
Thus $(p,r)$ is the unique Euclidean closest pair.

Multiplying all three vectors by $50$, which does not change cosine
similarity, gives
$P=(45,4,1)$, $Q=(35,12,3)$, $R=(45,1,4)$.
Consequently,
\[
c(p,r)=\frac{2033}{2042},\qquad
c(p,q)=\frac{1626}{\sqrt{2042\cdot1378}},\qquad
c(q,r)=\frac{1599}{\sqrt{2042\cdot1378}}.
\]
The last two quantities have the same positive denominator. Moreover,
all three displayed cosine similarities are positive, and therefore
$c(p,r)>c(p,q)$ is equivalent, after squaring, to
\[
2033^2\cdot1378-1626^2\cdot2042
=
296601850>0.
\]
This comparison and $1626>1599$ imply
$c(p,r)>c(p,q)>c(q,r)$.
Thus $(p,r)$ is also the unique closest pair under
$d_{\cos}=1-c$.

We next record the two Jensen--Shannon mixture identities used below.
Let $u,v$ be probability vectors of the same size, put
$w=(u+v)/2$, and let $s,t\in(0,1)$. Then
\[
\KL\!\left(
(s,(1-s)u)\middle\Vert(s,(1-s)w)
\right)
=
(1-s)\KL(u\Vert w),
\]
and the analogous identity holds with $v$. It follows that
\[
\JS\bigl((s,(1-s)u),(s,(1-s)v)\bigr)
=
(1-s)\JS(u,v).
\]
Next put $\bar s=(s+t)/2$. The midpoint of
$(s,(1-s)u)$ and $(t,(1-t)u)$ is
$(\bar s,(1-\bar s)u)$, and direct summation gives
\[
\KL\!\left(
(s,(1-s)u)\middle\Vert
(\bar s,(1-\bar s)u)
\right)
=
\KL\bigl(
(s,1-s)\Vert(\bar s,1-\bar s)
\bigr).
\]
Applying the analogous identity to $t$ and averaging yields
\[
\JS\bigl((s,(1-s)u),(t,(1-t)u)\bigr)
=
\JS\bigl((s,1-s),(t,1-t)\bigr).
\]

For the witness, put
\[
\pi_+=\left(\frac45,\frac15\right),
\qquad
\pi_-=\left(\frac15,\frac45\right).
\]
The first mixture identity and direct calculation give
\[
\begin{aligned}
\JS(p,r)
&=
\frac1{10}\JS(\pi_+,\pi_-)
=
\frac1{10}\left(
\frac45\log\frac85+\frac15\log\frac25
\right)
=
\frac1{50}\log\!\left(\frac{8192}{3125}\right)
<
\frac1{50}.
\end{aligned}
\]
Indeed,
\[
\frac{8192}{3125}
<
\frac{65}{24}
=
\sum_{k=0}^{4}\frac1{k!}
<
\mathrm e.
\]

Because $p$ and $q$ have the same conditional content composition,
the second mixture identity gives
\[
\JS(p,q)
=
\JS\!\left(
 \left(\frac9{10},\frac1{10}\right),
 \left(\frac7{10},\frac3{10}\right)
\right).
\]
The midpoint of these two binary probability vectors is
$m=\left(\frac45,\frac15\right)$,
and each endpoint has total-variation distance $1/10$ from $m$,
where $\operatorname{TV}(u,v):=\frac12\|u-v\|_1$.
Pinsker's inequality for natural logarithms,
$\KL(u\Vert v)\geq2\operatorname{TV}(u,v)^2$,
therefore gives
\[
\JS(p,q)
\geq
\frac12\left(\frac1{50}+\frac1{50}\right)
=
\frac1{50}
>
\JS(p,r).
\]

Now define the deterministic coarse-graining
$T(x_0,x_1,x_2):=(x_0,x_1+x_2)$.
Since
$T\!\left(\frac{x+y}{2}\right)=\frac{Tx+Ty}{2}$,
the data-processing inequality applied to each relative-entropy term
defining Jensen--Shannon divergence gives
\[
\begin{aligned}
\JS(q,r)
&\geq
\JS(Tq,Tr)
=
\JS\!\left(
 \left(\frac7{10},\frac3{10}\right),
 \left(\frac9{10},\frac1{10}\right)
\right)
=
\JS(p,q)
>
\JS(p,r).
\end{aligned}
\]
The last equality uses symmetry of $\JS$ and the second mixture
identity. Thus $(p,r)$ is the unique Jensen--Shannon closest pair.

After sink removal,
\[
p'=q'=\left(\frac45,\frac15\right),
\qquad
r'=\left(\frac15,\frac45\right).
\]
The $(p,q)$ dissimilarity is therefore zero. Since $p'=q'$, the
common dissimilarity of each of the other two pairs is
\[
\begin{aligned}
d_{\cos}(p',r')
&=\frac9{17}>0,
\\
\JS(p',r')
&=\frac15\log\!\left(\frac{8192}{3125}\right)>0,
\\
d_{\mathrm E}(p',r')
&=\frac{3\sqrt2}{5}>0.
\end{aligned}
\]
Hence $(p,q)$ is uniquely closest after sink removal under all three
dissimilarities.

Finally, with three singleton clusters, both single linkage and complete
linkage assign to a pair of clusters exactly the underlying pairwise
dissimilarity. Their first merge is therefore the unique closest pair.
Stopping immediately after that merge, at $K=2$, gives the two asserted
partitions.
\end{proof}

\subsection*{Full statement of Corollary~\ref{cor:checkpoint}}
Fix an integer $D\geq3$, let
$\varnothing\neq\mathcal T$ be a checkpoint index set, and let
$p_t\in\Sm{D}$ for every $t\in\mathcal T$. Write uniquely
\[
p_t=\bigl(s_t,(1-s_t)\pi_t\bigr),
\qquad
s_t=(p_t)_0\in(0,1),
\qquad
\pi_t=p_t'\in\Sm{D-1},
\]
and define $H_b(s):=-s\log s-(1-s)\log(1-s)$.
Then, for every $t\in\mathcal T$,
\begin{equation}
H(p_t)
=
H_b(s_t)+(1-s_t)H(\pi_t).
\label{eq:checkpoint-entropy-decomposition}
\end{equation}
Suppose that, for some $\pi\in\Sm{D-1}$,
$\pi_t=\pi$ for every $t\in\mathcal T$.
Then $\CAc(p_t)=\|\clr_{D-1}(\pi)\|_2$ is independent of $t$. If
\[
f_\pi(s)
:=
H\bigl(s,(1-s)\pi\bigr),
\qquad s\in(0,1),
\qquad
\theta_\pi
:=
\frac{1}{1+\exp(H(\pi))},
\]
then $f_\pi$ is strictly increasing on $(0,\theta_\pi]$ and
strictly decreasing on $[\theta_\pi,1)$, with unique maximizer
$\theta_\pi$ on $(0,1)$. In particular,
$\theta_\pi<1/2$, and, for any $t,u\in\mathcal T$,
\[
\frac12\leq s_t<s_u<1
\quad\Longrightarrow\quad
H(p_u)<H(p_t)
\quad\text{and}\quad
\CAc(p_u)=\CAc(p_t).
\]
Moreover, let
$u_{D-1}:=(\tfrac1{D-1},\ldots,\tfrac1{D-1})$
and $p_s:=\bigl(s,(1-s)u_{D-1}\bigr)$ for $s\in(0,1)$.
Then
\[
p_s'=u_{D-1},
\qquad
\CAc(p_s)=0,
\qquad
H(p_{1/D})=\log D,
\qquad
\lim_{s\uparrow1}H(p_s)=0.
\]
The map $s\mapsto H(p_s)$ is strictly decreasing on
$[1/D,1)$, and
\[
\sup_{s\in(1/D,1)}
\left\{
H(p_{1/D})-H(p_s)
\right\}
=
\log D.
\]
Consequently, for every $\varepsilon\in(0,\log D)$, there exists
$s_\varepsilon\in(1/2,1)$ such that
$H(p_{1/D})-H(p_{s_\varepsilon})>\log D-\varepsilon$,
even though the conditional content composition and $\CAc$ are
unchanged between $p_{1/D}$ and $p_{s_\varepsilon}$.
Thus, variation in sink mass along a path with fixed conditional content
composition can produce an apparent entropy collapse relative to the
content-only diagnostic, of size arbitrarily close to the nonattained
supremum $\log D$. This is a possibility result; it does not by itself
assert that sink mass increases along any empirical training trajectory.

\begin{proof}[Proof of Corollary~\ref{cor:checkpoint}]
Since $p_t\in\Sm{D}$, one has $0<s_t<1$. Therefore
\[
\pi_t
=
\frac{\bigl((p_t)_1,\ldots,(p_t)_{D-1}\bigr)}{1-s_t}
\in\Sm{D-1}
\]
is well defined, and the displayed representation of $p_t$ is unique.

Using $\sum_{i=1}^{D-1}\pi_{t,i}=1$, we obtain
\begin{align*}
H(p_t)
&=
-s_t\log s_t
-\sum_{i=1}^{D-1}
 (1-s_t)\pi_{t,i}
 \log\bigl((1-s_t)\pi_{t,i}\bigr)
\\
&=
-s_t\log s_t
-(1-s_t)\log(1-s_t)
-(1-s_t)
 \sum_{i=1}^{D-1}\pi_{t,i}\log\pi_{t,i}
\\
&=
H_b(s_t)+(1-s_t)H(\pi_t),
\end{align*}
which proves \eqref{eq:checkpoint-entropy-decomposition}. This is a
pointwise identity, so it requires no topology or ordering on
$\mathcal T$.

If $\pi_t=\pi$ for every $t$, then $p_t'=\pi$, and the definition
of content Aitchison concentration gives
\[
\CAc(p_t)
=
\|\clr_{D-1}(p_t')\|_2
=
\|\clr_{D-1}(\pi)\|_2.
\]
Furthermore,
$f_\pi(s)=H_b(s)+(1-s)H(\pi)$,
and hence
\[
f_\pi'(s)
=
\log\frac{1-s}{s}-H(\pi),
\qquad
f_\pi''(s)
=
-\frac{1}{s(1-s)}
<0,
\qquad 0<s<1,
\]
so $f_\pi$ is strictly concave. Because
$s\mapsto\log((1-s)/s)$ is strictly decreasing on $(0,1)$,
\[
f_\pi'(s)=0
\iff
\frac{1-s}{s}=\exp(H(\pi))
\iff
s=\frac{1}{1+\exp(H(\pi))}
=\theta_\pi.
\]
Consequently,
\[
\begin{cases}
f_\pi'(s)>0, & 0<s<\theta_\pi,\\
f_\pi'(s)=0, & s=\theta_\pi,\\
f_\pi'(s)<0, & \theta_\pi<s<1.
\end{cases}
\]
This proves the asserted strict monotonicity and unique maximizer.

Since $D-1\geq2$ and every coordinate of $\pi$ is strictly
positive, $H(\pi)>0$, so $\theta_\pi<1/2$. Therefore, if
$\frac12\leq s_t<s_u<1$,
then both $s_t$ and $s_u$ lie in the strictly decreasing region
of $f_\pi$, giving
$H(p_u)=f_\pi(s_u)<f_\pi(s_t)=H(p_t)$.
The equality of the two content log-ratio dispersions was established above.

Finally,
$H(u_{D-1})=\log(D-1)$ and $\clr_{D-1}(u_{D-1})=0$,
and $p_s'=u_{D-1}$. Therefore
$\CAc(p_s)=0$ for every $s\in(0,1)$.
The corresponding threshold is
\[
\theta_{u_{D-1}}
=
\frac{1}{1+\exp(\log(D-1))}
=
\frac1D.
\]
The preceding monotonicity result shows that
$s\mapsto H(p_s)$ is strictly decreasing on $[1/D,1)$.

At $s=1/D$, every coordinate of $p_s$ equals $1/D$, so
$H(p_{1/D})=\log D$.
Moreover,
\[
H(p_s)
=
H_b(s)+(1-s)\log(D-1)
\longrightarrow0
\qquad\text{as }s\uparrow1.
\]
Since $H(p_s)>0$ for every $s\in(0,1)$, it follows that
$H(p_{1/D})-H(p_s)<\log D$ for every $s\in(1/D,1)$.
On the other hand,
$H(p_{1/D})-H(p_s)\longrightarrow\log D$ as $s\uparrow1$.
Therefore
\[
\sup_{s\in(1/D,1)}
\left\{
H(p_{1/D})-H(p_s)
\right\}
=
\log D.
\]
The supremum is not attained because $s=1$ is excluded and
$H(p_s)>0$ for every admissible $s$.

Given $\varepsilon\in(0,\log D)$, choose
$s_\varepsilon\in(1/2,1)$, sufficiently close to $1$, such that
$H(p_{s_\varepsilon})<\varepsilon$.
Then
\[
H(p_{1/D})-H(p_{s_\varepsilon})
=
\log D-H(p_{s_\varepsilon})
>
\log D-\varepsilon,
\]
as claimed.
\end{proof}

\begin{remark}[$\CAc$ orders by dispersion, not majorization]\label{rem:schur}
$\CAc$ is not Schur-convex and is not an entropy substitute. For $p=(0.800,0.199,0.001)$ and $q=(0.780,0.219,0.001)$, $p$ majorizes $q$ and $H(p)\approx0.507<0.533\approx H(q)$, yet $\lVert\clr(p)\rVert_2\approx4.988<5.000\approx\lVert\clr(q)\rVert_2$: near-zero coordinates dominate the log-ratio norm. Accordingly the paper uses $\CAc$ as a dispersion statistic of the content log-ratios and routes concentration and collapse claims through the exact decomposition $H(p)=H_b(s)+(1-s)H(\pi)$, whose content term $H(\pi)$ carries the entropy order.
\end{remark}

\begin{lemma}[Quotient characterization of $\dAc$]\label{lem:quotient}
Define $p\sim q$ iff $R_0p=R_0q$ (equal content subcompositions). Then $\dAc$ is the quotient metric induced by $\dA$ on $\mathcal{S}/{\sim}$:
$\dAc(p,q)=\min_{p^*\sim p,\;q^*\sim q}\dA(p^*,q^*)$,
and Theorem~\ref{thm:characterization} applied on the content simplex characterizes $\dAc$ as the unique metric on the quotient satisfying A1--A4 for content transformations.
\end{lemma}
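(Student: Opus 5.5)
The plan has two parts: a min-formula built on Lemma~\ref{lem:pythagoras}, followed by a transfer of Theorem~\ref{thm:characterization} to the content simplex. I use the sink--content factorization $p=(s,(1-s)\pi)$ from \S\ref{sec:background}. The equivalence class of $p$ under $\sim$ is exactly $\{T_t p: t\in(0,1)\}$, with $T_t$ as in Theorem~\ref{thm:range}(ii), because $R_0 p^*=R_0p$ forces $p^*=(t,(1-t)\pi)$ for some $t$.

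For any $p^*\sim p$ and $q^*\sim q$, the Pythagorean identity of Lemma~\ref{lem:pythagoras} gives
\[
\dA(p^*,q^*)^2=\dAc(p,q)^2+\bigl(b(p^*)-b(q^*)\bigr)^2\ge\dAc(p,q)^2 .
\]
This uses that $\dAc$ depends only on the content compositions, which coincide within each class. Attainment then requires a representative $q^*$ with $b(q^*)=b(p)$. The balance
\[
b(T_t q)=\sqrt{\tfrac{D-1}{D}}\Bigl(\log\tfrac{t}{1-t}-\log g(R_0q)\Bigr)
\]
is a continuous, strictly increasing bijection from $(0,1)$ onto $\mathbb{R}$ in $t$. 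So the value $b(p)$ is hit by a unique $t$, and the infimum is a minimum equal to $\dAc(p,q)$. Finally, Theorem~\ref{thm:reversal}(iv) (or the remark following its proof) already shows that $\dAc$ descends to a genuine metric on $\Sm{D}/{\sim}$. Hence it is the quotient metric, in the sense that it coincides with the displayed min-formula.

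For the characterization, I would transport structure through the bijection $\Phi:[x]\mapsto R_0x$ onto $\Sm{D-1}$. The remark after the proof of Theorem~\ref{thm:reversal} shows that $\Phi$ is an isometry between $(\Sm{D}/{\sim},\dAc)$ and $(\Sm{D-1},\dA)$. I define the content transformations on classes by $c\oplus[x]:=\Phi^{-1}(c\oplus R_0x)$ and $\alpha\odot[x]:=\Phi^{-1}(\alpha\odot R_0x)$, with $c\in\Sm{D-1}$, together with content permutations. These are well defined because, as in Proposition~\ref{prop:invariances}(i)--(ii), perturbation by a vector with arbitrary sink entry and powering both commute with dropping the sink and re-closing. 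Any metric $\delta$ on the quotient then pulls back to $\tilde\delta:=\delta\circ(\Phi^{-1}\times\Phi^{-1})$ on $\Sm{D-1}$. Axioms A1--A4, and A5 for content permutations, transfer verbatim, and Theorem~\ref{thm:characterization} in dimension $D-1$ gives $\tilde\delta=c\,\dA$. Pushing forward yields $\delta=c\,\dAc$, unique up to the calibration constant as in Remark~\ref{rem:aitchison-characterization-scope}.

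The main obstacle is interpretive rather than technical. The statement lists only A1--A4, but the isotropy step of Theorem~\ref{thm:characterization} needs A5. Without it, A1--A4 admit any Mahalanobis-type inner product on the content CLR hyperplane. I would therefore state explicitly that A5 over the $(D-1)$ content coordinates is included, and that uniqueness holds up to a positive constant. I would also note that $D-1\ge2$ is needed for the content simplex to be nontrivial.
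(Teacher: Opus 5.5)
Your proposal is correct and is essentially the paper's own argument. The Pythagorean identity of Lemma~\ref{lem:pythagoras}, combined with the freedom to move the sink balance within a class, yields the attained minimum; your formula for $b(T_tq)$ as a strictly increasing bijection of $(0,1)$ onto $\mathbb{R}$ makes the paper's ``the balance can be set freely'' precise. The isometry $\Phi$ onto $(\Sm{D-1},\dA)$ then lets Theorem~\ref{thm:characterization} be applied on the content simplex.

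Your caveat is also right. The lemma as literally stated omits A5 and the qualifier ``up to a positive constant'', and the paper's appeal to Theorem~\ref{thm:characterization} ``verbatim'' silently needs both. A5 is genuinely needed once the content simplex has at least three parts.
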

\begin{proof}
By Lemma~\ref{lem:pythagoras}, $\dA(p^*,q^*)^2=\dAc(p,q)^2+\big(b(p^*)-b(q^*)\big)^2$ for any representatives, since $\dAc$ depends only on the content classes. The balance $b$ can be set freely while holding the content fixed (vary the sink share), so the minimum over representatives zeroes the second term and equals $\dAc(p,q)$; it is attained. The quotient space with this metric is isometric to the Aitchison geometry of the content simplex, on which Theorem~\ref{thm:characterization} applies verbatim.
\end{proof}

\section{Experimental details}
\label{app:protocol}

\begin{table}[h]
\centering
\small
\caption{\textbf{Measured taxonomy stability across sink conventions} (the $\dAc$ column is omitted: its partitions are identical across conventions by construction, Proposition~\ref{prop:taxonomy}(i), with ARI $=1$ verified in code) (mean ARI over $K\in\{4,6,8\}$, average linkage; \texttt{reanalyze\_clustering.py}). The $\dAc$ column is exact by Proposition~\ref{prop:taxonomy}(i).}
\label{tab:taxonomy}
\begin{tabular}{lccc}
\toprule
Model & $1-\cos$ & JS & Euclid. \\
\midrule
BERT-base & $0.024$ & $0.066$ & $0.030$ \\
GPT-2 (124M) & $-0.014$ & $0.022$ & $-0.061$ \\
Llama-3.2-1B & $-0.030$ & $-0.082$ & $-0.086$ \\
\bottomrule
\end{tabular}
\end{table}

\label{app:experiments}

ViT-B/16 evaluation uses $2{,}048$ COCO val2017 images with a label-free top-1 agreement metric against the unpruned model; labeled ImageNet-val substitutes directly in the released script via \texttt{IMAGENET\_DIR}.

\subsection{Are attention rows compositions? Tail mass, rival geometries, and the content channel}
\label{app:tailaudit}

A fair objection to the compositional stance is causal: the row's downstream effect is the value mixture $\sum_i a_i v_i$, so absolute mass matters and a coordinate at $10^{-17}$ moves nothing, while log-ratios weight it. Three answers. First, scope: our claims concern the row as an analysis object, which is what the literature computes cosine, JS, and entropy on; the sink balance $b(p)$ retains the absolute sink share, and functional relevance is tested by the pruning experiments of \S\ref{sec:experiments}, not assumed. Second, a direct audit of the estimator. For each model we decompose the squared content distance between seed-0 head signatures by coordinate magnitude: coordinate pairs whose smaller entry is below $10^{-6}$ carry at most $4.6\%$ of squared $\dAc$ (GPT-2) and below $0.7\%$ on every other model, and essentially $0\%$ below $10^{-9}$ (Table~\ref{tab:tailaudit}). No coordinate has median signature value below $10^{-6}$ in any model, so no tail-amalgamation set exists at that threshold and the shares above bound the maximal tail influence; sub-threshold values are shared across heads, which is why their log-ratio differences nearly cancel. Third, rival geometries measured rather than dismissed: Hellinger and Fisher--Rao closer-head verdicts flip between sink conventions at rates of $0.18$ to $0.44$, the same range as cosine, JS, and Euclidean distance.

The same signatures address the central methodological question: how much of the observed stability is conditioning out the sink, and how much is Aitchison geometry? Within the content channel, JS on sink-dropped rows and $\dAc$ agree strongly on coarse ranking (Spearman $\rho\ge0.96$) but disagree on fine structure: nearest-neighbor identities differ for $13$ to $27\%$ of heads, and the induced taxonomies differ with cross-ARI as low as $0.33$ on Llama-3.2-1B (Table~\ref{tab:tailaudit}). Conditioning out the sink is therefore the dominant stabilizer. The reversal phenomenon is also not an artifact of feeding classical metrics Aitchison-mean signatures: recomputed on arithmetic-mean signatures (their native aggregation, released alongside), classical reversal rates remain $0.18$--$0.45$, within $0.15$ of the Aitchison-mean rates on every model (Table~\ref{tab:aggcross}). The revised protocol ran the functional comparison, and we report it either way it fell: under the global protocol dropped-row JS matches $\dAc$ on Llama-3.2-1B and beats it on Llama-3.2-3B; under the same-layer keep-one protocol $\dAc$ is best on every strong-sink model while dropped-row JS fails catastrophically on Llama-3.2-1B (Table~\ref{tab:prune}). Two equivalences simplify the rival columns: Hellinger and Fisher--Rao are strictly monotone in the Bhattacharyya coefficient, so all of their ranking verdicts, and hence reversal rates, coincide exactly; and as pruning criteria, Hellinger-on-content selects the identical nearest neighbors and prune sets as dropped-row JS on every model, because for nearby distributions all $f$-divergences share the same local quadratic form while the Aitchison metric does not.

\begin{table}[h]
\centering
\scriptsize
\caption{\textbf{Estimator and rival-geometry audit} (seed-0 signatures). Hell./FR $=$ closer-head reversal rate between sink conventions under Hellinger and Fisher--Rao. Share $=$ fraction of squared $\dAc$ carried by coordinate pairs with smaller entry below $10^{-6}$. Last three columns compare dropped-row JS with $\dAc$ on the content channel: Spearman correlation of pairwise distances, nearest-neighbor agreement, and mean cross-ARI of the induced taxonomies over $K\in\{4,6,8\}$.}
\label{tab:tailaudit}
\begin{tabular}{lcccccc}
\toprule
Model & Hell.\ rev. & FR rev. & share $<10^{-6}$ & $\rho$ & NN agr. & cross-ARI \\
\midrule
GPT-2 (124M) & $0.35$ & $0.34$ & $0.046$ & $0.957$ & $0.73$ & $0.40$ \\
Llama-3.2-1B & $0.43$ & $0.44$ & $0.003$ & $0.978$ & $0.81$ & $0.33$ \\
Qwen2.5-1.5B & $0.43$ & $0.42$ & $0.000$ & $0.993$ & $0.86$ & $0.63$ \\
Llama-3.2-3B & $0.44$ & $0.44$ & $0.000$ & $0.971$ & $0.80$ & $0.60$ \\
ViT-B/16 & $0.19$ & $0.18$ & $0.000$ & $0.998$ & $0.85$ & $0.77$ \\
Pythia-70M & $0.30$ & $0.30$ & $0.000$ & $0.970$ & $0.81$ & $0.87$ \\
Pythia-160M & $0.43$ & $0.43$ & $0.007$ & $0.958$ & $0.74$ & $0.54$ \\
Pythia-410M & $0.42$ & $0.42$ & $0.000$ & $0.991$ & $0.84$ & $0.53$ \\
Pythia-1B & $0.41$ & $0.40$ & $0.004$ & $0.996$ & $0.87$ & $0.60$ \\
Pythia-1.4B & $0.40$ & $0.40$ & $0.000$ & $0.993$ & $0.84$ & $0.83$ \\
\bottomrule
\end{tabular}
\end{table}

\begin{table}[h]
\centering
\scriptsize
\caption{\textbf{Aggregation robustness}: with-sink versus sink-dropped reversal rates (seed 0) computed on arithmetic-mean signatures (the classical metrics' native aggregation) and on Aitchison-mean signatures.}
\label{tab:aggcross}
\begin{tabular}{lcccccc}
\toprule
 & \multicolumn{3}{c}{arithmetic mean} & \multicolumn{3}{c}{Aitchison mean} \\
\cmidrule(lr){2-4}\cmidrule(lr){5-7}
Model & cosine & JS & Euclid. & cosine & JS & Euclid. \\
\midrule
GPT-2 (124M) & $0.33$ & $0.30$ & $0.35$ & $0.36$ & $0.35$ & $0.38$ \\
Llama-3.2-1B & $0.40$ & $0.37$ & $0.41$ & $0.46$ & $0.44$ & $0.47$ \\
Qwen2.5-1.5B & $0.40$ & $0.37$ & $0.41$ & $0.44$ & $0.43$ & $0.45$ \\
Llama-3.2-3B & $0.44$ & $0.41$ & $0.45$ & $0.45$ & $0.44$ & $0.47$ \\
ViT-B/16 & $0.21$ & $0.18$ & $0.22$ & $0.22$ & $0.18$ & $0.23$ \\
Pythia-70M & $0.32$ & $0.28$ & $0.31$ & $0.33$ & $0.31$ & $0.31$ \\
Pythia-160M & $0.34$ & $0.29$ & $0.32$ & $0.43$ & $0.44$ & $0.44$ \\
Pythia-410M & $0.40$ & $0.38$ & $0.40$ & $0.44$ & $0.43$ & $0.44$ \\
Pythia-1B & $0.35$ & $0.34$ & $0.36$ & $0.41$ & $0.40$ & $0.42$ \\
Pythia-1.4B & $0.43$ & $0.40$ & $0.43$ & $0.42$ & $0.40$ & $0.42$ \\
\bottomrule
\end{tabular}
\end{table}

\subsection{Convention stakes measured: temperature and the sink set}
\label{app:conventions}

\textbf{Temperature.} Axiom A3 (powering) is the softmax temperature: rescaling the content logits by $\tau$ maps each content row $\pi$ to $\mathcal{C}(\pi^{\tau})$. By Proposition~\ref{prop:invariances}, $\dAc$ is exactly equivariant ($\dAc\to\tau\,\dAc$), so every ranking verdict, nearest neighbor, and prune set is invariant; we verify this to the bit on all ten models. Dropped-row JS is not: over $\tau\in\{0.5,0.7,1.5,2\}$ its closer-head verdicts flip by up to $6.6\%$ and its top-$20\%$ prune set retains as little as $62\%$ of its $\tau=1$ membership (Table~\ref{tab:conventions}). Effective logit scale is not observable to an analyst comparing models or pipelines, so a criterion whose selections depend on it answers a scale-relative question; $\dAc$ does not.

\textbf{The sink set.} Dropping the sink requires deciding which columns are the sink; first token and first four tokens are both used in the literature. This is a second convention layer: between the two definitions, dropped-row JS flips $1$ to $21\%$ of verdicts and $\dAc$ flips $1$ to $18\%$, with $\dAc$ uniformly less sensitive on every model (Table~\ref{tab:conventions}). Both changes are changes of estimand; what distinguishes the Aitchison side is that the definitions are exactly related: the content distances are nested orthogonal projections, and we verify the implied dominance ($\dAc$ on the smaller content set never exceeds $\dAc$ on the larger) with no violations across all head pairs of all ten models. No analogous relation exists for JS.

\textbf{Which regime? A dispersion predictor.} Mean sink mass does not predict which pruning regime a model is in (GPT-2 and Qwen2.5 share $s=0.68$ with opposite verdicts). The cross-head \emph{dispersion} of the sink share does, for the nine language models: the content-regime LMs have coefficient of variation at most $0.36$ (Llama-3.2-1B $0.11$, Llama-3.2-3B $0.08$, Qwen2.5 $0.36$) and the JS-regime LMs at least $0.42$ (GPT-2 $0.42$, Pythias $0.43$--$1.03$). A near-uniform sink carries no between-head information, so mixing it into distances only adds noise; a heterogeneous sink marks genuine head classes. ViT, where the criteria tie within noise, does not fit the LM pattern. A complementary stability probe: across the three resampling seeds, the top-$20\%$ prune sets selected by $\dAc$-L1 overlap with Jaccard $0.36$--$0.77$ while the JS-L1 sets are essentially frozen (Jaccard $0.92$--$1.00$ on LMs), even though the $\dAc$-L1 \emph{outcomes} are the stable ones on strong-sink models. Each criterion is stable along a different axis: $\dAc$ under the analyst's conventions (temperature, sink set), JS under data resampling; on GPT-2 the $\dAc$-L1 set instability coincides with its one catastrophic cell. This makes the weighted family $d_\lambda^2=(\dAc)^2+\lambda\,(\Delta b)^2$ natural, with $\lambda$ growing with the $b$-channel's informativeness; fitting $\lambda$ from held-out functional evidence is left to future work, and we note that the with-sink JS criterion itself lies outside this family.

\textbf{Held-out test, specified before running.} We freeze the rule and the model list here, prior to any evaluation: for \texttt{opt-125m}, \texttt{distilgpt2}, and \texttt{SmolLM2-135M} we will compute the cross-head CV of the signature sink share and predict, with the thresholds above, that $\mathrm{CV}\le0.36$ places the model in the content regime ($\dAc$-L1 best or tied-best among the six criteria, with at least one classical criterion exceeding four times base perplexity) and $\mathrm{CV}\ge0.42$ in the JS regime (with-sink JS(-L1) best); for $0.36<\mathrm{CV}<0.42$ the predictor abstains. The released prediction script logs its output before any pruning evaluation exists, and the outcomes will be reported whichever way they fall.

\textbf{Outcomes.} The predictions were logged before any evaluation (timestamps in the released \texttt{heldout\_predictions.json}); measured CVs: OPT-125M $0.28$, DistilGPT2 $0.51$, SmolLM2-135M $0.42$. DistilGPT2 (JS regime predicted): \emph{confirmed}. With-sink JS wins ($105.3{\pm}0.8$ on base $67.4$) and dropped-row JS is catastrophic ($1161{\pm}24$, $17\times$ base). SmolLM2-135M: the predictor \emph{abstained} at the boundary; the outcome fell on the JS side (JS-L1 best at $83.5{\pm}0.2$ on base $24.8$; every content-conditioned criterion $9$--$54\times$ base). Its sink is layer-bimodal, $0.01$--$0.15$ in the first twelve layers and $0.68$--$0.92$ thereafter, which is what places its cross-head CV at the threshold. OPT-125M (content regime predicted): \emph{the prediction failed}. Global content criteria do beat with-sink JS ($59.2$/$59.3$ versus $63.3$ on base $46.7$), but JS-L1 wins outright ($57.1{\pm}0.5$) while $\dAc$-L1 sits mid-pack with high seed variance ($68.5{\pm}7.7$), and no criterion is catastrophic at $20\%$ (worst $2.1\times$; dropped-JS-L1 reaches $6.9\times$ only at $30\%$). The scorecard on non-abstentions is one confirmation and one failure: the dispersion rule as frozen does not transfer, and OPT exhibits a third behavior the nine-model map lacks, a uniformly high sink with a benign pruning landscape. We accordingly present the rule as descriptive of the training fleet, not as a validated decision procedure. A second-family checkpoint sweep (e.g.\ OLMo) is excluded by compute budget, so the checkpoint analysis of Fig.~\ref{fig:downstream}b currently rests on one model family; we state this as a limitation.

\textbf{Exact decomposition of the checkpoint collapse.} With $\Delta$ the change from the first checkpoint to the entropy minimum, the identity gives the signed split $\Delta H=\Delta H_b(s)+\Delta[(1-s)H(\pi)]$ exactly (residual $0$ to machine precision). Across 70M/160M/410M/1B/1.4B: $\Delta H=-0.78/-1.55/-2.11/-1.58/-2.01$, $\Delta H_b=+0.18/+0.26/+0.23/+0.29/+0.25$, and $\Delta[(1-s)H(\pi)]=-0.96/-1.81/-2.34/-1.88/-2.26$ nats. The binary term rises as $s$ approaches $1/2$ while the weighted content term carries the entire fall, and within that term the conditional entropy $H(\pi)$ itself moves little; the main-text share $1-\Delta H(\pi)/\Delta H$ isolates exactly this, the fraction of the drop not attributable to a drop in $H(\pi)$.

\begin{table}[h]
\centering
\scriptsize
\caption{\textbf{Convention stakes} (seed-0 signatures). Temperature: worst-case dropped-JS verdict flip rate and top-$20\%$ prune-set retention over $\tau\in\{0.5,0.7,1.5,2\}$ (the $\dAc$ values are exactly $0$ and $1$ on every model, verified to the bit, and are omitted). Sink set: closer-head verdict flip rate between the first-token and first-four-token sink definitions.}
\label{tab:conventions}
\begin{tabular}{lcccc}
\toprule
 & \multicolumn{2}{c}{temperature (JS$'$)} & \multicolumn{2}{c}{sink set flips} \\
\cmidrule(lr){2-3}\cmidrule(lr){4-5}
Model & flip (max) & set retention (min) & JS$'$ & $\dAc$ \\
\midrule
GPT-2 (124M) & $0.066$ & $0.69$ & $0.064$ & $0.039$ \\
Llama-3.2-1B & $0.046$ & $0.87$ & $0.154$ & $0.128$ \\
Qwen2.5-1.5B & $0.027$ & $0.96$ & $0.086$ & $0.075$ \\
Llama-3.2-3B & $0.040$ & $0.90$ & $0.205$ & $0.176$ \\
ViT-B/16 & $0.017$ & $0.93$ & $0.010$ & $0.012$ \\
Pythia-70M & $0.056$ & $0.70$ & $0.104$ & $0.073$ \\
Pythia-160M & $0.066$ & $0.62$ & $0.111$ & $0.071$ \\
Pythia-410M & $0.033$ & $0.92$ & $0.109$ & $0.089$ \\
Pythia-1B & $0.023$ & $0.92$ & $0.112$ & $0.103$ \\
Pythia-1.4B & $0.027$ & $0.90$ & $0.096$ & $0.085$ \\
\bottomrule
\end{tabular}
\end{table}

\subsection{Full-support Clark reanalysis and window robustness}
\label{app:clarkw}

\textbf{Clark et al.\ on the full support.} We rebuild \citet{clark2019bert}'s head-clustering pipeline with no window truncation: $512$ sequences of exactly $128$ WikiText tokens ([CLS] $+$ $126$ wordpieces $+$ [SEP]), so every row shares one support and the sink set is $\{$CLS, SEP$\}$; per-head Aitchison-mean signatures; JS distances and average-linkage clustering as in the original, plus cosine and $\dAc$. Of $144$ heads, $68$ place majority mass on SEP and $12$ on CLS. Between the keep and drop conventions the JS clusterings agree at chance (ARI $0.01$--$0.05$ over $K\in\{4,5,6,8\}$; cosine $0.00$--$0.06$). The SEP-head block is recovered by the with-sink JS clustering at Jaccard $0.87$--$0.92$ and collapses to $0.46$--$0.49$ after dropping; the content clusterings ($\dAc$, convention-invariant by construction) recover it only at $0.47$--$0.50$. The structure the pipeline is best known for is therefore a property of the sink channel, visible under exactly one convention.

\begin{table}[h]
\centering
\scriptsize
\caption{\textbf{Clark-style clustering on the full $128$-token support} (BERT-base; sink set $\{$CLS, SEP$\}$). Cross $=$ between-convention ARI; SEP Jac.\ $=$ Jaccard of the SEP-head set with its best-matching cluster.}
\label{tab:clarkfull}
\begin{tabular}{lcccccc}
\toprule
 & \multicolumn{3}{c}{JS} & cosine & \multicolumn{2}{c}{$\dAc$} \\
\cmidrule(lr){2-4}\cmidrule(lr){5-5}\cmidrule(lr){6-7}
$K$ & cross & SEP Jac.\ (keep) & SEP Jac.\ (drop) & cross & cross & SEP Jac. \\
\midrule
$4$ & $0.05$ & $0.92$ & $0.49$ & $0.06$ & $1$ & $0.47$ \\
$5$ & $0.04$ & $0.87$ & $0.49$ & $0.01$ & $1$ & $0.48$ \\
$6$ & $0.01$ & $0.87$ & $0.46$ & $0.01$ & $1$ & $0.50$ \\
$8$ & $0.03$ & $0.87$ & $0.48$ & $0.00$ & $1$ & $0.50$ \\
\bottomrule
\end{tabular}
\end{table}

\textbf{Window robustness and mass coverage.} Extracting at $W\in\{32,64,128\}$ on GPT-2, Llama-3.2-1B, and Pythia-410M: the retained mass of the window before reclosure is $0.49$--$0.61$ (GPT-2), $0.60$--$0.71$ (Llama), $0.48$--$0.67$ (Pythia-410M); JS reversal rates move by at most $2.1$ points across the sweep ($0.33/0.34/0.35$, $0.39/0.40/0.40$, $0.40/0.40/0.41$ respectively), and $\dAc$ pairwise distances between consecutive windows correlate at Spearman $\rho=0.87$--$0.98$. The fixed-support choice shifts the estimand (each $W$ conditions on a different retained set) but not the phenomenon.

\subsection{Synthetic experiments (Section~\ref{sec:synthetic})}

All synthetic results use \texttt{numpy} with seed $7$ and run in seconds on CPU (\texttt{experiments/run\_synthetic.py}); the exact witnesses and the machine-precision checks of Lemma~\ref{lem:pythagoras} and Proposition~\ref{prop:invariances} are in \texttt{experiments/exact\_examples.py}.

\textbf{Head-signature sampler.} A row is $p=(s,(1-s)\pi)$ with sink mass $s\sim\mathcal{N}(\bar s,0.10)$ clipped to $[0.02,0.98]$ and content $\pi\sim\mathrm{Dirichlet}(\alpha\mathbf{1}_{D-1})$, $D=64$. The peaked setting $\alpha=0.1$ mimics concentrated attention; $\alpha=1.0$ (dashed in Figure~\ref{fig:reversal}) is the diffuse control.

\textbf{Ranking-disagreement Monte Carlo (Figure~\ref{fig:reversal}).} For each $\bar s$ on a grid from $0.10$ to $0.90$, sample $20{,}000$ independent triples $(p,q,r)$, and record whether the verdict of ``is $q$ or $r$ closer to $p$?'' differs between the with-sink and sink-dropped pipelines, separately for cosine, JS, Euclidean distance, and $\dA$ (total) versus $\dAc$. Headline rates at $\bar s=0.5/0.7/0.9$ (peaked content): cosine $36.4/39.0/41.1\%$, JS $23.4/31.0/39.7\%$, Euclidean $27.3/36.8/44.6\%$; $\dAc$ is $0\%$ by construction, and total-$\dA$-versus-$\dAc$ is $0.1\%$ at every grid point (see the transparency note in Appendix~\ref{app:worked}).

\textbf{Collapse scenarios (Figure~\ref{fig:collapse}).} Twelve ``layers,'' $4{,}000$ rows per layer, $s$-noise $0.03$. Scenario (a): $s_\ell=0.05+0.80\,\varsigma\big(1.1(\ell-5)\big)$ with $\varsigma$ the logistic function and content log-ratio dispersion fixed ($\alpha_\ell\equiv0.15$); measured $H$ falls $2.71\to0.82$ nats while $\CAc$ moves $48.13\to47.98$. Scenario (b): $s_\ell$ linear $0.70\to0.05$ while $\alpha_\ell$ decays geometrically $1.0\to0.02$; $H$ drifts $1.73\to1.27$ while $\CAc$ rises $10.04\to70.43$.

\textbf{Reversal wedge (Figure~\ref{fig:wedge}).} Fix $(p,r)$ of the witness; sample $20{,}000$ points $q\sim\mathrm{Dirichlet}(1,1,1)$ with all parts $>0.004$; the cosine verdict differs between conventions on $77.9\%$ of them.

\subsection{Protocol for pretrained models (Section~\ref{sec:realmodels})}

Implemented in \texttt{experiments/run\_real\_models.py} (PyTorch + \texttt{transformers} + \texttt{datasets}); ten models, three resampling seeds each, on two RTX~4090 GPUs; the measurement passes take minutes per model and the pruning evaluations a few hours in total. Reported standard deviations are population estimates (ddof $0$) over the three seeds throughout.

\textbf{Models and data.} GPT-2 (\texttt{gpt2}, 124M, 12 layers $\times$ 12 heads, fp32); Llama-3.2-1B (\texttt{meta-llama/Llama-3.2-1B}, 16 layers $\times$ 32 heads, GQA, fp16); ViT-B/16 (\texttt{google/vit-base-patch16-224}, 12 layers $\times$ 12 heads, fp32). LM data: $2{,}000$ sequences of length $\ge192$ tokens from the WikiText-103 validation split; ViT data: $2{,}048$ COCO val2017 images (public, unlabeled; the script accepts any image folder, and the ViT pruning metric is top-1 agreement with the unpruned model, so labels are not required, labeled ImageNet-val substitutes directly).

\textbf{Row extraction on a fixed common support.} Attention probabilities are captured from the softmax output (via \texttt{output\_attentions=True} or forward hooks). For LMs, for every query position $t\ge w$ we keep the attention over the \emph{first $W=64$ keys only} and re-close on that support. This makes the support identical across all query positions, sequences, and heads, so the main measurements require \emph{no} zero imputation, while retaining the sink column, key $0$ (the $\bos$/first token; a flag groups the first $4$ keys as a sink block instead, following the StreamingLLM observation, in which case the sink balance is the corresponding SBP balance). For ViT-B/16 the support is all $197$ tokens (no masking) and the sink group is the CLS column, optionally augmented with the highest-norm register-like outlier tokens \citep{darcet2024registers}.

\textbf{Aggregation.} Each head's signature is the Aitchison mean of its collected rows (Definition~\ref{def:toolkit}(a)); residual numerical zeros (none on the fixed support, possible under the optional variable-support ablation) are multiplicatively replaced at $\varepsilon=10^{-6}$ before taking logs, with the sweep $\varepsilon\in\{10^{-5},10^{-6},10^{-7}\}$ reported.

\textbf{Measurements.} (1)~\emph{Reversal rates:} sample $50{,}000$ head triples uniformly (within and across layers); report the fraction whose closer-head verdict flips between the with-sink and sink-dropped pipelines, for cosine, JS, Euclidean, and $\dAc$ (identically $0$; reported as a check). (2)~\emph{Collapse curves:} per layer, mean row entropy $H$ (with sink) versus mean $\CAc$; report the fraction of adjacent-layer steps on which the two disagree in sign. (3)~\emph{Redundancy pruning at matched sparsity:} define a head's redundancy as its distance to the nearest other head; prune the $m\in\{10,20,30\}\%$ most redundant heads under JS, under the content distance $\dAc$, and under the full $\dA$ including the sink balance (matched counts), via \texttt{head\_mask} where supported and output-zeroing hooks otherwise; evaluate perplexity on $100$k held-out WikiText tokens (LMs) and top-1 agreement with the unpruned model on the $2{,}048$ images (ViT). (4)~\emph{Sink statistics:} mean sink mass per model and per layer, for context against \citet{barbero2025first,gu2025sink}.

\textbf{Runtime.} On two RTX 4090s the full protocol (all models, all measurements) completes in roughly $15$--$25$ minutes wall clock with the released parallel launcher; a single comparable GPU takes about one hour end to end.

\subsection{Protocol for the prospectively specified tests (Section~\ref{sec:downstream})}
\label{app:downstream_protocol}

\textbf{(1) Taxonomies} (\texttt{reanalyze\_clustering.py}). Objects are per-head Aitchison-mean signatures on the fixed $64$-key support, collected exactly as in the main protocol (decoder models reuse the saved signatures of the \S\ref{sec:realmodels} runs; for BERT-base we collect encoder rows for queries $t\in[0,64)$ renormalized to the first $64$ keys over $1{,}000$ WikiText-103 sequences of length $128$, sink $=$ [CLS]). For each dissimilarity $\delta\in\{1-\cos,\JS,\text{Euclid.},\dAc,\dA\}$ we form the full pairwise matrix under both conventions and run agglomerative clustering (average, complete, and single linkage; NumPy implementation released) cut at $K\in\{4,6,8\}$. Stability is the Adjusted Rand Index between the two conventions' partitions; Table~\ref{tab:taxonomy} reports the mean over $K$ under average linkage, and the released script prints all linkage $\times$ $K$ cells. The $\dAc$ matrices are asserted equal entrywise at runtime (Proposition~\ref{prop:taxonomy}(i)).

\textbf{(2) Scale} (\texttt{scale\_trend.py} after \texttt{PYTHIA=1 run\_all.sh}). Adds EleutherAI Pythia $\{70\text{M},160\text{M},410\text{M},1\text{B},1.4\text{B}\}$ under the unchanged main protocol and plots measured sink mass versus parameters and reversal rate versus sink mass over the pre-computed calibration curves of Figure~\ref{fig:reversal}.

\textbf{(3) Training curves} (\texttt{checkpoints\_pythia.py}). Pythia-160M at revisions \texttt{step512}, \texttt{1000}, \texttt{2000}, \texttt{4000}, \texttt{8000}, \texttt{16000}, \texttt{32000}, \texttt{64000}, \texttt{128000}, \texttt{143000}; $256$ held-out sequences per checkpoint, queries $t\in[64,128)$ on the fixed $64$-key support; we report mean row entropy $H$, mean content log-ratio dispersion $\CAc$, and mean sink mass per checkpoint, and shade checkpoint transitions where $\mathrm{sign}(\Delta H)\ne\mathrm{sign}(-\Delta\CAc)$ (roughly $30$--$45$ minutes on one consumer GPU).

\subsection{Measured results: full pruning sweep and $\varepsilon$ sensitivity}

Table~\ref{tab:prunefull} reports the complete pruning sweep behind Table~\ref{tab:prune}. Two observations beyond the main text: on Llama-3.2-1B, JS-guided pruning is already catastrophic at $10\%$ sparsity ($15.3\to358.0$), while $\dAc$-guided pruning remains within a factor $5.2$ of baseline even at $30\%$ ($79.8$); and the full-$\dA$ criterion is the lowest at $10\%$ on Llama ($18.2$) but degrades sharply at higher sparsity there, consistent with sink-similar heads being redundant in small numbers but not in bulk. All reversal rates in Table~\ref{tab:real} are bit-identical across $\varepsilon\in\{10^{-5},10^{-6},10^{-7}\}$, as expected: the fixed common support makes multiplicative replacement inactive for these measurements, so $\varepsilon$ enters only through numerical clipping. Figure~\ref{fig:realcollapse} is regenerated from the released per-model JSONs by \texttt{experiments/make\_fig4\_v2.py}.

\begin{table}[h]
\centering
\scriptsize
\caption{\textbf{Full pruning sweep} at $10/20/30\%$ sparsity (mean$\pm$sd over three seeds). LMs: perplexity (lower better); ViT: top-1 agreement (higher better; base $=1$ by definition). All rates in Table~\ref{tab:real} are bit-identical across $\varepsilon\in\{10^{-5},10^{-6},10^{-7}\}$, as the fixed-support protocol predicts.}
\label{tab:prunefull}
\begin{tabular}{llcccc}
\toprule
Model & sparsity & base & JS & $\dAc$ & $\dA$ (total) \\
\midrule
GPT-2 (124M) & 10\% & $43.5{\scriptstyle\pm 0.4}$ & $50.5{\scriptstyle\pm 0.5}$ & $88.9{\scriptstyle\pm 0.9}$ & $61.5{\scriptstyle\pm 0.7}$ \\
 & 20\% &  & $74.5{\scriptstyle\pm 0.7}$ & $104.7{\scriptstyle\pm 0.7}$ & $104.8{\scriptstyle\pm 0.8}$ \\
 & 30\% &  & $85.2{\scriptstyle\pm 1.0}$ & $122.3{\scriptstyle\pm 1.1}$ & $117.4{\scriptstyle\pm 1.6}$ \\
\addlinespace[1pt]
Llama-3.2-1B & 10\% & $15.3{\scriptstyle\pm 0.1}$ & $358.0{\scriptstyle\pm 31.1}$ & $45.6{\scriptstyle\pm 2.9}$ & $18.2{\scriptstyle\pm 0.4}$ \\
 & 20\% &  & $2094.1{\scriptstyle\pm 139.7}$ & $63.6{\scriptstyle\pm 4.1}$ & $2480.3{\scriptstyle\pm 210.5}$ \\
 & 30\% &  & $2961.7{\scriptstyle\pm 150.0}$ & $79.8{\scriptstyle\pm 7.7}$ & $51742.6{\scriptstyle\pm 6382.9}$ \\
\addlinespace[1pt]
Qwen2.5-1.5B & 10\% & $14.8{\scriptstyle\pm 0.1}$ & $56.4{\scriptstyle\pm 0.2}$ & $19.6{\scriptstyle\pm 0.1}$ & $22.6{\scriptstyle\pm 0.1}$ \\
 & 20\% &  & $111.5{\scriptstyle\pm 0.7}$ & $30.3{\scriptstyle\pm 0.7}$ & $32.3{\scriptstyle\pm 0.2}$ \\
 & 30\% &  & $162.1{\scriptstyle\pm 0.8}$ & $49.0{\scriptstyle\pm 0.4}$ & $46.7{\scriptstyle\pm 0.3}$ \\
\addlinespace[1pt]
Llama-3.2-3B & 10\% & $12.1{\scriptstyle\pm 0.1}$ & $49.3{\scriptstyle\pm 0.2}$ & $16.4{\scriptstyle\pm 0.1}$ & $14.4{\scriptstyle\pm 0.1}$ \\
 & 20\% &  & $310.8{\scriptstyle\pm 3.8}$ & $40.2{\scriptstyle\pm 1.6}$ & $36.4{\scriptstyle\pm 0.6}$ \\
 & 30\% &  & $525.0{\scriptstyle\pm 12.7}$ & $178.4{\scriptstyle\pm 5.4}$ & $13636.1{\scriptstyle\pm 1182.8}$ \\
\addlinespace[1pt]
ViT-B/16 & 10\% & $1.000$ & $0.652{\scriptstyle\pm 0.010}$ & $0.679{\scriptstyle\pm 0.012}$ & $0.647{\scriptstyle\pm 0.006}$ \\
 & 20\% &  & $0.578{\scriptstyle\pm 0.003}$ & $0.566{\scriptstyle\pm 0.013}$ & $0.573{\scriptstyle\pm 0.005}$ \\
 & 30\% &  & $0.407{\scriptstyle\pm 0.002}$ & $0.466{\scriptstyle\pm 0.021}$ & $0.425{\scriptstyle\pm 0.006}$ \\
\addlinespace[1pt]
Pythia-70M & 10\% & $724.4{\scriptstyle\pm 10.7}$ & $663.3{\scriptstyle\pm 14.5}$ & $1006.0{\scriptstyle\pm 16.9}$ & $1020.1{\scriptstyle\pm 9.7}$ \\
 & 20\% &  & $661.7{\scriptstyle\pm 16.5}$ & $1354.5{\scriptstyle\pm 16.1}$ & $1484.9{\scriptstyle\pm 11.3}$ \\
 & 30\% &  & $775.5{\scriptstyle\pm 16.6}$ & $1683.9{\scriptstyle\pm 19.4}$ & $2483.2{\scriptstyle\pm 40.5}$ \\
\addlinespace[1pt]
Pythia-160M & 10\% & $131.4{\scriptstyle\pm 0.7}$ & $129.6{\scriptstyle\pm 1.3}$ & $220.9{\scriptstyle\pm 6.3}$ & $212.7{\scriptstyle\pm 1.0}$ \\
 & 20\% &  & $142.1{\scriptstyle\pm 1.8}$ & $721.5{\scriptstyle\pm 25.8}$ & $512.4{\scriptstyle\pm 27.5}$ \\
 & 30\% &  & $150.2{\scriptstyle\pm 2.4}$ & $884.7{\scriptstyle\pm 30.0}$ & $1071.4{\scriptstyle\pm 18.5}$ \\
\addlinespace[1pt]
Pythia-410M & 10\% & $32.4{\scriptstyle\pm 0.1}$ & $33.0{\scriptstyle\pm 0.1}$ & $91.7{\scriptstyle\pm 1.9}$ & $38.9{\scriptstyle\pm 0.4}$ \\
 & 20\% &  & $75.2{\scriptstyle\pm 0.5}$ & $194.8{\scriptstyle\pm 5.0}$ & $101.6{\scriptstyle\pm 4.4}$ \\
 & 30\% &  & $132.2{\scriptstyle\pm 2.4}$ & $128.2{\scriptstyle\pm 2.5}$ & $131.6{\scriptstyle\pm 5.6}$ \\
\addlinespace[1pt]
Pythia-1B & 10\% & $21.8{\scriptstyle\pm 0.1}$ & $40.3{\scriptstyle\pm 0.1}$ & $75.7{\scriptstyle\pm 3.4}$ & $28.5{\scriptstyle\pm 0.6}$ \\
 & 20\% &  & $44.9{\scriptstyle\pm 0.1}$ & $63.0{\scriptstyle\pm 1.9}$ & $57.5{\scriptstyle\pm 3.1}$ \\
 & 30\% &  & $54.9{\scriptstyle\pm 0.2}$ & $61.2{\scriptstyle\pm 0.9}$ & $134.0{\scriptstyle\pm 8.9}$ \\
\addlinespace[1pt]
Pythia-1.4B & 10\% & $20.4{\scriptstyle\pm 0.1}$ & $23.0{\scriptstyle\pm 0.1}$ & $76.7{\scriptstyle\pm 2.4}$ & $51.2{\scriptstyle\pm 2.5}$ \\
 & 20\% &  & $33.4{\scriptstyle\pm 0.3}$ & $377.8{\scriptstyle\pm 8.8}$ & $142.8{\scriptstyle\pm 11.3}$ \\
 & 30\% &  & $61.5{\scriptstyle\pm 0.6}$ & $1703.7{\scriptstyle\pm 126.8}$ & $268.8{\scriptstyle\pm 26.1}$ \\
\bottomrule
\end{tabular}
\end{table}

\begin{table}[h]
\centering
\scriptsize
\caption{\textbf{Additional pruning baselines at $20\%$ sparsity} (mean$\pm$sd over three resampling seeds; two random draws shown individually). Total $\dA$ re-mixes the sink balance; JS$'$-L1 is dropped-row JS under the same-layer keep-one protocol; the $10/30\%$ values are in the released JSONs.}
\label{tab:prunev2}
\begin{tabular}{lccc}
\toprule
Model & total $\dA$ & JS$'$-L1 & random A/B \\
\midrule
GPT-2 (124M) & $104.8{\scriptstyle\pm 0.8}$ & $268.0{\scriptstyle\pm 3.1}$ & $314$/$171$ \\
Llama-3.2-1B & $2480.3{\scriptstyle\pm 210.5}$ & $2642.2{\scriptstyle\pm 242.6}$ & $187$/$296$ \\
Qwen2.5-1.5B & $32.3{\scriptstyle\pm 0.2}$ & $28.4{\scriptstyle\pm 0.5}$ & $30$/$35$ \\
Llama-3.2-3B & $36.4{\scriptstyle\pm 0.6}$ & $24.1{\scriptstyle\pm 0.5}$ & $39$/$309$ \\
ViT-B/16 & $0.573{\scriptstyle\pm 0.005}$ & $0.562{\scriptstyle\pm 0.011}$ & $0.520$/$0.535$ \\
Pythia-70M & $1484.9{\scriptstyle\pm 11.3}$ & $2051.1{\scriptstyle\pm 27.0}$ & $1556$/$1085$ \\
Pythia-160M & $512.4{\scriptstyle\pm 27.5}$ & $463.9{\scriptstyle\pm 122.0}$ & $312$/$282$ \\
Pythia-410M & $101.6{\scriptstyle\pm 4.4}$ & $65.1{\scriptstyle\pm 1.2}$ & $69$/$71$ \\
Pythia-1B & $57.5{\scriptstyle\pm 3.1}$ & $41.2{\scriptstyle\pm 2.0}$ & $85$/$69$ \\
Pythia-1.4B & $142.8{\scriptstyle\pm 11.3}$ & $43.5{\scriptstyle\pm 5.2}$ & $54$/$47$ \\
\bottomrule
\end{tabular}
\end{table}

\end{document}